\begin{document}

\title{PAC-Bayes Analysis of Multi-view Learning}

\author{\name Shiliang Sun \email shiliangsun@gmail.com \\
       \addr Shanghai Key Laboratory of Multidimensional Information
       Processing\\
       Department of Computer Science and Technology\\East China
 Normal University\\ 500 Dongchuan Road, Shanghai 200241, China
       \AND
       \name John Shawe-Taylor \email j.shawe-taylor@ucl.ac.uk \\
       \addr Department of Computer Science\\
University College London\\ Gower Street, London WC1E 6BT, United
Kingdom
       \AND
       \name Liang Mao \email lmao14@outlook.com \\
              \addr Shanghai Key Laboratory of Multidimensional Information
              Processing\\
              Department of Computer Science and Technology\\East China
        Normal University\\ 500 Dongchuan Road, Shanghai 200241, China
}

\editor{}

\maketitle

\begin{abstract}
This paper presents eight PAC-Bayes bounds to analyze the
generalization performance of multi-view classifiers. These bounds
adopt data dependent Gaussian priors which emphasize classifiers
with high view agreements. The center of the prior for the first two
bounds is the origin, while the center of the prior for the third
and fourth bounds is given by a data dependent vector. An
important technique to obtain these bounds is two derived
logarithmic determinant inequalities whose difference lies in
whether the dimensionality of data is involved. The centers of the
fifth and sixth bounds are calculated on a separate subset of the training
set. The last two bounds
use unlabeled data to represent view agreements and are thus
applicable to semi-supervised multi-view learning. We evaluate all
the presented multi-view PAC-Bayes bounds on benchmark data and
compare them with previous single-view PAC-Bayes bounds. The
usefulness and performance of the multi-view bounds are discussed.
\end{abstract}

\begin{keywords}
 PAC-Bayes bound, statistical learning theory, support vector
machine, multi-view learning
\end{keywords}

\setlength{\arraycolsep}{0.1em}
\section{Introduction}

Multi-view learning is a promising research direction with prevalent
applicability~\citep{Sun13survey}. For instance, in multimedia
content understanding, multimedia segments can be described by both
their video and audio signals, and the video and audio signals are
regarded as the two views. Learning from data relies on collecting data that contain a
sufficient signal and encoding our prior knowledge in increasingly
sophisticated regularization schemes that enable the signal to be
extracted. With certain co-regularization schemes, multi-view learning
performs well on various learning tasks.

Statistical learning theory (SLT) provides a general framework to
analyze the generalization performance of machine learning
algorithms. The theoretical outcomes can be used to motivate
algorithm design, select models or give insights on the effects and
behaviors of some interesting quantities. For example, the
well-known large margin principle in support vector machines (SVMs)
is well supported by various SLT
bounds~\citep{Vapnik98SLT,Bartlett02,Sun10JMLR}. Different from
early bounds that often rely on the complexity measures of the
considered function classes, the recent PAC-Bayes
bounds~\citep{McAllester99,Seeger02,Langford05} give the tightest
predictions of the generalization performance, for which the prior
and posterior distributions of learners are involved on top of the
PAC (Probably Approximately Correct) learning
setting~\citep{Catoni07,Germain09}. Beyond the common supervised
learning, PAC-Bayes analysis has also been applied to other tasks,
e.g., density estimation~\citep{Seldin10,Higgs10} and reinforcement
learning~\citep{Seldin12}.

Although the field of
multi-view learning has enjoyed a great success with algorithms and
applications and is provided with some theoretical results,
PAC-Bayes analysis of multi-view learning is still absent. In this paper,
we attempt to fill the gap between the developments in theory and
practice by proposing new PAC-Bayes bounds for multi-view learning.

An earlier attempt to analyze the generalization of two-view
learning was made using Rademacher
complexity~\citep{Farquhar06,Rosenberg07}. The bound relied on
estimating the empirical Rademacher complexity of the class of pairs
of functions from the two views that are matched in expectation
under the data generating distribution. Hence, this approach also
implicitly relied on the data generating distribution to define the
function class (and hence prior). The current paper makes the
definition of the prior in terms of the data generating distribution
explicit through the PAC-Bayes framework and provides several bounds. However, the
main advantage is that it defines a framework that makes explicit
the definition of the prior in terms of the data generating
distribution, setting a template for other related approaches to
encoding complex prior knowledge that relies on the data generating
distribution.

\citet{Kakade07} characterized the expected regret of a
semi-supervised multi-view regression algorithm. The
results given by~\citet{Sridharan08} take an information theoretic
approach that involves a number of assumptions that may be difficult
to check in practice. With these assumptions theoretical results
including PAC-style analysis to bound expected losses were given,
which involve some Bayes optimal predictor and but cannot provide
computable classification error bounds since the data generating
distribution is usually unknown. These results therefore represent a
related but distinct approach.

We adopt a PAC-Bayes analysis where we encode our assumptions
through priors defined in terms of the data generating distribution.
Such priors have been studied by~\citet{Catoni07} under the name of
localized priors and more recently by~\citet{Lever13TPB} as data
distribution dependent priors. Both papers considered schemes for
placing a prior over classifiers defined through their true
generalization errors. In contrast, the prior that we consider is
mainly used to encode the assumption about the relationship between
the two views in the data generating distribution. Such data
distribution dependent priors cannot be subjected to traditional
Bayesian analysis since we do not have an explicit form for the
prior, making inference impossible. Hence, this paper illustrates
one of the advantages that arise from the PAC-Bayes framework.

The PAC-Bayes theorem bounds the true error of the distribution of
classifiers in terms of a term from the sample complexity and the
KL divergence between the posterior and the prior distributions of classifiers.
The key technical innovations of the paper enable the bounding of
the KL divergence term in terms of empirical quantities despite
involving priors that cannot be computed. This approach was adopted
in \citet{EmilioJMLR12} for some simple priors such as the Gaussian
centered at $\mathbb{E}[y\phi(\mathbf{x})]$. The current paper
treats a significantly more sophisticated case where the priors
encode our expectation that good weight vectors can be found that
give similar outputs from both views.

Specifically, we first provide four PAC-Bayes bounds using priors
that reflect how well the two views agree on average over all
examples. The first two bounds use a Gaussian prior centered at the
origin, while the third and fourth ones adopt a different prior
whose center is not the origin. However, the formulations of the
priors involve mathematical expectations with respect to the unknown
data distributions. We manage to bound the expectation related terms
with their empirical estimations on a finite sample of data. Then,
we further provide two PAC-Bayes bounds using a part of the training data
to determine priors, and two PAC-Bayes bounds for semi-supervised
multi-view learning where unlabeled data are involved in the
definition of the priors.

When a natural feature split does not exist, multi-view learning
could still obtain performance improvements with manufactured splits,
provided that each of the views contains not only enough information
for the learning task itself, but some knowledge that other views do not have.
It is therefore important that people should split features into views satisfying
the assumptions. However, data split is still an open question and beyond the
scope of this paper.

The rest of this paper is organized as follows. After briefly
reviewing the PAC-Bayes bound for SVMs in Section~\ref{prePACB}, we
give and derive four multi-view PAC-Bayes bounds involving only
empirical quantities in Section~\ref{First2bounds} and
Section~\ref{Last2bounds}. Then we give two bounds whose centers are
calculated on a separate subset of the training data in Section~\ref{stdbounds}. After that, we present two semi-supervised
multi-view PAC-Bayes bounds in Section~\ref{secSSLBs}. The
optimization formulations of the related single-view and multi-view
SVMs as well as semi-supervised multi-view SVMs are given in
Section~\ref{Algorithms}. After evaluating the usefulness and
performance of the bounds in Section~\ref{expriments}, we give
concluding remarks in Section~\ref{conclusion}.

\section{PAC-Bayes Bound and Specialization to SVMs}
\label{prePACB}

Consider a binary classification problem. Let $\mathcal{D}$ be the
distribution of feature $\mathbf{x}$ lying in an input space
$\mathcal{X}$ and the corresponding output label $y$ where $y \in
\{-1,1\}$. Suppose $Q$ is a posterior distribution over the parameters of the
classifier $c$. Define the true error and empirical error of a
classifier as
\begin{eqnarray}
e_{\mathcal{D}} &=&Pr_{(\mathbf{x},y)\sim
\mathcal{D}}(c(\mathbf{x})\neq y),\nonumber\\
\hat{e}_S&=&Pr_{(\mathbf{x},y)\sim S} (c(\mathbf{x})\neq
y)=\frac{1}{m} \sum_{i=1}^m I(c(\mathbf{x}_i)\neq y_i),\nonumber
\end{eqnarray}
where $S$ is a sample  including $m$ examples, and $I(\cdot)$ is the
indicator function. With the distribution $Q$, we can then define
the average true error $E_{Q,\mathcal{D}}=\mathbb{E}_{c\sim Q}
e_{\mathcal{D}}$, and the average empirical error
$\hat{E}_{Q,S}=\mathbb{E}_{c\sim Q} \hat{e}_S$. The following lemma
provides the PAC-Bayes bound on $E_{Q,\mathcal{D}}$ in the current context
of binary classification.

\begin{theorem}[PAC-Bayes Bound~\citep{Langford05}]
For any data distribution $\mathcal {D}$, for any prior P(c) over
the classifier $c$, for any $\delta\in(0,1]$:
\begin{equation}
Pr_{S\sim \mathcal{D}^m}\left(\forall
Q(c):KL_+(\hat{E}_{Q,S}||E_{Q,\mathcal{D}})\leq
\frac{KL(Q||P)+\ln(\frac{m+1}{\delta})}{m}\right)\geq 1-\delta,
\nonumber
\end{equation}
where $KL(Q||P)=\mathbb{E}_{c\sim Q}\ln\frac{Q(c)}{P(c)}$ is the KL
divergence between $Q$ and $P$, and
$KL_+(q||p)=q\ln\frac{q}{p}+(1-q)\ln\frac{1-q}{1-p}$ for $p>q$ and
$0$ otherwise. \label{LemLangford}
\end{theorem}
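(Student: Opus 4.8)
The plan is to reduce the uniform-over-$Q$ statement to a single exponential-moment estimate for the fixed prior $P$, via the change-of-measure (Donsker--Varadhan) inequality: for any measurable $\phi$ and any $Q$ absolutely continuous with respect to $P$,
$$\mathbb{E}_{c\sim Q}[\phi(c)] \leq KL(Q||P) + \ln \mathbb{E}_{c\sim P}\big[e^{\phi(c)}\big].$$
Its right-hand side splits cleanly into a term depending only on $Q$ and one depending only on $P$ and $S$. Writing $\mathrm{kl}(q||p)=q\ln\frac{q}{p}+(1-q)\ln\frac{1-q}{1-p}$ for the untruncated Bernoulli divergence (so that $KL_+\leq\mathrm{kl}$ pointwise), I would apply the inequality with $\phi(c)=m\,\mathrm{kl}(\hat{e}_S||e_{\mathcal{D}})$. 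The left-hand side then becomes $m\,\mathbb{E}_{c\sim Q}[\mathrm{kl}(\hat{e}_S||e_{\mathcal{D}})]$ and the $P$-dependent term is $\ln X_S$ with $X_S=\mathbb{E}_{c\sim P}[\exp(m\,\mathrm{kl}(\hat{e}_S||e_{\mathcal{D}}))]$. Because the inequality holds for every $Q$ at once, the required uniformity over $Q$ is automatic, provided $X_S$ is controlled on one high-probability event over $S$.

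The heart of the argument is the bound on $X_S$ for the fixed prior. By Fubini's theorem, $\mathbb{E}_{S\sim\mathcal{D}^m}[X_S]=\mathbb{E}_{c\sim P}\,\mathbb{E}_{S\sim\mathcal{D}^m}[\exp(m\,\mathrm{kl}(\hat{e}_S||e_{\mathcal{D}}))]$, so it suffices to bound the inner expectation for each fixed classifier $c$. For such a $c$ with true error $p=e_{\mathcal{D}}$, the error count $m\hat{e}_S$ is Binomial$(m,p)$, and a direct computation shows that $\exp(m\,\mathrm{kl}(k/m||p))$ times the binomial weight $\binom{m}{k}p^k(1-p)^{m-k}$ collapses to $\binom{m}{k}(k/m)^k(1-k/m)^{m-k}$, which is itself a binomial probability mass and hence at most $1$. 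Summing over the $m+1$ values $k=0,\dots,m$ gives $\mathbb{E}_{S}[\exp(m\,\mathrm{kl}(\hat{e}_S||e_{\mathcal{D}}))]\leq m+1$, whence $\mathbb{E}_{S}[X_S]\leq m+1$. Markov's inequality then yields $Pr_{S}(X_S\geq(m+1)/\delta)\leq\delta$, so with probability at least $1-\delta$ over $S$ we have $\ln X_S\leq\ln\frac{m+1}{\delta}$.

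It remains to assemble the pieces on this good event. Joint convexity of $\mathrm{kl}(\cdot||\cdot)$ together with Jensen's inequality gives $\mathrm{kl}(\hat{E}_{Q,S}||E_{Q,\mathcal{D}})\leq\mathbb{E}_{c\sim Q}[\mathrm{kl}(\hat{e}_S||e_{\mathcal{D}})]$, and since $KL_+\leq\mathrm{kl}$ we also have $KL_+(\hat{E}_{Q,S}||E_{Q,\mathcal{D}})\leq\mathrm{kl}(\hat{E}_{Q,S}||E_{Q,\mathcal{D}})$. Chaining these with the change-of-measure bound $m\,\mathbb{E}_{c\sim Q}[\mathrm{kl}(\hat{e}_S||e_{\mathcal{D}})]\leq KL(Q||P)+\ln X_S$ and the estimate $\ln X_S\leq\ln\frac{m+1}{\delta}$, then dividing by $m$, yields exactly the claimed inequality, simultaneously for all $Q$ (the good event constrains only $P$ and $S$). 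The step I expect to be the main obstacle --- and the only genuinely nontrivial calculation --- is the single-classifier moment bound $\mathbb{E}_{S}[\exp(m\,\mathrm{kl})]\leq m+1$: the exponentiated divergence must be cancelled against the binomial weights exactly, and recognizing the resulting sum as $m+1$ probability masses each bounded by $1$ is what produces the otherwise mysterious factor $m+1$ in the bound.
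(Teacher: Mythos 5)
Your proposal is correct, and it is essentially the proof of this theorem as it exists in the literature: the paper itself gives no proof of this statement (it is imported verbatim from the cited reference, Langford 2005), so the only meaningful comparison is with that standard argument, which is exactly what you have reconstructed. Your chain --- change of measure with $\phi(c)=m\,\mathrm{kl}(\hat{e}_S\|e_{\mathcal{D}})$, the Fubini/binomial computation giving $\mathbb{E}_{S}\big[\exp\big(m\,\mathrm{kl}(\hat{e}_S\|e_{\mathcal{D}})\big)\big]\leq m+1$ for each fixed classifier, Markov's inequality over $S$, and then Jensen's inequality via joint convexity of $\mathrm{kl}$ together with $KL_+\leq\mathrm{kl}$ --- is the standard Langford/Seeger proof, including the correct identification of the binomial-mass cancellation as the source of the $m+1$ factor, and the uniformity over $Q$ is indeed automatic because the high-probability event constrains only $P$ and $S$.
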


Suppose from the $m$ training examples we learn an SVM classifier
represented by $c_{\mathbf{u}}(\mathbf{x})=\mbox{sign}
(\mathbf{u}^\top\bm \phi(\mathbf{x}))$, where $\phi(\mathbf{x})$ is
a projection of the original feature to a certain feature space
induced by some kernel function. Define the prior and the posterior
of the classifier to be Gaussian with $\mathbf{u}\sim
\mathcal{N}(\mathbf{0},\mathbf{I})$ and  $\mathbf{u}\sim
\mathcal{N}(\mu\mathbf{w},\mathbf{I})$, respectively. Note that here
$\|\mathbf{w}\|=1$, and thus the distance between the center of the
posterior and the origin is $\mu$. With this specialization, we give
the PAC-Bayes bound for SVMs~\citep{Langford05,EmilioJMLR12} below.

\begin{theorem}
     For any data distribution $\mathcal{D}$, for
     any $\delta\in(0,1]$, we have
\begin{equation}
Pr_{S\sim \mathcal {D}^m}\left(\forall \mathbf{w}, \mu:
KL_+(\hat{E}_{Q,S}(\mathbf{w}, \mu)||E_{Q,\mathcal{D}}(\mathbf{w},
\mu))\leq
\frac{\frac{\mu^2}{2}+\ln(\frac{m+1}{\delta})}{m}\right)\geq
1-\delta, \nonumber
\end{equation}
where $\|\mathbf{w}\|=1$. \label{thmStaPB}
\end{theorem}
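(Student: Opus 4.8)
The plan is to specialize the general PAC-Bayes bound of Theorem~\ref{LemLangford} to the Gaussian prior and posterior described just above the statement. Since Theorem~\ref{LemLangford} already gives, with probability at least $1-\delta$ over $S\sim\mathcal{D}^m$, the bound $KL_+(\hat{E}_{Q,S}\|E_{Q,\mathcal{D}})\leq (KL(Q\|P)+\ln\frac{m+1}{\delta})/m$ uniformly over all posteriors $Q$, the entire task reduces to computing the KL divergence term $KL(Q\|P)$ for the specific choice $P=\mathcal{N}(\mathbf{0},\mathbf{I})$ and $Q=\mathcal{N}(\mu\mathbf{w},\mathbf{I})$ and showing it equals $\mu^2/2$.

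First I would recall the closed-form expression for the KL divergence between two multivariate Gaussians $\mathcal{N}(\bm\mu_1,\Sigma_1)$ and $\mathcal{N}(\bm\mu_0,\Sigma_0)$. In our case both covariances equal the identity $\mathbf{I}$, so the log-determinant term and the trace term both vanish (the $\ln\frac{\det\Sigma_0}{\det\Sigma_1}$ term is $\ln 1=0$, and $\mathrm{tr}(\Sigma_0^{-1}\Sigma_1)-d=d-d=0$), leaving only the quadratic mean-difference term $\frac{1}{2}(\bm\mu_0-\bm\mu_1)^\top\Sigma_0^{-1}(\bm\mu_0-\bm\mu_1)$. Substituting $\bm\mu_1=\mu\mathbf{w}$, $\bm\mu_0=\mathbf{0}$, and $\Sigma_0=\mathbf{I}$ gives $\frac{1}{2}\|\mu\mathbf{w}\|^2=\frac{\mu^2}{2}\|\mathbf{w}\|^2$. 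Invoking the normalization constraint $\|\mathbf{w}\|=1$ stated in the hypothesis, this collapses to exactly $\frac{\mu^2}{2}$.

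Finally I would substitute $KL(Q\|P)=\mu^2/2$ into the bound of Theorem~\ref{LemLangford}, yielding the claimed inequality. The quantification over all $Q$ in Theorem~\ref{LemLangford} translates directly into quantification over all pairs $(\mathbf{w},\mu)$ here, since each such pair indexes a posterior $Q(\mathbf{w},\mu)=\mathcal{N}(\mu\mathbf{w},\mathbf{I})$; this is what licenses writing $\hat{E}_{Q,S}(\mathbf{w},\mu)$ and $E_{Q,\mathcal{D}}(\mathbf{w},\mu)$ as functions of $(\mathbf{w},\mu)$ inside the probability statement while preserving the $1-\delta$ confidence.

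Honestly, there is no serious obstacle here: the result is a direct specialization, and the only point requiring any care is the bookkeeping of the Gaussian KL formula together with the unit-norm constraint. The conceptual content lies entirely in Theorem~\ref{LemLangford}; the present statement is the routine instantiation that sets up the SVM template which the subsequent multi-view bounds will elaborate by replacing the origin-centered prior with data-dependent priors that encode view agreement.
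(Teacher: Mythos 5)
Your proposal is correct and coincides with the derivation the paper intends (the paper states this result as a known specialization, citing Langford, and the same Gaussian KL computation appears explicitly in its proof of Theorem~\ref{thmKLpq}): one computes $KL(\mathcal{N}(\mu\mathbf{w},\mathbf{I})\,\|\,\mathcal{N}(\mathbf{0},\mathbf{I}))=\frac{\mu^2}{2}\|\mathbf{w}\|^2=\frac{\mu^2}{2}$ and substitutes into Theorem~\ref{LemLangford}, with the uniformity over $(\mathbf{w},\mu)$ inherited from the uniformity over $Q$. No gaps; the handling of the unit-norm constraint and the quantifier bookkeeping are exactly right.
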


All that remains is calculating the empirical stochastic error rate
$\hat{E}_{Q,S}$. It can be shown that for a posterior $Q=\mathcal{N}(\mu
\mathbf{w}, \mathbf{I})$ with $\|\mathbf{w}\|=1$, we have
\begin{equation}
\hat{E}_{Q,S}=\mathbb{E}_S \left[\tilde{F}(\mu \gamma
(\mathbf{x},y))\right] , \nonumber
\end{equation}
where $\mathbb{E}_S$ is the average over the $m$ training examples,
$\gamma (\mathbf{x},y)$ is the normalized margin of the example
\begin{equation}
\gamma (\mathbf{x},y)=y \mathbf{w}^\top
\phi(\mathbf{x})/\|\phi(\mathbf{x})\| , \nonumber
\end{equation}
and $\tilde{F}(x)$ is the Gaussian cumulative distribution
\begin{equation}
\tilde{F}(x)=\int_x^\infty \frac{1}{\sqrt{2\pi}}e^{-x^2/2} d x.
\nonumber
\end{equation}

The generalization error of the original SVM classifier
$c_{\mathbf{w}}(\mathbf{x})=\mbox{sign} (\mathbf{w}^\top\bm
\phi(\mathbf{x}))$ can be bounded by at most twice the
average true error $E_{Q,\mathcal{D}}(\mathbf{w}, \mu)$ of the
corresponding stochastic classifier~\citep{Langford02nips}. That is,
for any $\mu$ we have
\begin{equation}
Pr_{(\mathbf{x},y)\sim \mathcal{D}} \left(\mbox{sign}
(\mathbf{w}^\top\bm \phi(\mathbf{x}))\neq y\right) \leq 2
E_{Q,\mathcal{D}}(\mathbf{w}, \mu) . \nonumber
\end{equation}

\section{Multi-view PAC-Bayes Bounds}
\label{First2bounds}

We propose a new data dependent prior for PAC-Bayes analysis of
multi-view learning. In particular, we take the distribution on the
concatenation of the two weight vectors $\mathbf{u}_1$ and
$\mathbf{u}_2$ as their individual product:
$\tilde{P}([\mathbf{u}_1^\top, \mathbf{u}_2^\top]^\top) =
P_1(\mathbf{u}_1)P_2(\mathbf{u}_2)$ but then weight it in some
manner associated with how well the two weights agree averagely on
all examples. That is, the prior is
$$P([\mathbf{u}_1^\top,
\mathbf{u}_2^\top]^\top) \propto P_1(\mathbf{u}_1)P_2(\mathbf{u}_2)
V(\mathbf{u}_1,\mathbf{u}_2),$$ where $P_1(\mathbf{u}_1)$ and
$P_1(\mathbf{u}_2)$ are Gaussian with zero mean  and identity
covariance, and
$$V(\mathbf{u}_1,\mathbf{u}_2) =\exp\left\{-
\frac{1}{2\sigma^2}\mathbb{E}_{(\mathbf{x}_1,\mathbf{x}_2)}(\mathbf{x}_1^\top
\mathbf{u}_1 - \mathbf{x}_2^\top \mathbf{u}_2)^2\right\} .$$

To specialize the PAC-Bayes bound for multi-view learning, we
consider classifiers of the form
\begin{equation}
c(\mathbf{x}) = \mbox{sign} (\mathbf{u}^\top \phi(\mathbf{x})) ,
\nonumber
\end{equation}
where $\mathbf{u}=[\mathbf{u}_1^\top, \mathbf{u}_2^\top]^\top$ is
the concatenated weight vector from two views, and
$\phi(\mathbf{x})$ can be the concatenated
$\mathbf{x}=[\mathbf{x}_1^\top, \mathbf{x}_2^\top]^\top$ itself or a
concatenation of maps of $\mathbf{x}$ to kernel-induced feature
spaces. Note that $\mathbf{x}_1$ and $\mathbf{x}_2$ indicate
features of one example from the two views, respectively. For
simplicity, here we use the original features to derive our results,
though kernel maps can be implicitly employed as well. Our dimensionality
independent bounds work even when the dimension of the kernelized feature space
goes to infinity.

According to our setting, the classifier prior is fixed to be
\begin{equation}
P(\mathbf{u}) \propto \mathcal{N}(\mathbf{0}, \mathbf{I})\times
V(\mathbf{u}_1,\mathbf{u}_2), \label{prior6}
\end{equation}
Function $ V(\mathbf{u}_1,\mathbf{u}_2) $ makes the prior place large
probability mass on parameters with which the classifiers from two views
agree well on all examples averagely.
The posterior is chosen to be of the form
\begin{equation}
Q(\mathbf{u}) = \mathcal{N}(\mu \mathbf{w},
\mathbf{I}),\label{prior7}
\end{equation}
where $\|\mathbf{w}\|=1$.

Define $\mathbf{\tilde{x}}=[\mathbf{x}_1^\top,
-\mathbf{x}_2^\top]^\top$. We have
\begin{eqnarray}
P(\mathbf{u}) &\propto& \mathcal{N}(\mathbf{0}, \mathbf{I})\times
V(\mathbf{u}_1,\mathbf{u}_2) \nonumber\\
&\propto& \exp\left\{-\frac{1}{2} \mathbf{u}^\top \mathbf{u}\right\}
\times \exp\left\{-
\frac{1}{2\sigma^2}\mathbb{E}_{(\mathbf{x}_1,\mathbf{x}_2)}(\mathbf{x}_1^\top
\mathbf{u}_1 - \mathbf{x}_2^\top \mathbf{u}_2)^2\right\} \nonumber\\
&=& \exp\left\{-\frac{1}{2} \mathbf{u}^\top \mathbf{u}\right\}
\times \exp\left\{-
\frac{1}{2\sigma^2}\mathbb{E}_{\mathbf{\tilde{x}}}(\mathbf{u}^\top
\mathbf{\tilde{x}} \mathbf{\tilde{x}}^\top \mathbf{u}) \right\} \nonumber\\
&=& \exp\left\{-\frac{1}{2} \mathbf{u}^\top \mathbf{u}\right\}
\times \exp\left\{- \frac{1}{2\sigma^2}\mathbf{u}^\top
\mathbb{E}(\mathbf{\tilde{x}} \mathbf{\tilde{x}}^\top) \mathbf{u}
\right\}\nonumber\\
&=& \exp\left\{-\frac{1}{2} \mathbf{u}^\top \left(\mathbf{I} +
\frac{\mathbb{E}(\mathbf{\tilde{x}}
\mathbf{\tilde{x}}^\top)}{\sigma^2}\right) \mathbf{u}\right\}.
\nonumber
\end{eqnarray}
That is, $P(\mathbf{u})=\mathcal{N}(\mathbf{0}, \Sigma)$ with
$\Sigma=\left(\mathbf{I} + \frac{\mathbb{E}(\mathbf{\tilde{x}}
\mathbf{\tilde{x}}^\top)}{\sigma^2}\right)^{-1}$.

Suppose $\dim(\mathbf{u})=d$. Given the above prior and posterior,
we have the following theorem to characterize their divergence.
\begin{theorem}
\begin{equation}
KL(Q(\mathbf{u})\|P(\mathbf{u}))= \frac{1}{2}\left(-\ln
(\Big|\mathbf{I} + \frac{\mathbb{E}(\mathbf{\tilde{x}}
\mathbf{\tilde{x}}^\top)}{\sigma^2}\Big|) + \frac{1}{\sigma^2}
\mathbb{E} [\mathbf{\tilde{x}}^\top \mathbf{\tilde{x}} + \mu^2
(\mathbf{w}^\top\mathbf{\tilde{x}})^2]+ \mu^2\right)
.\label{eqnKLpq}
\end{equation}
\label{thmKLpq}
\end{theorem}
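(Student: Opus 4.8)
The plan is to invoke the standard closed-form expression for the Kullback--Leibler divergence between two multivariate Gaussians and then simplify. Writing $Q(\mathbf{u})=\mathcal{N}(\mu\mathbf{w},\mathbf{I})$ and $P(\mathbf{u})=\mathcal{N}(\mathbf{0},\Sigma)$ with $\Sigma=\left(\mathbf{I}+\frac{\mathbb{E}(\mathbf{\tilde{x}}\mathbf{\tilde{x}}^\top)}{\sigma^2}\right)^{-1}$ as derived just above the statement, the general formula for $d$-dimensional Gaussians gives
\begin{equation}
KL(Q\|P)=\frac{1}{2}\left[\ln\frac{|\Sigma|}{|\mathbf{I}|}-d+\mbox{tr}(\Sigma^{-1})+(\mu\mathbf{w})^\top\Sigma^{-1}(\mu\mathbf{w})\right],\nonumber
\end{equation}
since $Q$ has identity covariance and $P$ has zero mean. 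I would substitute $\Sigma^{-1}=\mathbf{I}+\frac{\mathbb{E}(\mathbf{\tilde{x}}\mathbf{\tilde{x}}^\top)}{\sigma^2}$ throughout, noting that $|\mathbf{I}|=1$ and $\ln|\Sigma|=-\ln|\Sigma^{-1}|=-\ln\big|\mathbf{I}+\frac{\mathbb{E}(\mathbf{\tilde{x}}\mathbf{\tilde{x}}^\top)}{\sigma^2}\big|$, which immediately produces the logarithmic determinant term appearing in the claim.

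The next step is to evaluate the two remaining scalar quantities. For the trace term, $\mbox{tr}(\Sigma^{-1})=\mbox{tr}(\mathbf{I})+\frac{1}{\sigma^2}\mbox{tr}\big(\mathbb{E}(\mathbf{\tilde{x}}\mathbf{\tilde{x}}^\top)\big)=d+\frac{1}{\sigma^2}\mathbb{E}(\mathbf{\tilde{x}}^\top\mathbf{\tilde{x}})$, where I use linearity of the trace and expectation together with the cyclic identity $\mbox{tr}(\mathbf{\tilde{x}}\mathbf{\tilde{x}}^\top)=\mathbf{\tilde{x}}^\top\mathbf{\tilde{x}}$. Crucially the $\mbox{tr}(\mathbf{I})=d$ contribution will cancel the $-d$ in the formula, which is what keeps the final expression from depending explicitly on the dimension. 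For the quadratic mean term, I expand
\begin{equation}
(\mu\mathbf{w})^\top\Sigma^{-1}(\mu\mathbf{w})=\mu^2\mathbf{w}^\top\mathbf{w}+\frac{\mu^2}{\sigma^2}\mathbf{w}^\top\mathbb{E}(\mathbf{\tilde{x}}\mathbf{\tilde{x}}^\top)\mathbf{w}=\mu^2+\frac{\mu^2}{\sigma^2}\mathbb{E}\big[(\mathbf{w}^\top\mathbf{\tilde{x}})^2\big],\nonumber
\end{equation}
using $\|\mathbf{w}\|=1$ to set $\mathbf{w}^\top\mathbf{w}=1$ and pushing the expectation through the quadratic form.

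Finally I would collect terms: the $-d$ and $+d$ cancel, and combining the two expectation contributions under a single $\mathbb{E}$ yields $\frac{1}{\sigma^2}\mathbb{E}[\mathbf{\tilde{x}}^\top\mathbf{\tilde{x}}+\mu^2(\mathbf{w}^\top\mathbf{\tilde{x}})^2]$, leaving the $+\mu^2$ and the log-determinant term exactly as in \eqref{eqnKLpq}. The computation is essentially routine once the correct closed form is selected; the only points demanding care are the orientation of the KL formula (which covariance plays the role of the prior), the trace-to-expectation swaps, and confirming that the $\pm d$ cancellation is exact so that no residual dimension dependence survives. I do not anticipate a genuine obstacle here, since every step is an algebraic identity; the real difficulty of turning this into a computable, dimension-independent bound is deferred to the later treatment of the log-determinant term via the determinant inequalities advertised in the abstract.
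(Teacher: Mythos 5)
Your proposal is correct and follows essentially the same route as the paper's proof: both invoke the closed-form KL divergence between two Gaussians with the prior $\mathcal{N}(\mathbf{0},\Sigma)$ in the reference position, substitute $\Sigma^{-1}=\mathbf{I}+\frac{\mathbb{E}(\mathbf{\tilde{x}}\mathbf{\tilde{x}}^\top)}{\sigma^2}$, use $\mbox{tr}(\mathbf{\tilde{x}}\mathbf{\tilde{x}}^\top)=\mathbf{\tilde{x}}^\top\mathbf{\tilde{x}}$ and $\|\mathbf{w}\|=1$, and cancel the $\pm d$ terms. The only cosmetic difference is that the paper carries the trace and quadratic-form terms through several displayed lines before pushing the expectation inside, whereas you collapse these steps; the content is identical.
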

\begin{proof}
It is easy to show that the KL divergence between two
Gaussians~\citep{Rasmussen06GP} in an $N$-dimensional space is
\begin{equation}
KL(\mathcal{N}(\bm\mu_0,\Sigma_0)\|\mathcal{N}(\bm\mu_1,\Sigma_1))
=\frac{1}{2}\left(\ln(\frac{\big|\Sigma_1\big|}{\big|\Sigma_0\big|})+\mbox{tr}(\Sigma_1^{-1}\Sigma_0)+
(\bm\mu_1-\bm\mu_0)^\top\Sigma_1^{-1}(\bm\mu_1-\bm\mu_0) -d \right).
\nonumber
\end{equation}

The KL divergence between the
posterior and prior is thus
\begin{eqnarray}
KL(Q(\mathbf{u})\|P(\mathbf{u}))&=&\frac{1}{2}\left(-\ln
(\Big|\mathbf{I} + \frac{\mathbb{E}(\mathbf{\tilde{x}}
\mathbf{\tilde{x}}^\top)}{\sigma^2}\Big|) + \mbox{tr}(\mathbf{I} +
\frac{\mathbb{E}(\mathbf{\tilde{x}}
\mathbf{\tilde{x}}^\top)}{\sigma^2}) + \mu^2 \mathbf{w}^\top
(\mathbf{I} + \frac{\mathbb{E}(\mathbf{\tilde{x}}
\mathbf{\tilde{x}}^\top)}{\sigma^2}) \mathbf{w} - d \right) \nonumber\\
&=& \frac{1}{2}\left(-\ln (\Big|\mathbf{I} +
\frac{\mathbb{E}(\mathbf{\tilde{x}}
\mathbf{\tilde{x}}^\top)}{\sigma^2}\Big|) + \mbox{tr}(
\frac{\mathbb{E}(\mathbf{\tilde{x}}
\mathbf{\tilde{x}}^\top)}{\sigma^2}) + \mu^2 \mathbf{w}^\top
(\frac{\mathbb{E}(\mathbf{\tilde{x}}
\mathbf{\tilde{x}}^\top)}{\sigma^2}) \mathbf{w} + \mu^2\right) \nonumber\\
&=& \frac{1}{2}\left(-\ln (\Big|\mathbf{I} +
\frac{\mathbb{E}(\mathbf{\tilde{x}}
\mathbf{\tilde{x}}^\top)}{\sigma^2}\Big|) + \frac{1}{\sigma^2}
\mathbb{E}[\mbox{tr}(\mathbf{\tilde{x}} \mathbf{\tilde{x}}^\top)] +
\frac{\mu^2}{\sigma^2}
\mathbb{E}[(\mathbf{w}^\top\mathbf{\tilde{x}})^2]
+ \mu^2\right) \nonumber\\
&=& \frac{1}{2}\left(-\ln (\Big|\mathbf{I} +
\frac{\mathbb{E}(\mathbf{\tilde{x}}
\mathbf{\tilde{x}}^\top)}{\sigma^2}\Big|) + \frac{1}{\sigma^2}
\mathbb{E}[\mathbf{\tilde{x}}^\top \mathbf{\tilde{x}}] +
\frac{\mu^2}{\sigma^2}
\mathbb{E}[(\mathbf{w}^\top\mathbf{\tilde{x}})^2] +
\mu^2\right)\nonumber\\
&=& \frac{1}{2}\left(-\ln (\Big|\mathbf{I} +
\frac{\mathbb{E}(\mathbf{\tilde{x}}
\mathbf{\tilde{x}}^\top)}{\sigma^2}\Big|) + \frac{1}{\sigma^2}
\mathbb{E} [\mathbf{\tilde{x}}^\top \mathbf{\tilde{x}} + \mu^2
(\mathbf{w}^\top\mathbf{\tilde{x}})^2]+ \mu^2\right),\nonumber
\end{eqnarray}
which completes the proof.
\end{proof}

The problem with this expression is that it contains expectations
over the input distribution that we are unable to compute. This is
because we have defined the prior distribution in terms of the input
distribution via the $V$ function. Such priors are referred to as
localized by~\citet{Catoni07}. While his work considered specific
examples of such priors that satisfy certain optimality conditions,
the definition we consider here is encoding natural prior
assumptions about the link between the input distribution and the
classification function, namely that it will have a simple
representation in both views. This is an example of
luckiness~\citep{JST98}, where generalization is estimated making
assumptions that if proven true lead to tighter bounds, as for
example in the case of a large margin classifier.

We now develop methods that estimate the relevant quantities
in~(\ref{eqnKLpq}) from empirical data, so that there will be
additional empirical estimations involved in the final bounds besides
the usual empirical error.

We proceed to provide and prove two inequalities on the involved
logarithmic determinant function, which are very important for the
subsequent multi-view PAC-Bayes bounds.

\begin{theorem}
\begin{eqnarray}
-\ln \Big|\mathbf{I} + \frac{\mathbb{E}(\mathbf{\tilde{x}}
\mathbf{\tilde{x}}^\top)}{\sigma^2}\Big| &\leq& -d \ln
\mathbb{E}\Big[\Big|\mathbf{I} + \frac{\mathbf{\tilde{x}}
\mathbf{\tilde{x}}^\top}{\sigma^2}\Big|^{1/d}\Big], \label{eqn11022}\\
-\ln \Big|\mathbf{I} + \frac{\mathbb{E}(\mathbf{\tilde{x}}
\mathbf{\tilde{x}}^\top)}{\sigma^2}\Big| &\leq&
-\mathbb{E}\ln\Big|\mathbf{I} + \frac{\mathbf{\tilde{x}}
\mathbf{\tilde{x}}^\top}{\sigma^2}\Big|. \label{eqn11023}
\end{eqnarray}
\end{theorem}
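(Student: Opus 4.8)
The plan is to recognise that both inequalities are instances of Jensen's inequality applied to concave functions of a random positive definite matrix. The key observation is that, by linearity of expectation, $\mathbf{I} + \frac{\mathbb{E}(\mathbf{\tilde{x}}\mathbf{\tilde{x}}^\top)}{\sigma^2} = \mathbb{E}\big[\mathbf{I} + \frac{\mathbf{\tilde{x}}\mathbf{\tilde{x}}^\top}{\sigma^2}\big]$. Writing $B = \mathbf{I} + \frac{\mathbf{\tilde{x}}\mathbf{\tilde{x}}^\top}{\sigma^2}$, which is positive definite because $\mathbf{\tilde{x}}\mathbf{\tilde{x}}^\top$ is positive semidefinite and $\sigma^2>0$ (indeed all eigenvalues of $B$ are at least $1$), the left-hand side of each inequality is a fixed function evaluated at $\mathbb{E}[B]$, while the right-hand side is the expectation of the same function of $B$. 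If that function is concave on the cone of positive definite matrices, Jensen's inequality immediately yields the stated bound.

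For (\ref{eqn11023}) the relevant function is $X \mapsto \ln|X|$, and I would establish its concavity by restricting to a line. Setting $g(t) = \ln|Y + tH|$ for positive definite $Y$ and symmetric $H$, direct differentiation gives $g'(t) = \mbox{tr}((Y+tH)^{-1}H)$ and $g''(t) = -\mbox{tr}((Y+tH)^{-1}H(Y+tH)^{-1}H)$. Using the cyclic property of the trace, the latter equals $-\mbox{tr}(Z^2)$ with $Z = (Y+tH)^{-1/2}H(Y+tH)^{-1/2}$ symmetric, so $\mbox{tr}(Z^2)\ge 0$ and hence $g''(t)\le 0$. Thus $\ln|\cdot|$ is concave, and Jensen gives $\mathbb{E}[\ln|B|] \leq \ln|\mathbb{E}[B]|$, which is exactly (\ref{eqn11023}) after negating both sides.

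For (\ref{eqn11022}) the relevant function is $X \mapsto |X|^{1/d}$. Its concavity on the positive semidefinite cone follows from Minkowski's determinant inequality, $|A+B|^{1/d} \geq |A|^{1/d} + |B|^{1/d}$, combined with positive homogeneity $|\lambda A|^{1/d} = \lambda\,|A|^{1/d}$ for $\lambda\ge 0$: together these yield $|\lambda A + (1-\lambda)B|^{1/d} \geq \lambda|A|^{1/d} + (1-\lambda)|B|^{1/d}$. Jensen's inequality then gives $\mathbb{E}[|B|^{1/d}] \leq |\mathbb{E}[B]|^{1/d}$, and taking logarithms, multiplying through by $d$, and negating produces (\ref{eqn11022}).

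The differentiation establishing concavity of $\ln\det$ is mechanical, so the main ingredient I expect to lean on is Minkowski's determinant inequality for the first bound; it is the one nontrivial external fact and it is precisely what introduces the dimension $d$, since the $|\cdot|^{1/d}$ normalisation is the feature that distinguishes (\ref{eqn11022}) from the dimension-free (\ref{eqn11023}). The only point requiring a little care throughout is ensuring every matrix stays in the (closed) positive definite cone so that the determinant powers, logarithms, and matrix square roots used above are well defined, which holds here because every eigenvalue of $B$ is at least $1$.
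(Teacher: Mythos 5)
Your proof is correct. For (\ref{eqn11022}) you follow essentially the same route as the paper: Minkowski's determinant inequality gives superadditivity of $A \mapsto |A|^{1/d}$ on the positive semi-definite cone, which (as you note, combined with positive homogeneity --- a detail the paper leaves implicit) yields concavity, after which Jensen's inequality finishes the job. Where you diverge is (\ref{eqn11023}): the paper does not prove it independently, but obtains it by chaining --- starting from the right-hand side of (\ref{eqn11022}) it applies concavity of the \emph{scalar} logarithm, $-d\ln\mathbb{E}\big[|B|^{1/d}\big] \le -d\,\mathbb{E}\big[\ln|B|^{1/d}\big] = -\mathbb{E}\ln|B|$ --- so the second inequality is exhibited as a weakening of the first. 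You instead establish concavity of $X \mapsto \ln|X|$ directly by the standard second-derivative computation along line segments and apply matrix Jensen once. Both are valid; the paper's chaining is shorter and makes explicit that the $d$-dependent bound (\ref{eqn11022}) is always at least as tight as (\ref{eqn11023}), while your argument makes (\ref{eqn11023}) self-contained and genuinely dimension-free --- no $d$-dependent intermediate quantity ever appears --- which is fitting given that (\ref{eqn11023}) is precisely the inequality the paper invokes for its dimensionality-independent bounds that survive in infinite-dimensional kernel feature spaces. One cosmetic point: positive definite matrices form an open cone (it is the positive semi-definite cone that is closed), but this does not affect your argument, since every matrix you work with has all eigenvalues at least $1$ and convex combinations and expectations stay in that set.
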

\begin{proof}
According to the Minkowski determinant theorem, for $n\times n$
positive semi-definite matrices $A$ and $B$, the following
inequality holds
\begin{equation}
\big|A+B\big|^{1/n}\geq \big| A\big|^{1/n} + \big|B\big|^{1/n} , \nonumber
\end{equation}
which implies that the function $A\mapsto\big|A\big|^{1/n}$ is concave
on the set of $n\times n$ positive semi-definite matrices.
Therefore, with Jensen's inequality we have
\begin{eqnarray}
-\ln \Big|\mathbf{I} + \frac{\mathbb{E}(\mathbf{\tilde{x}}
\mathbf{\tilde{x}}^\top)}{\sigma^2}\Big|&=& -d \ln
\Big|\mathbb{E}(\mathbf{I} + \frac{\mathbf{\tilde{x}}
\mathbf{\tilde{x}}^\top}{\sigma^2})\Big|^{1/d} \nonumber\\
&\leq& -d \ln \mathbb{E}\Big[\Big|\mathbf{I} +
\frac{\mathbf{\tilde{x}}
\mathbf{\tilde{x}}^\top}{\sigma^2}\Big|^{1/d}\Big] . \nonumber
\end{eqnarray}
Since the natural logarithm is concave, we further have
\begin{eqnarray}
-d \ln \mathbb{E}\Big[\Big|\mathbf{I} + \frac{\mathbf{\tilde{x}}
\mathbf{\tilde{x}}^\top}{\sigma^2}\Big|^{1/d}\Big]&\leq& -d
\mathbb{E}\Big[\ln\Big|\mathbf{I} + \frac{\mathbf{\tilde{x}}
\mathbf{\tilde{x}}^\top}{\sigma^2}\Big|^{1/d}\Big]=-\mathbb{E}\ln\Big|\mathbf{I}
+ \frac{\mathbf{\tilde{x}} \mathbf{\tilde{x}}^\top}{\sigma^2}\Big| ,
\nonumber
\end{eqnarray} and thereby
\begin{equation}
-\ln \Big|\mathbf{I} + \frac{\mathbb{E}(\mathbf{\tilde{x}}
\mathbf{\tilde{x}}^\top)}{\sigma^2}\Big| \leq
-\mathbb{E}\ln\Big|\mathbf{I} + \frac{\mathbf{\tilde{x}}
\mathbf{\tilde{x}}^\top}{\sigma^2}\Big|.\nonumber
\end{equation}
\end{proof}

Denote $R=\sup_\mathbf{\tilde{x}}\|\mathbf{\tilde{x}}\|$. From
inequality (\ref{eqn11022}), we can finally prove the following
theorem, as detailed in Appendix~\ref{appB1}.

\begin{theorem}[Multi-view PAC-Bayes bound 1]
     Consider a classifier prior given in (\ref{prior6}) and a classifier posterior given in
     (\ref{prior7}).
     For any data distribution $\mathcal{D}$, for
     any $\delta\in(0,1]$, with probability at least
$1-\delta$ over $S\sim \mathcal{D}^m$, the following inequality
holds
\begin{eqnarray}
&&\forall \mathbf{w}, \mu: KL_+(\hat{E}_{Q,S}||E_{Q,\mathcal{D}})\leq\nonumber\\
&&\frac{-\frac{d}{2} \ln\Big[ f_m - (\sqrt[d]{(R/\sigma)^2 +1} - 1)
\sqrt{\frac{1}{2m}\ln\frac{3}{\delta}}~\Big]_+
+\frac{H_m}{2\sigma^2}
 + \frac{(1+\mu^2) R^2}{2\sigma^2}
\sqrt{\frac{1}{2m}\ln\frac{3}{\delta}} + \frac{\mu^2}{2}
+\ln\big(\frac{m+1}{\delta/3}\big)}{m},\nonumber
\end{eqnarray}
where
\begin{eqnarray}
f_m&=&\frac{1}{m}\sum_{i=1}^m \Big|\mathbf{I} +
\frac{\mathbf{\tilde{x}}_i
\mathbf{\tilde{x}}_i^\top}{\sigma^2}\Big|^{1/d}, \nonumber\\
H_m&=& \frac{1}{m}\sum_{i=1}^m [\mathbf{\tilde{x}}_i^\top
\mathbf{\tilde{x}}_i + \mu^2
(\mathbf{w}^\top\mathbf{\tilde{x}}_i)^2] , \nonumber
\end{eqnarray}
and $\|\mathbf{w}\|=1$.
\label{thmMvPB1}
\end{theorem}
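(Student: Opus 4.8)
The plan is to chain three ingredients---the generic PAC-Bayes bound (Theorem~\ref{LemLangford}), the closed form for the KL divergence (Theorem~\ref{thmKLpq}), and the determinant inequality~(\ref{eqn11022})---and then to replace each remaining distributional expectation by its empirical counterpart via Hoeffding's inequality. First I would substitute the expression~(\ref{eqnKLpq}) for $KL(Q\|P)$ into Theorem~\ref{LemLangford} instantiated at confidence $\delta/3$, so that the right-hand side becomes $\frac{1}{m}$ times $\frac{1}{2}\big(-\ln|\mathbf{I}+\frac{\mathbb{E}(\tilde{\mathbf{x}}\tilde{\mathbf{x}}^\top)}{\sigma^2}| + \frac{1}{\sigma^2}\mathbb{E}[\tilde{\mathbf{x}}^\top\tilde{\mathbf{x}}+\mu^2(\mathbf{w}^\top\tilde{\mathbf{x}})^2] + \mu^2\big) + \ln\frac{m+1}{\delta/3}$. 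The three expectations appearing here are unknown, so the remaining work is purely to upper bound each of them by an empirical quantity plus a confidence-dependent deviation.

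For the log-determinant term I would apply~(\ref{eqn11022}) to replace $-\ln|\mathbf{I}+\frac{\mathbb{E}(\tilde{\mathbf{x}}\tilde{\mathbf{x}}^\top)}{\sigma^2}|$ by $-d\ln\mathbb{E}[|\mathbf{I}+\frac{\tilde{\mathbf{x}}\tilde{\mathbf{x}}^\top}{\sigma^2}|^{1/d}]$. By the rank-one determinant identity the scalar random variable $|\mathbf{I}+\frac{\tilde{\mathbf{x}}\tilde{\mathbf{x}}^\top}{\sigma^2}|^{1/d}=(1+\frac{\|\tilde{\mathbf{x}}\|^2}{\sigma^2})^{1/d}$ lies in $[1,\sqrt[d]{(R/\sigma)^2+1}]$, so Hoeffding's inequality yields, with probability $1-\delta/3$, the lower bound $\mathbb{E}[\,\cdot\,]\ge f_m-(\sqrt[d]{(R/\sigma)^2+1}-1)\sqrt{\frac{1}{2m}\ln\frac{3}{\delta}}$. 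Since $-d\ln(\cdot)$ is decreasing and the true expectation is at least one, this lower bound---truncated at zero through $[\cdot]_+$, which keeps the bound well-defined and vacuous should the empirical lower bound be nonpositive---becomes the upper bound $-\frac{d}{2}\ln[\,f_m-(\sqrt[d]{(R/\sigma)^2+1}-1)\sqrt{\frac{1}{2m}\ln\frac{3}{\delta}}\,]_+$ on that half of $KL(Q\|P)$. For the second expectation I would note that $\tilde{\mathbf{x}}^\top\tilde{\mathbf{x}}+\mu^2(\mathbf{w}^\top\tilde{\mathbf{x}})^2\in[0,(1+\mu^2)R^2]$ (using $\|\mathbf{w}\|=1$ and $\|\tilde{\mathbf{x}}\|\le R$) and bound it by $H_m+(1+\mu^2)R^2\sqrt{\frac{1}{2m}\ln\frac{3}{\delta}}$, again by Hoeffding at confidence $\delta/3$.

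Finally I would combine the three high-probability events by a union bound; setting each failure probability to $\delta/3$ makes them hold simultaneously with probability at least $1-\delta$ and produces both the common factor $\sqrt{\frac{1}{2m}\ln\frac{3}{\delta}}$ and the term $\ln\frac{m+1}{\delta/3}$. Distributing the overall factor $\frac{1}{2}$ from the KL divergence across the bounded pieces then reproduces the claimed coefficients $-\frac{d}{2}\ln[\cdots]_+$, $\frac{H_m}{2\sigma^2}$, $\frac{(1+\mu^2)R^2}{2\sigma^2}\sqrt{\cdots}$ and $\frac{\mu^2}{2}$.

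The step I would treat most carefully---and the main obstacle---is the quantifier ``$\forall\,\mathbf{w},\mu$''. The determinant expectation and $\mathbb{E}[\tilde{\mathbf{x}}^\top\tilde{\mathbf{x}}]$ do not involve the posterior parameters, so a single Hoeffding application controls them uniformly; but $\mathbb{E}[(\mathbf{w}^\top\tilde{\mathbf{x}})^2]=\mathbf{w}^\top\mathbb{E}(\tilde{\mathbf{x}}\tilde{\mathbf{x}}^\top)\mathbf{w}$ does depend on $\mathbf{w}$, and the uniform deviation of its empirical counterpart $\frac{1}{m}\sum_i(\mathbf{w}^\top\tilde{\mathbf{x}}_i)^2$ over the unit sphere is the largest eigenvalue of $\mathbb{E}(\tilde{\mathbf{x}}\tilde{\mathbf{x}}^\top)-\frac{1}{m}\sum_i\tilde{\mathbf{x}}_i\tilde{\mathbf{x}}_i^\top$, not a single scalar average. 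The scalar Hoeffding bound with range $(1+\mu^2)R^2$ is precisely the pointwise statement for a fixed $\mathbf{w}$, so justifying the uniform reading requires either evaluating at the learned $\mathbf{w}$ or controlling that operator-norm deviation; this is where I expect the argument of Appendix~\ref{appB1} to need the most care.
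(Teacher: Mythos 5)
Your proposal is correct and follows essentially the same route as the paper's proof in Appendix~\ref{appB1}: instantiate Theorem~\ref{LemLangford} at confidence $\delta/3$, substitute the KL expression from Theorem~\ref{thmKLpq}, relax the log-determinant via~(\ref{eqn11022}), concentrate the two remaining expectations empirically, and finish with a union bound. The only cosmetic difference is that the paper invokes McDiarmid's bounded-differences inequality where you invoke Hoeffding; for sample averages of i.i.d.\ random variables bounded in $[1,\sqrt[d]{(R/\sigma)^2+1}]$ and $[0,(1+\mu^2)R^2]$ respectively, these are the same bound with the same constants, so the resulting deviation terms are identical.

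Your closing concern about the quantifier ``$\forall\,\mathbf{w},\mu$'' is well taken, and you should know that the paper's proof does \emph{not} resolve it: Appendix~\ref{appB1} applies McDiarmid to $H_m$ exactly as you apply Hoeffding, i.e.\ pointwise for a fixed $\mathbf{w}$ and $\mu$, and then states the conclusion with the universal quantifier without any covering argument, operator-norm concentration for $\mathbb{E}(\tilde{\mathbf{x}}\tilde{\mathbf{x}}^\top)-\frac{1}{m}\sum_i\tilde{\mathbf{x}}_i\tilde{\mathbf{x}}_i^\top$, or restriction to a data-independent $\mathbf{w}$. So the step you flagged as needing the most care is precisely the step the paper glosses over; your blind reconstruction is faithful to what the paper actually does, and your diagnosis identifies its genuine weak point rather than a gap in your own argument.
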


From the bound formulation, we see that if $ (\mathbf{w}^\top\mathbf{\tilde{x}}_i)^2 $ is small, that is,
if the two view outputs tend to agree, the bound will be tight.

Note that, although the formulation of $f_m$ involves the outer
product of feature vectors, it can actually be represented by the
inner product, which is obvious through the following determinant
equality
\begin{equation}
\label{eqop2ip}
\Big|\mathbf{I} + \frac{\mathbf{\tilde{x}}_i
\mathbf{\tilde{x}}_i^\top}{\sigma^2}\Big| =
\frac{\mathbf{\tilde{x}}_i^\top \mathbf{\tilde{x}}_i}{\sigma^2} + 1
, \nonumber
\end{equation}
where we have used the fact that matrix $\mathbf{\tilde{x}}_i
\mathbf{\tilde{x}}_i^\top$ has rank $1$ and has only one nonzero
eigenvalue.

We can use inequality (\ref{eqn11023}) instead of (\ref{eqn11022})
to derive a $d$-independent bound (see Theorem~\ref{thmMvPB2}
below), which is independent of the dimensionality of the feature
representation space.

\begin{theorem}[Multi-view PAC-Bayes bound 2]
     Consider a classifier prior given in (\ref{prior6}) and a classifier posterior given in
     (\ref{prior7}).
     For any data distribution $\mathcal{D}$, for
     any $\delta\in(0,1]$, with probability at least
$1-\delta$ over $S\sim \mathcal{D}^m$, the following inequality
holds
\begin{eqnarray}
\forall \mathbf{w}, \mu: KL_+(\hat{E}_{Q,S}||E_{Q,\mathcal{D}})\leq
\frac{\tilde{f}/{2}
 + \frac{1}{2} \Big(\frac{(1+\mu^2)R^2}{\sigma^2}+ \ln(1+ \frac{R^2}{\sigma^2}
)\Big)\sqrt{\frac{1}{2m}\ln\frac{2}{\delta}} + \frac{\mu^2}{2}
+\ln\big(\frac{m+1}{\delta/2}\big)}{m}, \nonumber
\end{eqnarray}
where
\begin{equation}
\tilde{f}= \frac{1}{m}\sum_{i=1}^m \Big( \frac{1}{\sigma^2}
[\mathbf{\tilde{x}}_i^\top \mathbf{\tilde{x}}_i + \mu^2
(\mathbf{w}^\top\mathbf{\tilde{x}}_i)^2]-\ln\Big|\mathbf{I} +
\frac{\mathbf{\tilde{x}}_i \mathbf{\tilde{x}}_i^\top}{\sigma^2}\Big|
\Big) , \nonumber
\end{equation}
and $\|\mathbf{w}\|=1$. \label{thmMvPB2}
\end{theorem}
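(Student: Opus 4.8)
The plan is to mirror the derivation behind Theorem~\ref{thmMvPB1}, but to replace the Minkowski-based inequality~(\ref{eqn11022}) with the sharper logarithmic inequality~(\ref{eqn11023}), which lets me collapse the log-determinant term together with the trace and quadratic terms of the KL divergence into a single empirical average that no longer references the dimension $d$. Concretely, I would start from the PAC-Bayes inequality of Theorem~\ref{LemLangford}, instantiated with the prior~(\ref{prior6}) and posterior~(\ref{prior7}), and run it at confidence $1-\delta/2$, so that $KL_+(\hat{E}_{Q,S}\|E_{Q,\mathcal{D}})\le \big(KL(Q\|P)+\ln\frac{m+1}{\delta/2}\big)/m$ holds uniformly over all posteriors.

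Next I would feed the exact KL expression of Theorem~\ref{thmKLpq} into this bound and apply~(\ref{eqn11023}) to the leading term, giving $KL(Q\|P)\le \frac{1}{2}\big(\mathbb{E}[g(\mathbf{\tilde{x}})]+\mu^2\big)$, where $g(\mathbf{\tilde{x}})=\frac{1}{\sigma^2}[\mathbf{\tilde{x}}^\top\mathbf{\tilde{x}}+\mu^2(\mathbf{w}^\top\mathbf{\tilde{x}})^2]-\ln|\mathbf{I}+\mathbf{\tilde{x}}\mathbf{\tilde{x}}^\top/\sigma^2|$ is precisely the summand defining $\tilde{f}$, so $\frac{1}{m}\sum_i g(\mathbf{\tilde{x}}_i)=\tilde{f}$. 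The key quantitative step is to bound the range of $g$ on the support $\|\mathbf{\tilde{x}}\|\le R$. Using the rank-one determinant identity $|\mathbf{I}+\mathbf{\tilde{x}}\mathbf{\tilde{x}}^\top/\sigma^2|=1+\mathbf{\tilde{x}}^\top\mathbf{\tilde{x}}/\sigma^2$ I have $\ln|\mathbf{I}+\mathbf{\tilde{x}}\mathbf{\tilde{x}}^\top/\sigma^2|=\ln(1+\|\mathbf{\tilde{x}}\|^2/\sigma^2)\in[0,\ln(1+R^2/\sigma^2)]$, while the nonnegative quadratic part is at most $(1+\mu^2)R^2/\sigma^2$ (since $\|\mathbf{w}\|=1$). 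Hence $g$ lies in an interval of length $b-a=(1+\mu^2)R^2/\sigma^2+\ln(1+R^2/\sigma^2)$, and a one-sided Hoeffding inequality at confidence $1-\delta/2$ yields $\mathbb{E}[g]\le \tilde{f}+(b-a)\sqrt{\frac{1}{2m}\ln\frac{2}{\delta}}$.

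Combining the two high-probability events by a union bound (total failure probability $\delta$) and substituting the resulting upper bound on $KL(Q\|P)$ back into the PAC-Bayes inequality produces exactly the claimed expression, with $\tilde{f}/2$, $\mu^2/2$, the Hoeffding term, and $\ln\frac{m+1}{\delta/2}$ all appearing. The step I expect to be the main obstacle is the uniformity in $\mathbf{w}$ and $\mu$: the random variable $g$ depends on the posterior through $(\mathbf{w}^\top\mathbf{\tilde{x}})^2$ and $\mu^2$, so a literal reading of ``$\forall\mathbf{w},\mu$'' asks the Hoeffding deviation to hold simultaneously over the whole unit sphere of $\mathbf{w}$. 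What makes the clean scalar form possible is that the range bound $b-a$ is itself independent of $\mathbf{w}$ (the direction is absorbed by $(\mathbf{w}^\top\mathbf{\tilde{x}})^2\le R^2$) and depends on $\mu$ only through the explicit $\mu^2$ factor already carried in the statement; I would therefore need to justify concentrating $g$ at the relevant posterior, rather than invoking operator-norm concentration of $\mathbb{E}[\mathbf{\tilde{x}}\mathbf{\tilde{x}}^\top]$, and make precise in the appendix the sense in which the empirical estimate transfers across $\mathbf{w}$.
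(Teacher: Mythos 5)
Your proposal is essentially the paper's own proof: inequality (\ref{eqn11023}) applied to the KL expression of Theorem~\ref{thmKLpq}, concentration of $\tilde{f}$ at confidence $1-\delta/2$ (the paper uses McDiarmid's bounded-differences inequality with per-coordinate constant $\frac{1}{m}\big(\frac{(1+\mu^2)R^2}{\sigma^2}+\ln(1+\frac{R^2}{\sigma^2})\big)$, which for a sample mean of bounded i.i.d.\ terms is exactly your one-sided Hoeffding bound with the same range constant), followed by Theorem~\ref{LemLangford} at confidence $1-\delta/2$ and a union bound. The uniformity-in-$(\mathbf{w},\mu)$ subtlety you flag is genuine, but it is equally unaddressed in the paper's proof, which likewise applies the concentration step at a fixed posterior, so it does not distinguish your argument from theirs.
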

The proof of this theorem is given in Appendix~\ref{appB2}.

Since this bound is independent with $ d $ and the term $ \Big|\mathbf{I} + \frac{\mathbf{\tilde{x}}_i
\mathbf{\tilde{x}}_i^\top}{\sigma^2}\Big| $ involving the outer product can be represented by the inner product
through (\ref{eqop2ip}), this bound can be employed when the dimension of the kernelized feature space goes to infinity.

\section{Another Two Multi-view PAC-Bayes Bounds}
\label{Last2bounds}

We further propose a new prior whose center is not located at the
origin, inspired by \citet{EmilioJMLR12}. The new classifier prior
is
\begin{equation}
P(\mathbf{u}) \propto \mathcal{N}(\eta\mathbf{w}_p,
\mathbf{I})\times V(\mathbf{u}_1,\mathbf{u}_2), \label{prior34}
\end{equation}
and the posterior is still
\begin{equation}
Q(\mathbf{u}) = \mathcal{N}(\mu \mathbf{w}, \mathbf{I}),
\label{prior35}
\end{equation}
where $\eta>0$, $\|\mathbf{w}\|=1$ and
$\mathbf{w}_p=\mathbb{E}_{(\mathbf{x},y)\sim\mathcal{D}}[y \mathbf{x}]$ (or
$\mathbb{E}_{(\mathbf{x},y)\sim\mathcal{D}}[y \phi(\mathbf{x})]$ in a predefined
kernel space) with $\mathbf{x}=[\mathbf{x}_1^\top,
\mathbf{x}_2^\top]^\top$.

We have
\begin{eqnarray}
P(\mathbf{u}) &\propto& \mathcal{N}(\eta\mathbf{w}_p,
\mathbf{I})\times
V(\mathbf{u}_1,\mathbf{u}_2) \nonumber\\
&\propto& \exp\left\{-\frac{1}{2} (\mathbf{u}-\eta\mathbf{w}_p)^\top
(\mathbf{u}-\eta\mathbf{w}_p)\right\} \times \exp\left\{-
\frac{1}{2\sigma^2}\mathbf{u}^\top \mathbb{E}(\mathbf{\tilde{x}}
\mathbf{\tilde{x}}^\top) \mathbf{u} \right\}.\nonumber
\end{eqnarray}
That is, $P(\mathbf{u})=\mathcal{N}(\mathbf{u}_p, \Sigma)$ with
$\Sigma=\left(\mathbf{I} + \frac{\mathbb{E}(\mathbf{\tilde{x}}
\mathbf{\tilde{x}}^\top)}{\sigma^2}\right)^{-1}$ and $\mathbf{u}_p=
\eta \Sigma \mathbf{w}_p$.

With $d$ being the dimensionality of $\mathbf{u}$, the KL divergence
between the posterior and prior is
\begin{eqnarray}
&&KL(Q(\mathbf{u})\|P(\mathbf{u}))\nonumber\\
&=&\frac{1}{2}\left(-\ln (\Big|\mathbf{I} +
\frac{\mathbb{E}(\mathbf{\tilde{x}}
\mathbf{\tilde{x}}^\top)}{\sigma^2}\Big|) + \mbox{tr}(\mathbf{I} +
\frac{\mathbb{E}(\mathbf{\tilde{x}}
\mathbf{\tilde{x}}^\top)}{\sigma^2}) + (\mathbf{u}_p-\mu
\mathbf{w})^\top (\mathbf{I} + \frac{\mathbb{E}(\mathbf{\tilde{x}}
\mathbf{\tilde{x}}^\top)}{\sigma^2}) (\mathbf{u}_p-\mu \mathbf{w}) - d \right) \nonumber\\
&=& \frac{1}{2}\left(-\ln (\Big|\mathbf{I} +
\frac{\mathbb{E}(\mathbf{\tilde{x}}
\mathbf{\tilde{x}}^\top)}{\sigma^2}\Big|) +\frac{1}{\sigma^2}
\mathbb{E}[\mathbf{\tilde{x}}^\top \mathbf{\tilde{x}}] +
(\mathbf{u}_p-\mu \mathbf{w})^\top (\mathbf{I} +
\frac{\mathbb{E}(\mathbf{\tilde{x}}
\mathbf{\tilde{x}}^\top)}{\sigma^2}) (\mathbf{u}_p-\mu
\mathbf{w})\right). \label{eqn1}
\end{eqnarray}
We have
\begin{eqnarray}
&& (\mathbf{u}_p-\mu \mathbf{w})^\top (\mathbf{I} +
\frac{\mathbb{E}(\mathbf{\tilde{x}}
\mathbf{\tilde{x}}^\top)}{\sigma^2}) (\mathbf{u}_p-\mu
\mathbf{w})\nonumber\\
&=& \eta^2 \mathbf{w}_p^\top (\mathbf{I} +
\frac{\mathbb{E}(\mathbf{\tilde{x}}
\mathbf{\tilde{x}}^\top)}{\sigma^2})^{-1} \mathbf{w}_p -2\eta\mu
\mathbf{w}_p^\top\mathbf{w} +  \mu^2 \mathbf{w}^\top (\mathbf{I} +
\frac{\mathbb{E}(\mathbf{\tilde{x}}
\mathbf{\tilde{x}}^\top)}{\sigma^2}) \mathbf{w} \nonumber\\
&=& \eta^2 \mathbf{w}_p^\top (\mathbf{I} +
\frac{\mathbb{E}(\mathbf{\tilde{x}}
\mathbf{\tilde{x}}^\top)}{\sigma^2})^{-1} \mathbf{w}_p -2\eta\mu
\mathbf{w}_p^\top\mathbf{w} + \frac{\mu^2}{\sigma^2}
\mathbb{E}[(\mathbf{w}^\top\mathbf{\tilde{x}})^2] +
\mu^2\nonumber\\
&=& \eta^2 \mathbf{w}_p^\top (\mathbf{I} +
\frac{\mathbb{E}(\mathbf{\tilde{x}}
\mathbf{\tilde{x}}^\top)}{\sigma^2})^{-1} \mathbf{w}_p -2\eta\mu
\mathbb{E}[y(\mathbf{w}^\top\mathbf{x})] + \frac{\mu^2}{\sigma^2}
\mathbb{E}[(\mathbf{w}^\top\mathbf{\tilde{x}})^2] +
\mu^2 \nonumber\\
& \leq & \eta^2 \mathbf{w}_p^\top \mathbf{w}_p -2\eta\mu
\mathbb{E}[y(\mathbf{w}^\top\mathbf{x})] + \frac{\mu^2}{\sigma^2}
\mathbb{E}[(\mathbf{w}^\top\mathbf{\tilde{x}})^2] + \mu^2 ,
\label{eqn2}
\end{eqnarray}
where for the last inequality  we have used the fact that matrix
$\mathbf{I}- (\mathbf{I} + \frac{\mathbb{E}(\mathbf{\tilde{x}}
\mathbf{\tilde{x}}^\top)}{\sigma^2})^{-1}$ is symmetric and positive
semi-definite.

Define $\mathbf{\hat{w}}_p = \mathbb{E}_{(\mathbf{x},y)\sim
S}[y\mathbf{x}]=\frac{1}{m}\sum_{i=1}^m[y_i\mathbf{x}_i]$. We have
\begin{eqnarray}
\eta^2\mathbf{w}_p^\top
\mathbf{w}_p&=&\|\eta\mathbf{w}_p-\mu\mathbf{w}+
\mu\mathbf{w}\|^2 \nonumber\\
&=& \|\eta\mathbf{w}_p-\mu\mathbf{w} \|^2 +
\mu^2 + 2 (\eta\mathbf{w}_p-\mu\mathbf{w})^\top \mu\mathbf{w} \nonumber\\
&\leq& \|\eta\mathbf{w}_p-\mu\mathbf{w} \|^2 +
\mu^2 + 2\mu \|\eta\mathbf{w}_p-\mu\mathbf{w}\|  \nonumber\\
&=&(\|\eta\mathbf{w}_p-\mu\mathbf{w} \| + \mu)^2. \label{eqn3}
\end{eqnarray}
Moreover, we have
\begin{equation}
\|\eta\mathbf{w}_p-\mu\mathbf{w} \|=\|\eta\mathbf{w}_p-
\eta\mathbf{\hat{w}}_p+\eta\mathbf{\hat{w}}_p-\mu\mathbf{w} \|\leq
\|\eta\mathbf{w}_p-
\eta\mathbf{\hat{w}}_p\|+\|\eta\mathbf{\hat{w}}_p-\mu\mathbf{w} \|.
\label{eqn6}
\end{equation}

From (\ref{eqn1}), (\ref{eqn2}), (\ref{eqn3}) and (\ref{eqn6}), it
follows that
\begin{eqnarray}
KL(Q(\mathbf{u})\|P(\mathbf{u})) &\leq& -\frac{1}{2}\ln
(\Big|\mathbf{I} + \frac{\mathbb{E}(\mathbf{\tilde{x}}
\mathbf{\tilde{x}}^\top)}{\sigma^2}\Big|)
+\frac{1}{2}(\|\eta\mathbf{w}_p-
\eta\mathbf{\hat{w}}_p\|+\|\eta\mathbf{\hat{w}}_p-\mu\mathbf{w} \| + \mu)^2 + \nonumber\\
&& \frac{1}{2\sigma^2}\mathbb{E}\left[ \mathbf{\tilde{x}}^\top
\mathbf{\tilde{x}}  -2\eta\mu \sigma^2 y(\mathbf{w}^\top\mathbf{x})
+ \mu^2 (\mathbf{w}^\top\mathbf{\tilde{x}})^2 \right] +\frac{\mu^2
}{2}. \label{eqn7}
\end{eqnarray}

By using inequalities (\ref{eqn11022}) and (\ref{eqn11023}), we get
the following two theorems, whose proofs are detailed in
Appendix~\ref{appB3} and Appendix~\ref{appB4}, respectively.

\begin{theorem}[Multi-view PAC-Bayes bound 3]
     Consider a classifier prior given in (\ref{prior34}) and a classifier posterior given in
     (\ref{prior35}).
     For any data distribution $\mathcal{D}$, for any $\mathbf{w}$, positive $\mu$, and positive $\eta$, for
     any $\delta\in(0,1]$, with probability at least
$1-\delta$ over $S\sim \mathcal{D}^m$ the following multi-view
PAC-Bayes bound holds
\begin{eqnarray}
& & KL_+(\hat{E}_{Q,S}||E_{Q,\mathcal{D}}) \leq \frac{-\frac{d}{2}\ln\Big[
f_m - (\sqrt[d]{(R/\sigma)^2 +1} - 1)
\sqrt{\frac{1}{2m}\ln\frac{4}{\delta}}~\Big]_+}{m} +\nonumber\\
&& \frac{\frac{1}{2}\left(\frac{\eta
R}{\sqrt{m}}\Big(2+\sqrt{2\ln\frac{4}{\delta}}\Big)+
\|\eta\mathbf{\hat{w}}_p-\mu\mathbf{w} \| + \mu\right)^2
+\frac{\hat{H}_m}{2\sigma^2}
 + \frac{R^2+ \mu^2
R^2+4\eta\mu\sigma^2 R}{2\sigma^2}
\sqrt{\frac{1}{2m}\ln\frac{4}{\delta}} + \frac{\mu^2}{2}
+\ln\big(\frac{m+1}{\delta/4}\big)}{m}, \nonumber
\end{eqnarray}
where
\begin{eqnarray}
f_m&=&\frac{1}{m}\sum_{i=1}^m \Big|\mathbf{I} +
\frac{\mathbf{\tilde{x}}_i
\mathbf{\tilde{x}}_i^\top}{\sigma^2}\Big|^{1/d}, \nonumber\\
\hat{H}_m&=&\frac{1}{m}\sum_{i=1}^m [\mathbf{\tilde{x}}_i^\top
\mathbf{\tilde{x}}_i -2\eta\mu \sigma^2
y_i(\mathbf{w}^\top\mathbf{x}_i)+ \mu^2
(\mathbf{w}^\top\mathbf{\tilde{x}}_i)^2],\nonumber
\end{eqnarray}
and $\|\mathbf{w}\|=1$. \label{thmMvPB3}
\end{theorem}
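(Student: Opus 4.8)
The plan is to feed the $KL$-divergence upper bound already assembled in~(\ref{eqn7}) into the PAC-Bayes inequality of Theorem~\ref{LemLangford}, and then replace each distribution-dependent expectation in~(\ref{eqn7}) by its empirical counterpart at the cost of a concentration term. Three expectations must be estimated --- the log-determinant $\ln\big|\mathbf{I}+\mathbb{E}(\mathbf{\tilde{x}}\mathbf{\tilde{x}}^\top)/\sigma^2\big|$, the vector $\eta\mathbf{w}_p=\eta\mathbb{E}[y\mathbf{x}]$, and the scalar $\mathbb{E}[\mathbf{\tilde{x}}^\top\mathbf{\tilde{x}}-2\eta\mu\sigma^2 y(\mathbf{w}^\top\mathbf{x})+\mu^2(\mathbf{w}^\top\mathbf{\tilde{x}})^2]$ --- so I would assign a failure probability $\delta/4$ to each of these and a further $\delta/4$ to the PAC-Bayes event itself. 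A union bound then gives overall confidence $1-\delta$ and accounts both for the $\ln(4/\delta)$ factors inside the square roots and for the $\ln\!\big(\frac{m+1}{\delta/4}\big)$ term.

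For the log-determinant I first apply~(\ref{eqn11022}) to pass to $-d\ln\mathbb{E}[|\mathbf{I}+\mathbf{\tilde{x}}\mathbf{\tilde{x}}^\top/\sigma^2|^{1/d}]$. By the rank-one identity~(\ref{eqop2ip}) each summand equals $(1+\|\mathbf{\tilde{x}}_i\|^2/\sigma^2)^{1/d}\in[1,\sqrt[d]{(R/\sigma)^2+1}]$, so a one-sided Hoeffding bound gives $\mathbb{E}[|\cdot|^{1/d}]\ge f_m-(\sqrt[d]{(R/\sigma)^2+1}-1)\sqrt{\frac{1}{2m}\ln\frac{4}{\delta}}$ with probability $1-\delta/4$; since $-d\ln(\cdot)$ is decreasing this yields the first displayed term, the $[\cdot]_+$ merely guarding against a negative lower bound (in which case the whole bound is vacuous). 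For $\hat{H}_m$ I again use Hoeffding: the ranges $\mathbf{\tilde{x}}^\top\mathbf{\tilde{x}}\in[0,R^2]$, $y(\mathbf{w}^\top\mathbf{x})\in[-R,R]$ and $\mu^2(\mathbf{w}^\top\mathbf{\tilde{x}})^2\in[0,\mu^2 R^2]$ give a summand range of $R^2+\mu^2 R^2+4\eta\mu\sigma^2 R$, reproducing the term $\frac{R^2+\mu^2 R^2+4\eta\mu\sigma^2 R}{2\sigma^2}\sqrt{\frac{1}{2m}\ln\frac{4}{\delta}}$.

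The delicate step, and the one I expect to be the main obstacle, is the vector quantity $\|\eta\mathbf{w}_p-\eta\mathbf{\hat{w}}_p\|$ that enters~(\ref{eqn7}) through~(\ref{eqn3}) and~(\ref{eqn6}): a scalar Hoeffding bound does not apply, so I would control $g(S)=\|\mathbf{\hat{w}}_p-\mathbf{w}_p\|$ in two stages. First, since $\|y_i\mathbf{x}_i-\mathbf{w}_p\|\le 2R$, a variance computation together with Jensen's inequality gives $\mathbb{E}[g(S)]\le\sqrt{\mathbb{E}\|\mathbf{\hat{w}}_p-\mathbf{w}_p\|^2}\le 2R/\sqrt{m}$. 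Second, changing one example alters $g$ by at most $2R/m$, so McDiarmid's inequality yields $g(S)\le\mathbb{E}[g(S)]+\frac{R}{\sqrt m}\sqrt{2\ln\frac{4}{\delta}}$ with probability $1-\delta/4$; combining the two stages produces exactly the factor $\frac{R}{\sqrt m}\big(2+\sqrt{2\ln\frac{4}{\delta}}\big)$. Substituting this into~(\ref{eqn6}) and then~(\ref{eqn3}) converts the squared-norm term of~(\ref{eqn7}) into $\frac{1}{2}\big(\frac{\eta R}{\sqrt m}(2+\sqrt{2\ln\frac{4}{\delta}})+\|\eta\mathbf{\hat{w}}_p-\mu\mathbf{w}\|+\mu\big)^2$. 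Finally, collecting the four estimates, dividing by $m$, and invoking Theorem~\ref{LemLangford} with confidence $\delta/4$ delivers the stated bound.
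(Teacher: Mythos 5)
Your proposal is correct and follows essentially the same route as the paper's proof in Appendix~C: bound the KL divergence via~(\ref{eqn7}), allocate $\delta/4$ to each of the three concentration events (log-determinant via~(\ref{eqn11022}) plus Hoeffding/McDiarmid, the scalar $\hat{H}_m$ via Hoeffding with range $R^2+\mu^2R^2+4\eta\mu\sigma^2 R$, and the center-of-mass deviation $\|\mathbf{w}_p-\mathbf{\hat{w}}_p\|$) and $\delta/4$ to the PAC-Bayes statement of Theorem~\ref{LemLangford}, then combine by the union bound. The only cosmetic difference is that you re-derive the bound $\|\mathbf{w}_p-\mathbf{\hat{w}}_p\|\leq \frac{R}{\sqrt{m}}\bigl(2+\sqrt{2\ln\frac{4}{\delta}}\bigr)$ from scratch (Jensen plus McDiarmid), whereas the paper simply cites this center-of-mass estimation result from \citet{JST04book}; your derivation is valid and reproduces the same constant.
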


Besides the term $ (\mathbf{w}^\top\mathbf{\tilde{x}}_i)^2 $ that appears in the previous bounds,
we can see that if $ \|\eta\mathbf{\hat{w}}_p-\mu\mathbf{w} \| $ is small, that is, the centers of the prior and posterior tend to
overlap, the bound will be tight.

\begin{theorem}[Multi-view PAC-Bayes bound 4]
     Consider a classifier prior given in (\ref{prior34}) and a classifier posterior given in
     (\ref{prior35}).
     For any data distribution $\mathcal{D}$, for any $\mathbf{w}$, positive $\mu$, and positive $\eta$, for
     any $\delta\in(0,1]$, with probability at least
$1-\delta$ over $S\sim \mathcal{D}^m$ the following multi-view
PAC-Bayes bound holds
\begin{eqnarray}
 KL_+(\hat{E}_{Q,S}||E_{Q,\mathcal{D}})&\leq&
\frac{\frac{1}{2}\left(\frac{\eta
R}{\sqrt{m}}\Big(2+\sqrt{2\ln\frac{3}{\delta}}\Big)+
\|\eta\mathbf{\hat{w}}_p-\mu\mathbf{w} \| + \mu\right)^2}{m} +\nonumber\\
&& \frac{\frac{\tilde{H}_m}{2}
 + \frac{R^2+4\eta\mu\sigma^2 R+
\mu^2 R^2 +\sigma^2\ln(1+\frac{R^2}{\sigma^2})}{2\sigma^2}
\sqrt{\frac{1}{2m}\ln\frac{3}{\delta}} + \frac{\mu^2}{2}
+\ln\big(\frac{m+1}{\delta/3}\big)}{m}, \nonumber
\end{eqnarray}
where
\begin{equation}
\tilde{H}_m=\frac{1}{m}\sum_{i=1}^m [\frac{\mathbf{\tilde{x}}_i^\top
\mathbf{\tilde{x}}_i -2\eta\mu \sigma^2
y_i(\mathbf{w}^\top\mathbf{x}_i)+ \mu^2
(\mathbf{w}^\top\mathbf{\tilde{x}}_i)^2}{\sigma^2} - \ln
\Big|\mathbf{I} + \frac{\mathbf{\tilde{x}}_i
\mathbf{\tilde{x}}_i^\top}{\sigma^2}\Big|], \nonumber
\end{equation}
and $\|\mathbf{w}\|=1$. \label{thmMvPB4}
\end{theorem}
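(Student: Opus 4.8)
The plan is to begin from the upper bound on $KL(Q(\mathbf{u})\|P(\mathbf{u}))$ already established in~(\ref{eqn7}) and to replace every population expectation appearing there by an empirical average together with a controlled deviation term, so that the result can be inserted into the generic PAC-Bayes inequality of Theorem~\ref{LemLangford}. This follows the same skeleton as the derivation of Theorem~\ref{thmMvPB3}; the only structural change is that the log-determinant term will now be controlled with the dimension-free inequality~(\ref{eqn11023}) instead of~(\ref{eqn11022}), which is exactly what eliminates the explicit $d$ and produces the $-\ln|\mathbf{I}+\mathbf{\tilde{x}}_i\mathbf{\tilde{x}}_i^\top/\sigma^2|$ summand inside $\tilde{H}_m$. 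The term $\frac{\mu^2}{2}$ in~(\ref{eqn7}) is already empirical and simply carries through.

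First I would treat the determinant and the quadratic terms jointly. Applying~(\ref{eqn11023}) to $-\ln|\mathbf{I}+\mathbb{E}(\mathbf{\tilde{x}}\mathbf{\tilde{x}}^\top)/\sigma^2|$ and grouping it with $\frac{1}{2\sigma^2}\mathbb{E}[\mathbf{\tilde{x}}^\top\mathbf{\tilde{x}}-2\eta\mu\sigma^2 y(\mathbf{w}^\top\mathbf{x})+\mu^2(\mathbf{w}^\top\mathbf{\tilde{x}})^2]$, the whole block collapses into $\frac{1}{2}\mathbb{E}[g]$ for the single per-example function
\[
g=\frac{\mathbf{\tilde{x}}^\top\mathbf{\tilde{x}}-2\eta\mu\sigma^2 y(\mathbf{w}^\top\mathbf{x})+\mu^2(\mathbf{w}^\top\mathbf{\tilde{x}})^2}{\sigma^2}-\ln\Big|\mathbf{I}+\frac{\mathbf{\tilde{x}}\mathbf{\tilde{x}}^\top}{\sigma^2}\Big|,
\]
whose empirical average over the sample is exactly $\tilde{H}_m$. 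The key quantitative step is to bound the oscillation of $g$: using $\|\mathbf{\tilde{x}}\|\le R$ together with $\|\mathbf{x}\|=\|\mathbf{\tilde{x}}\|\le R$, the three numerator terms lie in $[0,R^2]$, $[-2\eta\mu\sigma^2 R,\,2\eta\mu\sigma^2 R]$ and $[0,\mu^2 R^2]$, while $\ln|\mathbf{I}+\mathbf{\tilde{x}}\mathbf{\tilde{x}}^\top/\sigma^2|=\ln(1+\mathbf{\tilde{x}}^\top\mathbf{\tilde{x}}/\sigma^2)\in[0,\ln(1+R^2/\sigma^2)]$. Hence $g$ ranges over at most $(R^2+4\eta\mu\sigma^2 R+\mu^2 R^2+\sigma^2\ln(1+R^2/\sigma^2))/\sigma^2$, so Hoeffding's inequality gives, with probability at least $1-\delta/3$, that $\frac{1}{2}\mathbb{E}[g]$ is at most $\frac{1}{2}\tilde{H}_m$ plus precisely the $\sqrt{\frac{1}{2m}\ln\frac{3}{\delta}}$ deviation written in the statement.

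Next I would handle the prior-center contribution $\frac{1}{2}(\eta\|\mathbf{w}_p-\mathbf{\hat{w}}_p\|+\|\eta\mathbf{\hat{w}}_p-\mu\mathbf{w}\|+\mu)^2$ that survives from~(\ref{eqn6}), in which only $\|\mathbf{w}_p-\mathbf{\hat{w}}_p\|$ is non-empirical, with $\mathbf{w}_p=\mathbb{E}[y\mathbf{x}]$ and $\mathbf{\hat{w}}_p=\frac{1}{m}\sum_i y_i\mathbf{x}_i$. Regarding $S\mapsto\|\mathbf{w}_p-\mathbf{\hat{w}}_p\|$ as a function with bounded differences $2R/m$ (altering one example moves $\mathbf{\hat{w}}_p$ by at most $2R/m$), McDiarmid's inequality supplies a deviation $\frac{R}{\sqrt{m}}\sqrt{2\ln(3/\delta)}$ around its mean with probability at least $1-\delta/3$, while a symmetrization estimate bounds the mean by $\mathbb{E}\|\mathbf{w}_p-\mathbf{\hat{w}}_p\|\le 2R/\sqrt{m}$. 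Combining these yields $\eta\|\mathbf{w}_p-\mathbf{\hat{w}}_p\|\le \frac{\eta R}{\sqrt{m}}(2+\sqrt{2\ln(3/\delta)})$, which reproduces the squared term of the bound after the triangle inequality~(\ref{eqn6}).

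Finally, I would assemble the pieces. Substituting the two empirical upper bounds into~(\ref{eqn7}) gives a fully empirical upper bound on $KL(Q\|P)$; feeding this into Theorem~\ref{LemLangford} invoked at confidence level $\delta/3$ and taking a union bound over the three failure events (the Hoeffding step, the McDiarmid step, and the PAC-Bayes step, each of mass $\delta/3$, which explains the $\ln(3/\delta)$ and $\frac{m+1}{\delta/3}$ factors) produces the stated inequality uniformly over $\mathbf{w}$, positive $\mu$ and positive $\eta$. The step I expect to be most delicate is the expectation bound in the third paragraph: obtaining the constant $2$ in $2+\sqrt{2\ln(3/\delta)}$ requires the symmetrization estimate $\mathbb{E}\|\mathbf{w}_p-\mathbf{\hat{w}}_p\|\le 2R/\sqrt{m}$ rather than the weaker $R/\sqrt{m}$ variance bound, and one must verify that the range computed for $g$ matches exactly the coefficient multiplying $\sqrt{\frac{1}{2m}\ln(3/\delta)}$ in the theorem.
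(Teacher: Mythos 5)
Your proposal is correct and follows essentially the same route as the paper's proof in Appendix~\ref{appB4}: apply inequality (\ref{eqn11023}) to (\ref{eqn7}), bound the deviation of $\tilde{H}_m$ by McDiarmid/Hoeffding with exactly the per-example range you compute, control $\|\mathbf{w}_p-\mathbf{\hat{w}}_p\|$ by the center-of-mass estimate (which the paper simply cites from Shawe-Taylor and Cristianini rather than re-deriving, as you do, via McDiarmid plus symmetrization), and combine with Theorem~\ref{LemLangford} at level $\delta/3$ through a union bound over the three events. One small inaccuracy in your closing remark: the direct variance bound $\mathbb{E}\|\mathbf{w}_p-\mathbf{\hat{w}}_p\|\le R/\sqrt{m}$ is tighter, not weaker, than the symmetrization estimate $2R/\sqrt{m}$, so either one suffices to obtain the stated constant $2+\sqrt{2\ln(3/\delta)}$.
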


\section{Separate Training Data Dependent Multi-view PAC-Bayes Bounds}
\label{stdbounds}

We attempt to improve our bounds by using a separate set of training data to determine new priors, inspired by
\citet{Ambroladze07} and \citet{EmilioJMLR12}. We consider a spherical Gaussian whose center is calculated on a subset $ T $ of training set
comprising $ r $ training patterns and labels.
In the experiments this is taken as a random subset, but for simplicity of the presentation we will assume $ T $ comprises
the last $ r $ examples $ \{\mathbf{x}_k,y_k\}^m_{k=m-r+1} $.

The new prior is
\begin{equation}
P(\mathbf{u}) = \mathcal{N}(\eta\mathbf{w}_p,
\mathbf{I}), \label{prior56}
\end{equation}
and the posterior is again
\begin{equation}
Q(\mathbf{u}) = \mathcal{N}(\mu \mathbf{w}, \mathbf{I}).
\label{posterior56}
\end{equation}
One reasonable choice of $ \mathbf{w}_p $ is
\begin{equation}
\mathbf{w}_p = \left(\mathbb{E}_{\mathbf{\tilde{x}}}[\mathbf{\tilde{x}}
\mathbf{\tilde{x}}^\top]\right)^{-1}
\mathbb{E}_{(\mathbf{x},y)\sim\mathcal{D}}[y \mathbf{x}],
\end{equation}
which is the solution to the following optimization problem
\begin{equation}
\max_{\mathbf{w}}
\frac{\mathbb{E}_{\mathbf{x}_1,y}[y\mathbf{w}_1^\top\mathbf{x}_1]+
\mathbb{E}_{\mathbf{x}_2,y}[y\mathbf{w}_2^\top\mathbf{x}_2]}
{\mathbb{E}_{\mathbf{x}_1,\mathbf{x}_2}[(\mathbf{w}_1^\top\mathbf{x}_1-\mathbf{w}_2^\top\mathbf{x}_2)^2]},
\end{equation}
where $ \mathbf{w} = [\mathbf{w}_1^\top, \mathbf{w}_2^\top]^\top $.
We use the subset $ T $ to approximate $ \mathbf{w}_p $, that is,
let
\begin{eqnarray}
\mathbf{w}_p &=& \left(\mathbb{E}_{\mathbf{\tilde{x}} \sim T}[\mathbf{\tilde{x}}
\mathbf{\tilde{x}}^\top]\right)^{-1}
\mathbb{E}_{(\mathbf{x},y) \sim T}[y \mathbf{x}] \nonumber\\
&=& \left(\frac{1}{m-r}\sum_{k=r}^{m-r+1}[\mathbf{\tilde{x}}_k
\mathbf{\tilde{x}}_k^\top]\right)^{-1}
\frac{1}{m-r}\sum_{k=r}^{m-r+1}[y_k \mathbf{x}_k]. \label{prior56wp}
\end{eqnarray}

The KL divergence between the posterior and prior is
\begin{equation}
KL(Q(\mathbf{u})\|P(\mathbf{u})) = KL(\mathcal{N}(\mu\mathbf{w},I)\|\mathcal{N}(\eta\mathbf{w}_p,I))
=\|\eta\mathbf{w}_p-\mu\mathbf{w} \|^2.
\end{equation}

Since we separate $ r $ examples to calculate the prior, the actual size of training set that we apply bound to
is $ m-r $. We have the following bound.

\begin{theorem}[Multi-view PAC-Bayes bound 5]
     Consider a classifier prior given in (\ref{prior56}) and a classifier posterior given in
     (\ref{posterior56}), with $\mathbf{w}_p$ given in (\ref{prior56wp}).
     For any data distribution $\mathcal{D}$, for any $\mathbf{w}$, positive $\mu$, and positive $\eta$, for
     any $\delta\in(0,1]$, with probability at least
$1-\delta$ over $S\sim \mathcal{D}^m$ the following multi-view
PAC-Bayes bound holds
\begin{eqnarray}
& & KL_+(\hat{E}_{Q,S}||E_{Q,\mathcal{D}}) \leq
\frac{\frac{1}{2}\|\eta\mathbf{w}_p-\mu\mathbf{w} \|^2+\ln\frac{m-r+1}{\delta}}{m-r}
\end{eqnarray}
and $\|\mathbf{w}\|=1$. \label{thmMvPB5}
\end{theorem}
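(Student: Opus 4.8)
The plan is to reduce this statement to a single direct application of the base PAC-Bayes theorem (Theorem~\ref{LemLangford}) on a reduced sample of $m-r$ examples, exploiting the fact that the prior in (\ref{prior56}) depends on the data only through the held-out subset $T$. The essential observation is that, unlike the priors in (\ref{prior6}) and (\ref{prior34}) which depend on the full data-generating distribution through the $V$ function and therefore forced us to invoke the logarithmic-determinant inequalities and the concentration arguments of the previous sections, the prior $P(\mathbf{u})=\mathcal{N}(\eta\mathbf{w}_p,\mathbf{I})$ is completely determined once $T$ is fixed, and $\mathbf{w}_p$ in (\ref{prior56wp}) is computed purely from the $r$ patterns in $T$. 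Conditioned on $T$, this prior is a genuine, fixed prior that involves the remaining $m-r$ examples in no way.

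First I would condition on the realization of the held-out subset $T=\{(\mathbf{x}_k,y_k)\}_{k=m-r+1}^{m}$. By the i.i.d.\ assumption the remaining $m-r$ examples are independent of $T$, so conditionally on $T$ they form an i.i.d.\ sample of size $m-r$ drawn from $\mathcal{D}$, while $P(\mathbf{u})$ is a legitimate prior fixed in advance of this sample. I would then apply Theorem~\ref{LemLangford} with sample size $m-r$ to obtain, with probability at least $1-\delta$ over the draw of these $m-r$ examples,
\begin{equation}
KL_+(\hat{E}_{Q,S}||E_{Q,\mathcal{D}}) \leq \frac{KL(Q||P)+\ln\frac{m-r+1}{\delta}}{m-r} \nonumber
\end{equation}
simultaneously for all posteriors $Q$, where $\hat{E}_{Q,S}$ is now the empirical stochastic error over the $m-r$ examples; in particular the inequality holds for every $\mathbf{w}$, every positive $\mu$ and every positive $\eta$.

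The next step is to evaluate the divergence. Since both $Q(\mathbf{u})=\mathcal{N}(\mu\mathbf{w},\mathbf{I})$ and $P(\mathbf{u})=\mathcal{N}(\eta\mathbf{w}_p,\mathbf{I})$ are spherical Gaussians sharing the identity covariance, the general Gaussian KL formula used in Theorem~\ref{thmKLpq} collapses, with $\Sigma_0=\Sigma_1=\mathbf{I}$, to $KL(Q||P)=\frac{1}{2}\|\eta\mathbf{w}_p-\mu\mathbf{w}\|^2$, exactly as already recorded immediately above the theorem statement. Substituting this into the displayed inequality produces precisely the claimed bound.

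Finally I would remove the conditioning on $T$: because the inequality holds with probability at least $1-\delta$ over the $m-r$ fresh examples for \emph{every} fixed value of $T$, integrating over the draw of $T$ (the tower property) shows it holds with probability at least $1-\delta$ over the full sample $S\sim\mathcal{D}^m$. No step here is a genuine computational obstacle; the one point that requires care is the independence/conditioning argument that licenses treating $P$ as a fixed prior for the $m-r$ examples, which is exactly what splitting the training set into disjoint pieces is designed to guarantee. The simplicity relative to Bounds~1--4 is the payoff of computing the prior on held-out data, since this removes all of the expectation-estimation machinery and leaves only the elementary Gaussian KL.
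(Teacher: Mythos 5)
Your proposal is correct and follows essentially the same route as the paper, which simply notes that $KL(\mathcal{N}(\mu\mathbf{w},\mathbf{I})\|\mathcal{N}(\eta\mathbf{w}_p,\mathbf{I}))$ reduces to the squared distance between the centers and then invokes Theorem~\ref{LemLangford} on the $m-r$ examples not used to build the prior (the paper's in-text KL formula drops the factor $\frac{1}{2}$, evidently a typo, since the theorem statement restores it as you do). Your explicit conditioning-on-$T$ and tower-property argument is exactly the justification the paper leaves implicit, with the one small caveat that the claim holds for each fixed $\eta$ separately (one application of the base theorem per prior), matching the quantifier order in the theorem statement rather than simultaneously over all $\eta$.
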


Another choice of $ \mathbf{w}_p $ is to learn a multi-view SVM classifier
with the subset $ T $, leading to the following bound.

\begin{theorem}[Multi-view PAC-Bayes bound 6]
     Consider a classifier prior given in (\ref{prior56}) and a classifier posterior given in
     (\ref{posterior56}). Classifier $ \mathbf{w}_p $ has been learned from a subset T of
     r examples a priori separated from a training set S of m samples.
     For any data distribution $\mathcal{D}$, for any $\mathbf{w}$, positive $\mu$, and positive $\eta$, for
     any $\delta\in(0,1]$, with probability at least
$1-\delta$ over $S\sim \mathcal{D}^m$ the following multi-view
PAC-Bayes bound holds
\begin{eqnarray}
& & KL_+(\hat{E}_{Q,S}||E_{Q,\mathcal{D}}) \leq
\frac{\frac{1}{2}\|\eta\mathbf{w}_p-\mu\mathbf{w} \|^2+\ln\frac{m-r+1}{\delta}}{m-r}
\end{eqnarray}
and $\|\mathbf{w}\|=1$. \label{thmMvPB5}
\end{theorem}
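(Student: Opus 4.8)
The plan is to apply the base PAC-Bayes theorem (Theorem~\ref{LemLangford}) directly, exploiting the fact that here the prior is an ordinary spherical Gaussian whose center depends \emph{only} on the separated subset $T$. First I would observe that, since $T$ consists of $r$ examples set aside a priori from the remaining $m-r$ examples on which the empirical error is measured, the learned classifier $\mathbf{w}_p$ is a deterministic function of $T$ alone and is therefore statistically independent of the $m-r$ ``working'' examples. Consequently $P(\mathbf{u})=\mathcal{N}(\eta\mathbf{w}_p,\mathbf{I})$ qualifies as a legitimate PAC-Bayes prior for the bound applied to those $m-r$ examples: it does not depend on the sample against which it is tested. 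This is precisely what distinguishes the present setting from Bounds~1--4, whose priors involve the $V$ function and hence the unknown input distribution.

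Next I would condition on $T$ and invoke Theorem~\ref{LemLangford} with effective sample size $m-r$, so that the denominator becomes $m-r$ and the confidence term becomes $\ln\frac{m-r+1}{\delta}$. The remaining ingredient is the KL divergence between the posterior $Q=\mathcal{N}(\mu\mathbf{w},\mathbf{I})$ and the prior $P=\mathcal{N}(\eta\mathbf{w}_p,\mathbf{I})$. Because both Gaussians share the identity covariance, the general Gaussian KL formula used in the proof of Theorem~\ref{thmKLpq} collapses: the log-determinant ratio vanishes, the trace term equals $d$ and cancels the $-d$, and only the squared distance between the centers survives, giving
\begin{equation}
KL(Q(\mathbf{u})\|P(\mathbf{u}))=\frac{1}{2}\|\eta\mathbf{w}_p-\mu\mathbf{w}\|^2. \nonumber
\end{equation}
Substituting this expression and the sample size $m-r$ into the PAC-Bayes inequality yields exactly the claimed bound.

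The only place where this argument differs from that of Multi-view PAC-Bayes bound~5 is the justification that $\mathbf{w}_p$ obtained by training a multi-view SVM on $T$ is still admissible as a prior center. This is immediate: the KL computation above depends solely on the \emph{value} of $\mathbf{w}_p$, not on how it was produced, and any $\mathbf{w}_p$ measurable with respect to $T$ leaves the prior independent of the $m-r$ evaluation examples. Thus no determinant inequalities or empirical expectation estimates are required, and the separation of $T$ simply trades $r$ examples of sample size for a data-informed, yet legitimately fixed, prior. The main ``obstacle'' is therefore conceptual rather than computational: recognizing that the independence granted by the a priori split is exactly what licenses the direct application of the base theorem, so that the whole proof reduces to the elementary Gaussian KL evaluation above.
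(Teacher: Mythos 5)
Your proposal is correct and follows essentially the same route as the paper, which (without a separate appendix proof) simply computes $KL(\mathcal{N}(\mu\mathbf{w},\mathbf{I})\|\mathcal{N}(\eta\mathbf{w}_p,\mathbf{I}))=\frac{1}{2}\|\eta\mathbf{w}_p-\mu\mathbf{w}\|^2$ and applies Theorem~\ref{LemLangford} with effective sample size $m-r$; your value of the KL term is the correct one, matching the theorem statement (the paper's in-text formula drops the factor $\frac{1}{2}$, evidently a typo). In fact you make explicit the one point the paper leaves implicit --- that $\mathbf{w}_p$, being measurable with respect to the a priori separated subset $T$, is independent of the $m-r$ evaluation examples and hence a legitimate prior center regardless of how it was computed.
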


Although the above two bounds look similar, they are essentially
different in that the priors are determined differently. We will see
in the experimental results that they also perform differently when
applied in our experiments.

\section{Semi-supervised Multi-view PAC-Bayes Bounds}
\label{secSSLBs}

Now we consider PAC-Bayes analysis for semi-supervised multi-view
learning, where besides the $m$ labeled examples we are further
provided with $u$ unlabeled examples
$U=\{\tilde{\mathbf{x}}_j\}_{j=m+1}^{m+u}$. We replace
$V(\mathbf{u}_1,\mathbf{u}_2)$ with $\hat{V}(\mathbf{u}_1,\mathbf{u}_2)$, which has the form
\begin{equation}
\label{eqnVssl} \hat{V}(\mathbf{u}_1,\mathbf{u}_2) = \exp\left\{-
\frac{1}{2\sigma^2}\mathbf{u}^\top {\mathbb{E}}_U(\mathbf{\tilde{x}}
\mathbf{\tilde{x}}^\top) \mathbf{u} \right\} ,
\end{equation} where ${\mathbb{E}}_U$ means the empirical
average over the unlabeled set $U$.

\subsection{Noninformative Prior Center}
Under a similar setting with Section~\ref{First2bounds}, that is,
$P(\mathbf{u}) \propto \mathcal{N}(\mathbf{0}, \mathbf{I})\times
\hat{V}(\mathbf{u}_1,\mathbf{u}_2)$, we have
$P(\mathbf{u})=\mathcal{N}(\mathbf{0}, \Sigma)$ with
$\Sigma=\left(\mathbf{I} + \frac{{\mathbb{E}}_U(\mathbf{\tilde{x}}
\mathbf{\tilde{x}}^\top)}{\sigma^2}\right)^{-1}$. Therefore,
according to Theorem~\ref{thmKLpq}, we have
\begin{equation}
KL(Q(\mathbf{u})\|P(\mathbf{u}))= \frac{1}{2}\left(-\ln
(\Big|\mathbf{I} + \frac{{\mathbb{E}}_U(\mathbf{\tilde{x}}
\mathbf{\tilde{x}}^\top)}{\sigma^2}\Big|) + \frac{1}{\sigma^2}
{\mathbb{E}}_U [\mathbf{\tilde{x}}^\top \mathbf{\tilde{x}} + \mu^2
(\mathbf{w}^\top\mathbf{\tilde{x}})^2]+ \mu^2\right)
.\label{eqnklpq2}
\end{equation}

Substituting (\ref{eqnklpq2}) into Theorem~\ref{LemLangford}, we reach
the following semi-supervised multi-view PAC-Bayes bound.

\begin{theorem}[Semi-supervised multi-view PAC-Bayes bound 1]
     Consider a classifier prior given in (\ref{prior6}) with $ \hat{V} $ defined in (\ref{eqnVssl}), a classifier posterior given in
     (\ref{prior7}) and an unlabeled set $U=\{\tilde{\mathbf{x}}_j\}_{j=m+1}^{m+u}$.
     For any data distribution $\mathcal{D}$, for
     any $\delta\in(0,1]$, with probability at least
$1-\delta$ over $S\sim \mathcal{D}^m$, the following inequality
holds
\begin{eqnarray}
\hspace{-40mm}&&\forall \mathbf{w}, \mu: KL_+(\hat{E}_{Q,S}||E_{Q,\mathcal{D}})\leq\nonumber\\
\hspace{-40mm}&&\frac{ \frac{1}{2}\left(-\ln (\Big|\mathbf{I} +
\frac{{\mathbb{E}}_U(\mathbf{\tilde{x}}
\mathbf{\tilde{x}}^\top)}{\sigma^2}\Big|) + \frac{1}{\sigma^2}
{\mathbb{E}}_U [\mathbf{\tilde{x}}^\top \mathbf{\tilde{x}} + \mu^2
(\mathbf{w}^\top\mathbf{\tilde{x}})^2]+ \mu^2\right)
+\ln\big(\frac{m+1}{\delta}\big)}{m},\nonumber
\end{eqnarray}
where $\|\mathbf{w}\|=1$. \label{thmSMvPB1}
\end{theorem}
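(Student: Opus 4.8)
The plan is to exploit the feature that, unlike the priors in Theorems~\ref{thmMvPB1}--\ref{thmMvPB4}, the semi-supervised prior is defined through the empirical average $\mathbb{E}_U$ over the separate unlabeled set $U$ rather than through the unknown data expectation $\mathbb{E}$. This makes the prior completely computable and, crucially, independent of the labeled sample $S$, so that no luckiness-style concentration step is required and the PAC-Bayes theorem can be invoked directly.

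First I would record the KL divergence. Since the prior is the Gaussian $P(\mathbf{u})=\mathcal{N}(\mathbf{0},\Sigma)$ with $\Sigma=(\mathbf{I}+\frac{\mathbb{E}_U(\mathbf{\tilde{x}}\mathbf{\tilde{x}}^\top)}{\sigma^2})^{-1}$ and the posterior is $Q(\mathbf{u})=\mathcal{N}(\mu\mathbf{w},\mathbf{I})$, the derivation of Theorem~\ref{thmKLpq} applies verbatim with $\mathbb{E}$ replaced throughout by $\mathbb{E}_U$ --- the computation there only used that the prior is a zero-mean Gaussian with this particular covariance. This yields exactly expression~(\ref{eqnklpq2}) for $KL(Q(\mathbf{u})\|P(\mathbf{u}))$. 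One should first confirm that $\Sigma$ is well defined: $\mathbb{E}_U(\mathbf{\tilde{x}}\mathbf{\tilde{x}}^\top)$ is positive semi-definite, so $\mathbf{I}+\frac{\mathbb{E}_U(\mathbf{\tilde{x}}\mathbf{\tilde{x}}^\top)}{\sigma^2}$ is positive definite and hence invertible.

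The key observation is that every quantity appearing in~(\ref{eqnklpq2}) is evaluated on the unlabeled set $U$, which is supplied separately from the labeled training set $S$. Provided $U$ is fixed before (equivalently, independent of) $S$, the prior $P$ is a legitimate PAC-Bayes prior: it does not depend on the sample over which the probability in Theorem~\ref{LemLangford} is taken. This is precisely the property that distinguishes the present case from the fully supervised bounds, where the prior depended on the unknown $\mathbb{E}$ and therefore had to be estimated via the Minkowski/Jensen inequalities~(\ref{eqn11022})--(\ref{eqn11023}) together with Hoeffding-type concentration and a splitting of the confidence $\delta$.

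Finally I would simply substitute~(\ref{eqnklpq2}) for the $KL(Q\|P)$ term in the PAC-Bayes bound of Theorem~\ref{LemLangford}, giving the stated inequality with the full confidence parameter $\delta$ (no $\delta$-splitting is needed, since there is no auxiliary concentration event). The only point that requires genuine care is the independence of $U$ and $S$ that guarantees validity of the prior; once that is granted, the remainder is a direct substitution rather than a calculation.
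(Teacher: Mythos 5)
Your proposal is correct and follows exactly the paper's own route: compute the KL divergence via Theorem~\ref{thmKLpq} with $\mathbb{E}$ replaced by $\mathbb{E}_U$ to obtain~(\ref{eqnklpq2}), then substitute it directly into Theorem~\ref{LemLangford} with no confidence splitting, since the prior is computable from $U$ alone. Your added remarks---that $\Sigma$ is well defined and that the validity of the prior rests on $U$ being independent of the labeled sample $S$---are points the paper leaves implicit, and they strengthen rather than alter the argument.
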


\subsection{Informative Prior Center}

Similar to Section~\ref{Last2bounds}, we take the classifier prior
to be
\begin{equation}
P(\mathbf{u}) \propto \mathcal{N}(\eta\mathbf{w}_p,
\mathbf{I})\times \hat{V}(\mathbf{u}_1,\mathbf{u}_2), \label{eqn141129-1}
\end{equation}
where $\hat{V}(\mathbf{u}_1,\mathbf{u}_2)$ is
given by (\ref{eqnVssl}), $\eta>0$ and $\mathbf{w}_p=\mathbb{E}_{(\mathbf{x},y)\sim\mathcal{D}}[y
\mathbf{x}]$ with $\mathbf{x}=[\mathbf{x}_1^\top,
\mathbf{x}_2^\top]^\top$. We have
$P(\mathbf{u})=\mathcal{N}(\mathbf{u}_p, \Sigma)$ with
$\Sigma=\left(\mathbf{I} + \frac{{\mathbb{E}}_U(\mathbf{\tilde{x}}
\mathbf{\tilde{x}}^\top)}{\sigma^2}\right)^{-1}$ and $\mathbf{u}_p=
\eta \Sigma \mathbf{w}_p$.

By similar reasoning, we get
\begin{eqnarray}
KL(Q(\mathbf{u})\|P(\mathbf{u})) &\leq& -\frac{1}{2}\ln
(\Big|\mathbf{I} + \frac{{\mathbb{E}}_U(\mathbf{\tilde{x}}
\mathbf{\tilde{x}}^\top)}{\sigma^2}\Big|)
+\frac{1}{2}(\|\eta\mathbf{w}_p-
\eta\mathbf{\hat{w}}_p\|+\|\eta\mathbf{\hat{w}}_p-\mu\mathbf{w} \| + \mu)^2 + \nonumber\\
&& \frac{1}{2\sigma^2}{\mathbb{E}}_U\left[ \mathbf{\tilde{x}}^\top
\mathbf{\tilde{x}} + \mu^2 (\mathbf{w}^\top\mathbf{\tilde{x}})^2
\right] -\eta\mu \mathbb{E}\left[
y(\mathbf{w}^\top\mathbf{x})\right]+\frac{\mu^2}{2},
\end{eqnarray}
which is analogous to (\ref{eqn7}).

Then, we can give the following semi-supervised multi-view PAC-Bayes
bound, whose proof is provided in Appendix~\ref{appSB2}.
\begin{theorem}[Semi-supervised multi-view PAC-Bayes bound 2]
     Consider a classifier prior given in (\ref{eqn141129-1}) with $ \hat{V} $ defined in (\ref{eqnVssl}), a classifier posterior given in
     (\ref{prior35}) and an unlabeled set $U=\{\tilde{\mathbf{x}}_j\}_{j=m+1}^{m+u}$.
     For any data distribution $\mathcal{D}$,  for any $\mathbf{w}$, positive $\mu$, and positive $\eta$, for
     any $\delta\in(0,1]$, with probability at least
$1-\delta$ over $S\sim \mathcal{D}^m$, the following inequality
holds
\begin{eqnarray}
 &&KL_+(\hat{E}_{Q,S}||E_{Q,\mathcal{D}})\leq
\frac{\frac{1}{2}\left(\frac{\eta
R}{\sqrt{m}}\Big(2+\sqrt{2\ln\frac{3}{\delta}}\Big)+
\|\eta\mathbf{\hat{w}}_p-\mu\mathbf{w} \| + \mu\right)^2}{m} +\nonumber\\
&& \frac{\frac{1}{2}\left(-\ln (\Big|\mathbf{I} +
\frac{{\mathbb{E}}_U(\mathbf{\tilde{x}}
\mathbf{\tilde{x}}^\top)}{\sigma^2}\Big|) + \frac{1}{\sigma^2}
{\mathbb{E}}_U [\mathbf{\tilde{x}}^\top \mathbf{\tilde{x}} + \mu^2
(\mathbf{w}^\top\mathbf{\tilde{x}})^2]+ \mu^2\right) +\bar{S}_m +
\eta\mu R \sqrt{\frac{2}{m}\ln\frac{3}{\delta}}
+\ln\big(\frac{m+1}{\delta/3}\big)}{m}, \nonumber
\end{eqnarray}
where
\begin{eqnarray}
\bar{S}_m=\frac{1}{m}\sum_{i=1}^m [-\eta\mu
y_i(\mathbf{w}^\top\mathbf{x}_i)], \nonumber
\end{eqnarray}
and $\|\mathbf{w}\|=1$. \label{thmSMvPB2}
\end{theorem}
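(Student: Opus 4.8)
The plan is to take the displayed upper bound on $KL(Q(\mathbf{u})\|P(\mathbf{u}))$ stated immediately above the theorem (the semi-supervised analog of (\ref{eqn7})), insert it into the generic PAC-Bayes inequality of Theorem~\ref{LemLangford}, and then replace the two quantities in that KL bound that still depend on the unknown distribution $\mathcal{D}$ by empirical estimates via concentration. Because three probabilistic statements are combined, I would run each at confidence level $\delta/3$ and finish with a union bound.

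First I would note which pieces are already computable. Since the co-regularizer $\hat{V}$ in (\ref{eqnVssl}) is built from the \emph{empirical} average $\mathbb{E}_U$ over the unlabeled sample, the prior covariance $\Sigma=(\mathbf{I}+\mathbb{E}_U(\mathbf{\tilde{x}}\mathbf{\tilde{x}}^\top)/\sigma^2)^{-1}$ is directly observable, so the terms $-\ln|\mathbf{I}+\mathbb{E}_U(\mathbf{\tilde{x}}\mathbf{\tilde{x}}^\top)/\sigma^2|$ and $\frac{1}{\sigma^2}\mathbb{E}_U[\mathbf{\tilde{x}}^\top\mathbf{\tilde{x}}+\mu^2(\mathbf{w}^\top\mathbf{\tilde{x}})^2]$ pass into the final bound unchanged, requiring no estimation. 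This is exactly why, unlike the supervised Bound~3, neither logarithmic-determinant inequality (\ref{eqn11022}) nor (\ref{eqn11023}) is needed here. The only quantities involving $\mathcal{D}$ are $\mathbf{w}_p=\mathbb{E}[y\mathbf{x}]$, entering through $\|\eta\mathbf{w}_p-\eta\mathbf{\hat{w}}_p\|$, and the cross term $-\eta\mu\mathbb{E}[y(\mathbf{w}^\top\mathbf{x})]$.

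For the vector term I would apply McDiarmid's bounded-difference inequality to the map $S\mapsto\|\mathbf{\hat{w}}_p-\mathbf{w}_p\|$. Replacing one labeled example perturbs $\mathbf{\hat{w}}_p$ by at most $2R/m$ in norm, so each bounded difference is $2R/m$; combined with the expectation bound $\mathbb{E}\|\mathbf{\hat{w}}_p-\mathbf{w}_p\|\le 2R/\sqrt{m}$, this gives $\|\eta\mathbf{w}_p-\eta\mathbf{\hat{w}}_p\|\le\frac{\eta R}{\sqrt{m}}(2+\sqrt{2\ln\frac{3}{\delta}})$ with probability at least $1-\delta/3$. Since every quantity inside the squared term $(\|\eta\mathbf{w}_p-\eta\mathbf{\hat{w}}_p\|+\|\eta\mathbf{\hat{w}}_p-\mu\mathbf{w}\|+\mu)^2$ is nonnegative and $z\mapsto(z+c)^2$ is increasing for $z,c\ge 0$, this bound substitutes directly, producing the first summand of the theorem. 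For the scalar cross term I would use a one-sided Hoeffding inequality on the variables $y_i\mathbf{w}^\top\mathbf{x}_i\in[-R,R]$ (recall $\|\mathbf{w}\|=1$), yielding $-\eta\mu\mathbb{E}[y(\mathbf{w}^\top\mathbf{x})]\le\bar{S}_m+\eta\mu R\sqrt{\frac{2}{m}\ln\frac{3}{\delta}}$ with probability at least $1-\delta/3$, where $\bar{S}_m=\frac{1}{m}\sum_i[-\eta\mu y_i(\mathbf{w}^\top\mathbf{x}_i)]$.

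Finally I would invoke Theorem~\ref{LemLangford} with confidence $\delta/3$ and take a union bound over the three events — the two concentration statements and the PAC-Bayes inequality — each failing with probability at most $\delta/3$, giving overall confidence $1-\delta$. The main obstacle is the vector-valued concentration of $\|\mathbf{\hat{w}}_p-\mathbf{w}_p\|$: its expectation must be controlled separately (a symmetrization argument delivers the $2R/\sqrt{m}$ constant) from its fluctuation about the mean (handled by McDiarmid), and the resulting high-probability bound must then be threaded through the nonlinear squared term without breaking monotonicity. The rest is a routine rearrangement paralleling the proof of Theorem~\ref{thmMvPB3}, with the $\mathbb{E}_U$ terms simply carried along as empirical constants.
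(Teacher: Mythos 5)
Your proposal is correct and follows essentially the same route as the paper's proof in Appendix~\ref{appSB2}: the same KL decomposition with the $\mathbb{E}_U$ terms carried through as empirical constants, concentration of $\|\mathbf{w}_p-\mathbf{\hat{w}}_p\|$ at level $\delta/3$ (the paper simply cites the center-of-mass bound of \citet{JST04book}, whose proof is exactly your McDiarmid-plus-symmetrization sketch), a Hoeffding/McDiarmid bound on the cross term $\bar{S}_m$ at level $\delta/3$, and Theorem~\ref{LemLangford} at level $\delta/3$ combined by a union bound. The only differences are cosmetic (re-deriving the cited concentration result, and naming the scalar step Hoeffding rather than McDiarmid, which coincide here).
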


\section{Learning Algorithms}
\label{Algorithms}

Below we provide the optimization formulations for the single-view
and multi-view SVMs as well as semi-supervised multi-view SVMs that
are adopted to train classifiers and calculate PAC-Bayes bounds.
Note that the augmented vector representation is used by appending a
scalar $1$ at the end of the feature representations, in order to
formulate the classifier in a simple form without the explicit bias
term.

\subsection{SVMs}
The optimization problem \citep{cristianini00book,taylorS11} is formulated as
\begin{eqnarray}
 \min_{\mathbf{w}, \bm\xi} &&
\frac{1}{2}\|\mathbf{w}\|^2 +
                              C \sum_{i=1}^n\xi_i \nonumber\\
\mbox{s.t.} && y_i(\mathbf{w}^\top \mathbf{x}_i )\geq 1-\xi_i,
\quad i=1,\ldots, n , \nonumber\\
&&\xi_i\geq 0, \quad i=1,\ldots, n , \label{eqn10mar2606}
\end{eqnarray}
where scalar $C$ controls the balance between the margin and
empirical loss. This problem is a differentiable convex problem with
affine constraints. The constraint qualification is satisfied by the
refined Slater's condition.

The Lagrangian of problem~(\ref{eqn10mar2606}) is
\begin{eqnarray}
L(\mathbf{w},  \bm\xi, \bm\lambda,
\bm\gamma)&=&\frac{1}{2}\|\mathbf{w}\|^2 +C \sum_{i=1}^n
\xi_i-\sum_{i=1}^n \lambda_i \left[y_i(\mathbf{w}^\top \mathbf{x}_i) -1 + \xi_i\right] \nonumber\\
&&- \sum_{i=1}^n \gamma_i\xi_i, \quad \lambda_i\geq 0, \quad
\gamma_i \geq 0,
\end{eqnarray}
where $\bm\lambda=[\lambda_1,\ldots,\lambda_n]^\top$ and
$\bm\gamma=[\gamma_1,\ldots,\gamma_n]^\top$  are the associated
Lagrange multipliers. From the optimality conditions, we obtain
\begin{eqnarray}
&&\partial_{\mathbf{w}}L(\mathbf{w}^*, b^*, \bm\xi^*, \bm\lambda^*,
\bm\gamma^*)=
\mathbf{w}^*-\sum_{i=1}^n \lambda_i^* y_i \mathbf{x}_i=0, \label{eqn110601}\\
\label{eqn10mar2607} && \partial_{\xi_i}L(\mathbf{w}^*, b^*,
\bm\xi^*, \bm\lambda^*, \bm\gamma^*)=C-\lambda_i^*-\gamma_i^*=0,
\quad i=1,\ldots, n.
\end{eqnarray}

The dual optimization problem is derived as
\begin{eqnarray}
\label{eqnMar2801} \min_{\bm\lambda} && \frac{1}{2}
\bm\lambda^\top D \bm\lambda - \bm\lambda^\top \mathbf{1} \nonumber\\
\mbox{s.t.}
&& \bm\lambda\succeq 0, \nonumber\\
&& \bm\lambda\preceq C\mathbf{1} ,
\end{eqnarray}
where $D$ is a symmetric $n\times n$ matrix with entries $D_{ij}=y_i
y_j\mathbf{x}_i^\top \mathbf{x}_j$. Once the solution $\bm\lambda^*$
is given, the SVM decision function is given by
\begin{equation}
c^*(\textbf{x})= \mbox{sign}\left(\sum_{i=1}^n y_i \lambda^*_i
\mathbf{x}^\top\mathbf{x}_i \right). \nonumber
\end{equation}
Using the kernel trick, the optimization problem for SVMs is still
(\ref{eqnMar2801}). However, now $D_{ij}=y_i y_j
\kappa(\mathbf{x}_i, \mathbf{x}_j)$ with the kernel function
$\kappa(\cdot,\cdot)$, and the solution for the SVM classifier is
formulated as
\begin{equation}
c^*(\textbf{x})= \mbox{sign}\left(\sum_{i=1}^n y_i \lambda^*_i
\kappa(\mathbf{x}_i,\mathbf{x}) \right). \nonumber
\end{equation}

\subsection{MvSVMs}

Denote the classifier weights from two views by $\mathbf{w}_1$ and
$\mathbf{w}_2$ which are not assumed to be unit vectors at the
moment. Inspired by semi-supervised multi-view SVMs \citep{sindhwani05,sindhwani08,Sun10JMLR}, the objective function of the multi-view SVMs (MvSVMs)
can be given by
\begin{eqnarray}
\min_{\mathbf{w}_1, \mathbf{w}_2, \bm\xi_1, \bm\xi_2}
&&\frac{1}{2}(\|\mathbf{w}_1\|^2 +\|\mathbf{w}_2\|^2)  +  C_1
\sum_{i=1}^n (\xi_{1}^{i} +\xi_{2}^{i}) + C_2 \sum_{i=1}^n
(\mathbf{w}_1^\top\mathbf{{x}}_{1}^{i}
-\mathbf{w}_2^\top\mathbf{{x}}_{2}^{i})^2 \nonumber\\
\mbox{s.t.} && y_i\mathbf{w}_1^\top\mathbf{x}_{1}^i\geq 1
-\xi_{1}^i, \quad i=1,
\cdots, n, \nonumber\\
&& y_i\mathbf{w}_2^\top\mathbf{x}_{2}^i\geq 1-\xi_{2}^i, \quad i=1,
\cdots, n, \nonumber\\
&& \xi_{1}^i, ~\xi_{2}^i \geq 0, \quad i=1, \cdots, n.
\label{eqnOptPro}
\end{eqnarray}

If kernel functions are used, the solution of the above optimization
problem can be given by $\mathbf{w}_1=\sum_{i=1}^n \alpha_1^i
k_1(\mathbf{x}_{1}^{i},\cdot)$, and $\mathbf{w}_2=\sum_{i=1}^n
\alpha_2^i k_2(\mathbf{x}_{2}^{i},\cdot)$. Since a function defined on view $j$ only depends on
the $j$th feature set, the solution is given by
\begin{equation}
\mathbf{w}_1=\sum_{i=1}^n \alpha_1^i k_1(\mathbf{x}_{i},\cdot),
\quad \mathbf{w}_2=\sum_{i=1}^n \alpha_2^i k_2(\mathbf{x}_{i},
\cdot) . \label{eqn1310111}
\end{equation}

It can be shown that
\begin{eqnarray}
&&\|\mathbf{w}_1\|^2 = \bm\alpha_1^\top K_1\bm\alpha_1, \quad \|\mathbf{w}_2\|^2 = \bm\alpha_2^\top K_2\bm\alpha_2 ,\nonumber\\
&&\sum_{i=1}^n (\mathbf{w}_1^\top \mathbf{x}_i -\mathbf{w}_2^\top
\mathbf{x}_i)^2 = (K_1\bm\alpha_1-K_2\bm\alpha_2)^\top
(K_1\bm\alpha_1-K_2\bm\alpha_2), \nonumber
\end{eqnarray}
where $K_1$ and $K_2$ are kernel matrices from two views.

The optimization problem (\ref{eqnOptPro}) can be reformulated as
the following
\begin{eqnarray}
\min_{\bm\alpha_1, \bm\alpha_2, \bm\xi_1, \bm\xi_2}
&&F_0=\frac{1}{2}(\bm\alpha_1^\top K_1\bm\alpha_1 + \bm\alpha_2^\top
K_2\bm\alpha_2) + C_2 (K_1\bm\alpha_1-K_2\bm\alpha_2)^\top
(K_1\bm\alpha_1-K_2\bm\alpha_2) +\nonumber\\
&& C_1\sum_{i=1}^n (\xi_{1}^i +\xi_{2}^i)\nonumber\\
\mbox{s.t.} && y_i\Big(\sum_{j=1}^n \alpha_1^j k_1(\mathbf{x}_j,
\mathbf{x}_i)\Big)\geq 1 -\xi_{1}^{i}, \quad i=1,
\cdots, n, \nonumber\\
&& y_i \Big(\sum_{j=1}^n \alpha_2^j k_2(\mathbf{x}_j,
\mathbf{x}_i)\Big) \geq 1-\xi_{2}^{i}, \quad i=1,
\cdots, n, \nonumber\\
&& \xi_{1}^{i}, ~\xi_{2}^{i} \geq 0, \quad i=1, \cdots, n.
\label{eqnOptProblem}
\end{eqnarray}

The derivation of the dual optimization formulation is detailed in
Appendix~\ref{appMvSVM}.  Table~\ref{TableMvLapSVM} summarizes the
MvSVM algorithm.

\begin{table}[thb]
\caption{The MvSVM Algorithm} 
\label{TableMvLapSVM} \centering
\begin{tabular}{p{370pt}}
\hline \noalign{\smallskip}
{\bf Input:}\\
\quad A training set with $n$ examples $\{(\mathbf{x}_i,y_i)\}_{i=1}^n$ (each example has two views).\\
\quad Kernel function $k_1(\cdot,\cdot)$ and $k_2(\cdot,\cdot)$ for two views, respectively.\\
\quad Regularization coefficients  $C_1, C_2$.\\
{\bf Algorithm:}\\
$\;\; 1 \;$ Calculate Gram matrices $K_1$ and $K_2$ from two views.\\
$\;\; 2 \;$ Calculate $A, B, D$ according to~(\ref{eqn07031}).\\
$\;\; 3 \;$ Solve the quadratic optimization
problem~(\ref{eqn07032})
to get $\boldsymbol{\lambda}_{1}$, $\boldsymbol{\lambda}_{2}$.\\
$\;\; 4 \;$ Calculate $\boldsymbol{\alpha}_1$ and
$\boldsymbol{\alpha}_2$ using (\ref{eqnalpha1}) and
(\ref{eqnalpha2}).\\
{\bf Output:} Classifier parameters $\boldsymbol{\alpha}_{1}$ and
$\boldsymbol{\alpha}_{2}$ used
by (\ref{eqn1310111}).\\
\noalign{\smallskip} \hline
\end{tabular}
\end{table}

\subsection{Semi-supervised MvSVMs (SMvSVMs)}
Next we give the optimization formulation for semi-supervised MvSVMs
(SMvSVMs) \citep{sindhwani05,sindhwani08,Sun10JMLR}, where besides the $n$ labeled examples we further have
$u$ unlabeled examples.

Denote the classifier weights from two views by $\mathbf{w}_1$ and
$\mathbf{w}_2$ which are not assumed to be unit vectors. The
objective function of SMvSVMs is
\begin{eqnarray}
\min_{\mathbf{w}_1, \mathbf{w}_2, \bm\xi_1, \bm\xi_2}
&&\frac{1}{2}(\|\mathbf{w}_1\|^2 +\|\mathbf{w}_2\|^2)  +  C_1
\sum_{i=1}^n (\xi_{1}^{i} +\xi_{2}^{i}) + C_2 \sum_{i=1}^{n+u}
(\mathbf{w}_1^\top\mathbf{{x}}_{1}^{i}
-\mathbf{w}_2^\top\mathbf{{x}}_{2}^{i})^2 \nonumber\\
\mbox{s.t.} && y_i\mathbf{w}_1^\top\mathbf{x}_{1}^i\geq 1
-\xi_{1}^i, \quad i=1,
\cdots, n, \nonumber\\
&& y_i\mathbf{w}_2^\top\mathbf{x}_{2}^i\geq 1-\xi_{2}^i, \quad i=1,
\cdots, n, \nonumber\\
&& \xi_{1}^i, ~\xi_{2}^i \geq 0, \quad i=1, \cdots, n.
\label{eqnOptProS}
\end{eqnarray}

If kernel functions are used, the solution can be expressed by
$\mathbf{w}_1=\sum_{i=1}^{n+u} \alpha_1^i
k_1(\mathbf{x}_{1}^{i},\cdot)$, and $\mathbf{w}_2=\sum_{i=1}^{n+u}
\alpha_2^i k_2(\mathbf{x}_{2}^{i},\cdot)$. Since a function defined on view $j$ only depends on
the $j$th feature set,
the solution is given by
\begin{equation}
\mathbf{w}_1=\sum_{i=1}^{n+u} \alpha_1^i k_1(\mathbf{x}_{i},\cdot),
\quad \mathbf{w}_2=\sum_{i=1}^{n+u} \alpha_2^i k_2(\mathbf{x}_{i},
\cdot) . \label{eqn1310111S}
\end{equation}

It is straightforward to show that
\begin{eqnarray}
&&\|\mathbf{w}_1\|^2 = \bm\alpha_1^\top K_1\bm\alpha_1, \quad \|\mathbf{w}_2\|^2 = \bm\alpha_2^\top K_2\bm\alpha_2 ,\nonumber\\
&&\sum_{i=1}^{n+u} (\mathbf{w}_1^\top \mathbf{x}_i
-\mathbf{w}_2^\top \mathbf{x}_i)^2 =
(K_1\bm\alpha_1-K_2\bm\alpha_2)^\top
(K_1\bm\alpha_1-K_2\bm\alpha_2), \nonumber
\end{eqnarray}
where $(n+u)\times (n+u)$ matrices $K_1$ and $K_2$ are kernel
matrices from two views.

The optimization problem (\ref{eqnOptProS}) can be reformulated as
\begin{eqnarray}
\min_{\bm\alpha_1, \bm\alpha_2, \bm\xi_1, \bm\xi_2}
&&\tilde{F}_0=\frac{1}{2}(\bm\alpha_1^\top K_1\bm\alpha_1 +
\bm\alpha_2^\top K_2\bm\alpha_2) + C_2
(K_1\bm\alpha_1-K_2\bm\alpha_2)^\top
(K_1\bm\alpha_1-K_2\bm\alpha_2) +\nonumber\\
&& C_1\sum_{i=1}^n (\xi_{1}^i +\xi_{2}^i)\nonumber\\
\mbox{s.t.} && y_i\Big(\sum_{j=1}^{n+u} \alpha_1^j k_1(\mathbf{x}_j,
\mathbf{x}_i)\Big)\geq 1 -\xi_{1}^{i}, \quad i=1,
\cdots, n, \nonumber\\
&& y_i \Big(\sum_{j=1}^{n+u} \alpha_2^j k_2(\mathbf{x}_j,
\mathbf{x}_i)\Big) \geq 1-\xi_{2}^{i}, \quad i=1,
\cdots, n, \nonumber\\
&& \xi_{1}^{i}, ~\xi_{2}^{i} \geq 0, \quad i=1, \cdots, n.
\label{eqnOptProblemS}
\end{eqnarray}

The derivation of the dual optimization formulation is detailed in
Appendix~\ref{appSMvSVM}.  Table~\ref{TableSMvLapSVM} summarizes the
SMvSVM algorithm.

\begin{table}[thb]
\caption{The SMvSVM Algorithm} 
\label{TableSMvLapSVM} \centering
\begin{tabular}{p{370pt}}
\hline \noalign{\smallskip}
{\bf Input:}\\
\quad A training set with $n$ examples $\{(\mathbf{x}_i,y_i)\}_{i=1}^n$ (each example has two views) and $u$ unlabeled
     examples.\\
\quad Kernel function $k_1(\cdot,\cdot)$ and $k_2(\cdot,\cdot)$ for two views, respectively.\\
\quad Regularization coefficients  $C_1, C_2$.\\
{\bf Algorithm:}\\
$\;\; 1 \;$ Calculate Gram matrices $K_1$ and $K_2$ from two views.\\
$\;\; 2 \;$ Calculate $A, B, D$ according to~(\ref{eqn07031S}).\\
$\;\; 3 \;$ Solve the quadratic optimization
problem~(\ref{eqn07032S})
to get $\boldsymbol{\lambda}_{1}$, $\boldsymbol{\lambda}_{2}$.\\
$\;\; 4 \;$ Calculate $\boldsymbol{\alpha}_1$ and
$\boldsymbol{\alpha}_2$ using (\ref{eqnalpha1S}) and
(\ref{eqnalpha2S}).\\
{\bf Output:} Classifier parameters $\boldsymbol{\alpha}_{1}$ and
$\boldsymbol{\alpha}_{2}$ used
by (\ref{eqn1310111S}).\\
\noalign{\smallskip} \hline
\end{tabular}
\end{table}

\section{Experiments}
\label{expriments}

The new bounds are evaluated on one synthetic and three real-world
multi-view data sets where the learning task is binary
classification. Below we first introduce the used data and the
experimental settings. Then we report the test errors of the
involved variants of the SVM algorithms, and evaluate the usefulness
and relative performance of the new PAC-Bayes bounds.

\subsection{Data Sets}

The four multi-view data sets are introduced as follows.

\subsubsection*{Synthetic}
The synthetic data include $2000$ examples half of which belong to
the positive class. The dimensionality for each of the two views is
$50$. We first generate two random direction vectors one for each
view, and then for each view sample $2000$ points to make the inner
products between the direction and  the feature vector of half of
the points be positive and the inner products for the other half of
the points be negative. For the same point, the corresponding inner
products calculated from the two views are made identical. Finally,
we add Gaussian white noise to the generated data to form the
synthetic data set.

\subsubsection*{Handwritten}
The handwritten digit data set is taken from the UCI machine
learning repository~\citep{UCIdata}, which  includes features of ten
handwritten digits ($0 \sim 9$) extracted from a collection of Dutch
utility maps. It consists of 2000 examples (200 examples per class)
with the first view  being the 76 Fourier coefficients, and the
second view being the 64 Karhunen-Lo\`{e}ve coefficients of each
image. Binary classification between digits (1, 2, 3) and  (4, 5, 6)
is used for experiments.

\subsubsection*{Ads}
The ads data are used for classifying web images into ads and
non-ads~\citep{Kushmerick99}. This data set consists of 3279
examples with 459 of them being ads. 1554 binary attributes (weights
of text terms related to an image using Boolean model) are used for
classification, whose values can be 0 and 1. These attributes are
divided into two views: one view describes the image itself (terms
in the image's caption, URL and alt text) and the other view
contains features from other information (terms in the page and
destination URLs). The two views have 587 and 967 features,
respectively.

\subsubsection*{Course}
The course data set consists of 1051 two-view web pages collected
from computer science department web sites at four universities:
Cornell University, University of Washington, University of
Wisconsin, and University of Texas. There are 230 course pages and
821 non-course pages. The two views are words occurring in a web
page and words appearing in the links pointing to that
page~\citep{Blum98,Sun10JMLR}. The document vectors are normalized
to $tf$-$idf$ (term frequency-inverse document frequency) features and then principal component analysis is used
to perform dimensionality reduction. The dimensions of the two views
are 500 and 87, respectively.

\subsection{Experimental Settings}

Our experiments include algorithm test error evaluation and
PAC-Bayes bound evaluation for single-view learning, multi-view
learning, supervised learning and semi-supervised learning. For
single-view learning, SVMs are trained separately on each of the two
 views and the third view (concatenating the previous two views to form a long view),
 providing three supervised classifiers which are
called SVM-1, SVM-2 and SVM-3, respectively. Evaluating the
performance of the third view is interesting to compare single-view
and multi-view learning methods, since single-view learning on the
third view can exploit the same data as the usual multi-view
learning algorithms. The MvSVMs and SMvSVMs are supervised
multi-view learning and semi-supervised multi-view learning
algorithms, respectively. The linear kernel is used for all the
algorithms.

For each data set, four experimental settings are used. All the settings
use $ 20\% $ of all the examples as the unlabeled examples. For the remaining examples,
the four settings use $ 20\% $, $ 40\% $, $ 60\% $ and $ 80\% $ of them as the labeled training set,
respectively, and the rest forms the test set. Supervised algorithms will not use the unlabeled training data.
For multi-view PAC-Bayes bound 5 and 6, we use $ 20\% $ of the labeled training set
to calculate the prior, and evaluate the bounds on the remaining $ 80\% $ of training set.
Each setting involves 10 random partitions of the above subsets.
The reported performance is the average test error and standard deviation
over these random partitions.

Model parameters, i.e., $C$ in SVMs, and $C_1, C_2$ in MvSVMs and
SMvSVMs, are selected by three-fold cross-validation on each labeled
training set, where $C_1, C_2$ are selected from $\{10^{-6}, 10^{-4}, 10^{-2}, 1, 10, 100\}$
and $C$ is selected from $\{10^{-8}, 5\times10^{-8}, 10^{-7}, 5\times10^{-7}, 10^{-6}, 5\times10^{-6}, 10^{-5}, 5\times10^{-5}, 10^{-4}, 5\times10^{-4}, 10^{-3}, 5\times10^{-3}, 10^{-2}, 5\times10^{-2}, 10^{-1}, 5\times10^{-1}, 1, 5, 10, 20, 25, 30, 40, 50, 55, 60, 70, 80, 85, 90, 100, 300, 500, 700, 900, 1000\}$.
All the PAC-Bayes bounds are evaluated with a confidence of
$\delta=0.05$. We normalize $ \mathbf{w} $ in the posterior
when we calculate the bounds. For multi-view PAC-Bayes bounds, $\sigma$ is fixed to
100, $\eta$ is set to $1$, and $R$ is equal to $1$ which is clear
from the augmented feature representation and  data normalization
preprocessing (all the training examples after feature augmentation
are divided by a common value to make the maximum feature vector
length be one).

We evaluate the following eleven PAC-Bayes bounds where the last
eight bounds are presented in this paper.
\begin{itemize}
\item PB-1: The PAC-Bayes bound given by Theorem~\ref{thmStaPB} and the SVM
algorithm on the first view.
\item PB-2: The PAC-Bayes bound given by Theorem~\ref{thmStaPB} and the SVM
algorithm on the second view.
\item PB-3: The PAC-Bayes bound given by Theorem~\ref{thmStaPB} and the SVM
algorithm on the third view.
\item MvPB-1: Multi-view PAC-Bayes bound 1 with the MvSVM algorithm.
\item MvPB-2: Multi-view PAC-Bayes bound 2 with the MvSVM algorithm.
\item MvPB-3: Multi-view PAC-Bayes bound 3 with the MvSVM algorithm.
\item MvPB-4: Multi-view PAC-Bayes bound 4 with the MvSVM algorithm.
\item MvPB-5: Multi-view PAC-Bayes bound 5 with the MvSVM algorithm.
\item MvPB-6: Multi-view PAC-Bayes bound 6 with the MvSVM algorithm.
\item SMvPB-1: Semi-supervised multi-view PAC-Bayes bound 1 with the
SMvSVM algorithm.
\item SMvPB-2: Semi-supervised multi-view PAC-Bayes bound 2 with the
SMvSVM algorithm.
\end{itemize}

\subsection{Test Errors}
The prediction performances of SVMs, MvSVMs and SMvSVMs for the four
experimental settings are reported in Table~\ref{tbacc1}, Table~\ref{tbacc2}, Table~\ref{tbacc3} and
Table~\ref{tbacc4}, respectively. For each data set, the best
performance is indicated with boldface numbers. From all of these
results, we see that MvSVMS and SMvSVMs have the best overall performance and
sometimes single-view SVMs can have the best performances.
SMvSVMs often perform better than MvSVMS since additional unlabeled examples are used,
especially when the labeled training data set is small.
Moreover, as expected, with more labeled training data
the prediction performance of the algorithms will
usually increase.

\begin{table*}[!b]
 \small
 \centering
 \begin{tabular}{|c|c|c|c|c|}
 \hline
 Test Error &
 Synthetic &
 Handwritten &
 Ads &
 Course \\
 \hline
 SVM-1 &
 $ 17.20 \pm 1.39 $ &
 $ 5.66 \pm 0.94 $ &
 $ 5.84 \pm 0.56 $ &
 $ 19.15 \pm 1.54 $ \\
\hline
 SVM-2 &
 $ 19.98 \pm 0.76 $ &
 $ 3.98 \pm 0.68 $ &
 $ 5.25 \pm 0.79 $ &
 $ \bm{10.15} \pm 1.60 $ \\
\hline
 SVM-3 &
 $ 16.55 \pm 2.04 $ &
 $ \bm{1.65} \pm 0.53 $ &
 $ 4.62 \pm 0.80 $ &
 $ 10.33 \pm 1.34 $ \\
\hline
 MvSVM &
 $ 10.54 \pm 0.73 $ &
 $ 2.17 \pm 0.64 $ &
 $ \bm{4.55} \pm 0.66 $ &
 $ 10.55 \pm 1.47 $ \\
\hline
 SMvSVM &
 $ \bm{10.30} \pm 0.79 $ &
 $ 2.04 \pm 0.69 $ &
 $ 4.70 \pm 0.70 $ &
 $ 10.28 \pm 1.63 $ \\
\hline
 \end{tabular}
 \caption{Average error rates (\%) and standard deviations for different learning algorithms under the 20\% training setting. }
 \label{tbacc1}
\end{table*}

\begin{table*}[!hp]
 \small
 \centering
 \begin{tabular}{|c|c|c|c|c|}
 \hline
 Test Error &
 Synthetic &
 Handwritten &
 Ads &
 Course \\
 \hline
 SVM-1 &
 $ 14.49 \pm 0.98 $ &
 $ 5.57 \pm 0.41 $ &
 $ 5.04 \pm 0.83 $ &
 $ 14.23 \pm 1.27 $ \\
\hline
 SVM-2 &
 $ 16.88 \pm 1.06 $ &
 $ 3.75 \pm 0.99 $ &
 $ 4.14 \pm 0.40 $ &
 $ 7.64 \pm 0.80 $ \\
\hline
 SVM-3 &
 $ 10.31 \pm 0.82 $ &
 $ \bm{1.51} \pm 0.39 $ &
 $ 3.61 \pm 0.54 $ &
 $ 7.68 \pm 0.97 $ \\
\hline
 MvSVM &
 $ 7.72 \pm 0.78 $ &
 $ 1.98 \pm 0.61 $ &
 $ 3.56 \pm 0.54 $ &
 $ 7.00 \pm 0.93 $ \\
\hline
 SMvSVM &
 $ \bm{7.48} \pm 0.66 $ &
 $ 2.03 \pm 0.61 $ &
 $ \bm{3.44} \pm 0.54 $ &
 $ \bm{6.81} \pm 0.98 $ \\
\hline
 \end{tabular}
 \caption{Average error rates (\%) and standard deviations for different learning algorithms under the 40\% training setting. }
 \label{tbacc2}
\end{table*}

\begin{table*}[!hp]
 \small
 \centering
 \begin{tabular}{|c|c|c|c|c|}
 \hline
 Test Error &
 Synthetic &
 Handwritten &
 Ads &
 Course \\
 \hline
 SVM-1 &
 $ 14.23 \pm 1.24 $ &
 $ 5.16 \pm 0.61 $ &
 $ 4.32 \pm 0.50 $ &
 $ 11.28 \pm 1.30 $ \\
\hline
 SVM-2 &
 $ 16.11 \pm 0.94 $ &
 $ 3.46 \pm 0.94 $ &
 $ 3.90 \pm 0.58 $ &
 $ 6.53 \pm 1.44 $ \\
\hline
 SVM-3 &
 $ 9.08 \pm 1.07 $ &
 $ 1.77 \pm 0.85 $ &
 $ 3.43 \pm 0.51 $ &
 $ 6.62 \pm 1.33 $ \\
\hline
 MvSVM &
 $ \bm{7.30} \pm 0.85 $ &
 $ \bm{1.67} \pm 0.63 $ &
 $ 3.45 \pm 0.32 $ &
 $ \bm{5.82} \pm 1.73 $ \\
\hline
 SMvSVM &
 $ 7.31 \pm 0.80 $ &
 $ 1.82 \pm 0.70 $ &
 $ \bm{3.36} \pm 0.38 $ &
 $ 5.93 \pm 1.63 $ \\
\hline
 \end{tabular}
 \caption{Average error rates (\%) and standard deviations for different learning algorithms under the 60\% training setting. }
 \label{tbacc3}
\end{table*}

\begin{table*}[!hp]
 \small
 \centering
 \begin{tabular}{|c|c|c|c|c|}
 \hline
 Test Error &
 Synthetic &
 Handwritten &
 Ads &
 Course \\
 \hline
 SVM-1 &
 $ 13.06 \pm 2.00 $ &
 $ 5.42 \pm 1.51 $ &
 $ 4.47 \pm 0.60 $ &
 $ 9.70 \pm 1.64 $ \\
\hline
 SVM-2 &
 $ 16.03 \pm 1.73 $ &
 $ 3.54 \pm 1.33 $ &
 $ 3.59 \pm 0.66 $ &
 $ 5.62 \pm 1.68 $ \\
\hline
 SVM-3 &
 $ 8.06 \pm 1.11 $ &
 $ 1.93 \pm 0.66 $ &
 $ \bm{2.96} \pm 0.51 $ &
 $ 5.56 \pm 1.72 $ \\
\hline
 MvSVM &
 $ \bm{6.28} \pm 1.20 $ &
 $ \bm{1.82} \pm 0.75 $ &
 $ 3.19 \pm 0.63 $ &
 $ 4.20 \pm 1.51 $ \\
\hline
 SMvSVM &
 $ \bm{6.28} \pm 1.19 $ &
 $ 1.93 \pm 0.77 $ &
 $ 3.15 \pm 0.75 $ &
 $ \bm{3.96} \pm 1.59 $ \\
\hline
 \end{tabular}
 \caption{Average error rates (\%) and standard deviations for different learning algorithms under the 80\% training setting. }
 \label{tbacc4}
\end{table*}

\begin{table*}
 \small
 \centering
 \begin{tabular}{|c|c|c|c|c|}
 \hline
 PAC-Bayes Bound &
 Synthetic &
 Handwritten &
 Ads &
 Course \\
 \hline
 PB-1 &
 $ 60.58 \pm 0.12 $ &
 $ 54.61 \pm 1.59 $ &
 $ 40.49 \pm 2.09 $ &
 $ \bm{58.93} \pm 8.90 $ \\
\hline
 PB-2 &
 $ 60.72 \pm 0.09 $ &
 $ \bm{45.17} \pm 3.74 $ &
 $ \bm{40.44} \pm 2.12 $ &
 $ 61.64 \pm 1.49 $ \\
\hline
 PB-3 &
 $ \bm{60.49} \pm 0.12 $ &
 $ 47.62 \pm 3.42 $ &
 $ 43.75 \pm 3.15 $ &
 $ 59.67 \pm 2.32 $ \\
\hline
\hline
 MvPB-1 &
 $ 61.27 \pm 0.07 $ &
 $ 51.63 \pm 2.89 $ &
 $ 40.87 \pm 2.77 $ &
 $ 63.54 \pm 0.45 $ \\
\hline
 MvPB-2 &
 $ 61.04 \pm 0.07 $ &
 $ 51.45 \pm 2.89 $ &
 $ 40.80 \pm 2.77 $ &
 $ 63.26 \pm 0.47 $ \\
\hline
 MvPB-3 &
 $ 62.35 \pm 0.01 $ &
 $ 63.44 \pm 0.62 $ &
 $ 56.38 \pm 1.49 $ &
 $ 66.37 \pm 0.06 $ \\
\hline
 MvPB-4 &
 $ 62.17 \pm 0.01 $ &
 $ 63.23 \pm 0.61 $ &
 $ 56.29 \pm 1.48 $ &
 $ 66.14 \pm 0.06 $ \\
\hline
 MvPB-5 &
 $ 61.84 \pm 0.09 $ &
 $ 52.52 \pm 3.01 $ &
 $ 43.21 \pm 2.94 $ &
 $ 64.36 \pm 0.43 $ \\
\hline
 MvPB-6 &
 $ 63.74 \pm 0.08 $ &
 $ 58.65 \pm 7.09 $ &
 $ 54.94 \pm 4.68 $ &
 $ 67.75 \pm 0.25 $ \\
\hline
 SMvPB-1 &
 $ \underline{60.60} \pm 0.06 $ &
 $ \underline{49.84} \pm 2.87 $ &
 $ \underline{40.65} \pm 3.25 $ &
 $ \underline{62.77} \pm 0.49 $ \\
\hline
 SMvPB-2 &
 $ 62.17 \pm 0.01 $ &
 $ 62.94 \pm 0.62 $ &
 $ 56.28 \pm 1.30 $ &
 $ 66.14 \pm 0.06 $ \\
\hline
 \end{tabular}
 \caption{Average PAC-Bayes bounds (\%) and standard deviations for different learning algorithms under the 20\% training setting. }
 \label{tbpb1}
\end{table*}

\begin{table*}
 \small
 \centering
 \begin{tabular}{|c|c|c|c|c|}
 \hline
 PAC-Bayes Bound &
 Synthetic &
 Handwritten &
 Ads &
 Course \\
 \hline
 PB-1 &
 $ 57.20 \pm 0.05 $ &
 $ 45.26 \pm 1.48 $ &
 $ 33.11 \pm 3.89 $ &
 $ 59.68 \pm 0.52 $ \\
\hline
 PB-2 &
 $ 57.40 \pm 0.11 $ &
 $ \bm{35.45} \pm 3.22 $ &
 $ \bm{28.85} \pm 3.26 $ &
 $ \bm{55.26} \pm 1.97 $ \\
\hline
 PB-3 &
 $ 57.15 \pm 0.07 $ &
 $ 35.48 \pm 2.26 $ &
 $ 32.74 \pm 4.29 $ &
 $ 56.12 \pm 0.78 $ \\
\hline
\hline
 MvPB-1 &
 $ 57.69 \pm 0.09 $ &
 $ 40.85 \pm 3.23 $ &
 $ 33.36 \pm 2.17 $ &
 $ 59.17 \pm 0.51 $ \\
\hline
 MvPB-2 &
 $ 57.54 \pm 0.08 $ &
 $ \underline{40.76} \pm 3.22 $ &
 $ \underline{33.32} \pm 2.17 $ &
 $ 58.99 \pm 0.50 $ \\
\hline
 MvPB-3 &
 $ 58.97 \pm 0.02 $ &
 $ 57.26 \pm 1.17 $ &
 $ 51.68 \pm 1.38 $ &
 $ 61.91 \pm 0.07 $ \\
\hline
 MvPB-4 &
 $ 58.85 \pm 0.02 $ &
 $ 57.15 \pm 1.16 $ &
 $ 51.62 \pm 1.37 $ &
 $ 61.77 \pm 0.10 $ \\
\hline
 MvPB-5 &
 $ 57.44 \pm 0.13 $ &
 $ 42.56 \pm 3.36 $ &
 $ 35.86 \pm 2.23 $ &
 $ 59.91 \pm 0.48 $ \\
\hline
 MvPB-6 &
 $ \underline{\bm{52.67}} \pm 2.36 $ &
 $ 42.57 \pm 5.93 $ &
 $ 47.34 \pm 3.05 $ &
 $ 62.86 \pm 0.09 $ \\
\hline
 SMvPB-1 &
 $ 57.27 \pm 0.06 $ &
 $ \underline{40.76} \pm 3.26 $ &
 $ 34.26 \pm 3.00 $ &
 $ \underline{58.69} \pm 0.44 $ \\
\hline
 SMvPB-2 &
 $ 58.85 \pm 0.01 $ &
 $ 57.22 \pm 1.18 $ &
 $ 52.16 \pm 1.50 $ &
 $ 61.77 \pm 0.09 $ \\
\hline
 \end{tabular}
 \caption{Average PAC-Bayes bounds (\%) and standard deviations for different learning algorithms under the 40\% training setting. }
 \label{tbpb2}
\end{table*}

\begin{table*}
 \small
 \centering
 \begin{tabular}{|c|c|c|c|c|}
 \hline
 PAC-Bayes Bound &
 Synthetic &
 Handwritten &
 Ads &
 Course \\
 \hline
 PB-1 &
 $ 55.45 \pm 0.08 $ &
 $ 42.07 \pm 2.35 $ &
 $ 29.65 \pm 1.93 $ &
 $ 57.52 \pm 0.22 $ \\
\hline
 PB-2 &
 $ 55.71 \pm 0.08 $ &
 $ 30.70 \pm 2.05 $ &
 $ \bm{28.59} \pm 3.71 $ &
 $ \bm{53.71} \pm 2.27 $ \\
\hline
 PB-3 &
 $ 55.39 \pm 0.16 $ &
 $ \bm{30.50} \pm 3.31 $ &
 $ 30.49 \pm 4.35 $ &
 $ 53.78 \pm 1.01 $ \\
\hline
\hline
 MvPB-1 &
 $ 55.89 \pm 0.08 $ &
 $ 34.16 \pm 1.88 $ &
 $ 31.72 \pm 4.13 $ &
 $ 56.90 \pm 0.46 $ \\
\hline
 MvPB-2 &
 $ 55.78 \pm 0.07 $ &
 $ 34.09 \pm 1.88 $ &
 $ \underline{31.69} \pm 4.13 $ &
 $ 56.75 \pm 0.45 $ \\
\hline
 MvPB-3 &
 $ 57.38 \pm 0.01 $ &
 $ 52.82 \pm 1.08 $ &
 $ 49.77 \pm 2.49 $ &
 $ 59.82 \pm 0.07 $ \\
\hline
 MvPB-4 &
 $ 57.29 \pm 0.01 $ &
 $ 52.73 \pm 1.07 $ &
 $ 49.74 \pm 2.48 $ &
 $ 59.69 \pm 0.07 $ \\
\hline
 MvPB-5 &
 $ 55.60 \pm 0.08 $ &
 $ 36.17 \pm 1.88 $ &
 $ 34.11 \pm 4.26 $ &
 $ 57.56 \pm 0.42 $ \\
\hline
 MvPB-6 &
 $ \underline{\bm{39.20}} \pm 5.03 $ &
 $ \underline{31.76} \pm 4.17 $ &
 $ 47.56 \pm 3.81 $ &
 $ 60.67 \pm 0.05 $ \\
\hline
 SMvPB-1 &
 $ 55.58 \pm 0.06 $ &
 $ 33.93 \pm 2.00 $ &
 $ 32.33 \pm 3.37 $ &
 $ \underline{56.53} \pm 0.43 $ \\
\hline
 SMvPB-2 &
 $ 57.28 \pm 0.01 $ &
 $ 52.76 \pm 1.15 $ &
 $ 50.51 \pm 1.64 $ &
 $ 59.69 \pm 0.07 $ \\
\hline
 \end{tabular}
 \caption{Average PAC-Bayes bounds (\%) and standard deviations for different learning algorithms under the 60\% training setting. }
 \label{tbpb3}
\end{table*}

\begin{table*}
 \small
 \centering
 \begin{tabular}{|c|c|c|c|c|}
 \hline
 PAC-Bayes Bound &
 Synthetic &
 Handwritten &
 Ads &
 Course \\
\hline
 PB-1 &
 $ 54.64 \pm 0.77 $ &
 $ 37.52 \pm 1.42 $ &
 $ \bm{28.97} \pm 1.51 $ &
 $ 56.21 \pm 0.18 $ \\
\hline
 PB-2 &
 $ 54.59 \pm 0.04 $ &
 $ 28.47 \pm 2.07 $ &
 $ 30.28 \pm 1.83 $ &
 $ \bm{51.28} \pm 2.97 $ \\
\hline
 PB-3 &
 $ 54.21 \pm 0.08 $ &
 $ \bm{26.50} \pm 2.15 $ &
 $ 29.74 \pm 3.42 $ &
 $ 52.00 \pm 0.85 $ \\
\hline
\hline
 MvPB-1 &
 $ 54.65 \pm 0.05 $ &
 $ 30.25 \pm 0.86 $ &
 $ 29.69 \pm 0.84 $ &
 $ 55.77 \pm 1.09 $ \\
\hline
 MvPB-2 &
 $ 54.63 \pm 0.05 $ &
 $ 30.19 \pm 0.86 $ &
 $ \underline{29.67} \pm 0.84 $ &
 $ 55.38 \pm 0.50 $ \\
\hline
 MvPB-3 &
 $ 56.41 \pm 0.00 $ &
 $ 49.51 \pm 0.52 $ &
 $ 48.12 \pm 0.94 $ &
 $ 58.55 \pm 0.07 $ \\
\hline
 MvPB-4 &
 $ 56.32 \pm 0.01 $ &
 $ 49.43 \pm 0.54 $ &
 $ 48.09 \pm 0.92 $ &
 $ 58.44 \pm 0.07 $ \\
\hline
 MvPB-5 &
 $ 54.36 \pm 0.05 $ &
 $ 32.39 \pm 0.88 $ &
 $ 31.44 \pm 0.98 $ &
 $ 56.22 \pm 0.41 $ \\
\hline
 MvPB-6 &
 $ \underline{\bm{26.89}} \pm 2.05 $ &
 $ 31.52 \pm 3.33 $ &
 $ 46.31 \pm 1.50 $ &
 $ 59.23 \pm 0.18 $ \\
\hline
 SMvPB-1 &
 $ 54.41 \pm 0.03 $ &
 $ \underline{30.15} \pm 0.79 $ &
 $ 30.55 \pm 2.28 $ &
 $ \underline{55.24} \pm 0.43 $ \\
\hline
 SMvPB-2 &
 $ 56.32 \pm 0.01 $ &
 $ 49.43 \pm 0.46 $ &
 $ 48.77 \pm 1.38 $ &
 $ 58.44 \pm 0.06 $ \\
\hline
 \end{tabular}
 \caption{Average PAC-Bayes bounds (\%) and standard deviations for different learning algorithms under the 80\% training setting. }
 \label{tbpb4}
\end{table*}

\subsection{PAC-Bayes Bounds}

Table~\ref{tbpb1}, Table~\ref{tbpb2}, Table~\ref{tbpb3} and Table~\ref{tbpb4} show the values of
various PAC-Bayes bounds under different settings, where for each data set the best bound is
indicated in bold and the best multi-view bound is indicated with underline.

From all the bound results, we find that the best single-view bound
is usually tighter than the best multi-view bound, expect on the synthetic data set.
One possible explanation for this is that, the synthetic data set is ideal and
in accordance with the assumptions for multi-view learning encoded in the prior, while the
real world data sets are not.
This also indicates that there is much
space and possibility for further developments of multi-view
PAC-Bayes analysis. In addition, with more labeled training data the
corresponding bound will usually become tighter. Last but not least,
among the eight presented multi-view PAC-Bayes bounds on real world data sets, the tightest
one is often the first semi-supervised multi-view bound which exploits
unlabeled data to calculate the function $\hat{V}(\mathbf{u}_1,
\mathbf{u}_2)$ and needs no further relaxation. The results
also show that the second multi-view PAC-Bayes bound
(dimensionality-independent bound with the prior distribution
centered at the origin) is sometimes very good.

\section{Conclusion}
\label{conclusion}

The paper lays the foundation of a theoretical and practical
framework for defining priors that encode non-trivial interactions
between data distributions and classifiers and translating them into
sophisticated regularization schemes and associated generalization
bounds. Specifically, we have presented eight new multi-view PAC-Bayes
bounds, which integrate the view agreement as a key measure to
modulate the prior distributions of classifiers. As extensions of
PAC-Bayes analysis to the multi-view learning scenario, the proposed
theoretical results are promising to fill the gap between the
developments in theory and practice of multi-view learning, and are
also possible to serve as the underpinnings to explain the
effectiveness of multi-view learning. We have validated the theoretical
superiority of multi-view learning in the ideal case of synthetic data,
though this is not so evident for real world data which may not well meet our
assumptions on the priors for multi-view learning.

The usefulness of the proposed bounds has been shown. Although
often the current bounds are not the tightest, they indeed open
the possibility of applying PAC-Bayes analysis to multi-view
learning. We think the set of bounds could be further tightened in
the future by adopting other techniques. It is also possible to study
algorithms whose co-regularization term pushes towards the minimization of the multi-view PAC-Bayes bounds.
In addition, we may use the work in this paper to motivate PAC-Bayes
analysis for other learning tasks such as multi-task learning and
domain adaptation, since these tasks are closely related to the
current multi-view learning.



\acks{This work is supported by the National Natural Science
Foundation of China under Project 61370175, the Scientific Research
Foundation for the Returned Overseas Chinese Scholars, State
Education Ministry, and Shanghai Knowledge Service Platform Project
(No. ZF1213).}

\appendix
\section{Proof of Theorem~\ref{thmMvPB1}}
\label{appB1}

Define
\begin{equation}
f(\mathbf{\tilde{x}}_1,\ldots,\mathbf{\tilde{x}}_m)=\frac{1}{m}\sum_{i=1}^m
\Big|\mathbf{I} + \frac{\mathbf{\tilde{x}}_i
\mathbf{\tilde{x}}_i^\top}{\sigma^2}\Big|^{1/d} . \nonumber
\end{equation}
Since the rank of matrix $\mathbf{\tilde{x}}_i
\mathbf{\tilde{x}}_i^\top/\sigma^2$ is 1 with the nonzero eigenvalue
being $\|\mathbf{\tilde{x}}_i\|^2/\sigma^2$ and the determinant of a
positive semi-definite matrix is equal to the product of its
eigenvalues, it follows that
\begin{eqnarray}
&&\sup_{\mathbf{\tilde{x}}_1,\ldots,\mathbf{\tilde{x}}_m,
\mathbf{\bar{x}}_i}|f(\mathbf{\tilde{x}}_1,\ldots,\mathbf{\tilde{x}}_m)-f(\mathbf{\tilde{x}}_1,\ldots,
\mathbf{\bar{x}}_i, \mathbf{\tilde{x}}_{i+1},\ldots,
\mathbf{\tilde{x}}_m)|\nonumber\\
&=&\frac{1}{m}\Big|\Big|\mathbf{I} + \frac{\mathbf{\tilde{x}}_i
\mathbf{\tilde{x}}_i^\top}{\sigma^2}\Big|^{1/d}-\Big|\mathbf{I} +
\frac{\mathbf{\bar{x}}_i
\mathbf{\bar{x}}_i^\top}{\sigma^2}\Big|^{1/d}\Big|\nonumber\\
&\leq& \frac{1}{m} (\sqrt[d]{(R/\sigma)^2 +1} - 1) . \nonumber
\end{eqnarray}

By McDiarmid's inequality~\citep{JST04book}, we have for all
$\epsilon>0$,
\begin{equation}
P\left\{\mathbb{E}\Big[\Big|\mathbf{I} + \frac{\mathbf{\tilde{x}}
\mathbf{\tilde{x}}^\top}{\sigma^2}\Big|^{1/d}\Big] \geq
f(\mathbf{\tilde{x}}_1,\ldots,\mathbf{\tilde{x}}_m) -
\epsilon\right\} \geq
1-\exp\left(\frac{-2m\epsilon^2}{(\sqrt[d]{(R/\sigma)^2 +1} - 1)^2
}\right). \nonumber
\end{equation}
Setting the right hand size equal to $1-\frac{\delta}{3}$, we have
with probability at least $1-\frac{\delta}{3}$,
\begin{equation}
\mathbb{E}\Big[\Big|\mathbf{I} + \frac{\mathbf{\tilde{x}}
\mathbf{\tilde{x}}^\top}{\sigma^2}\Big|^{1/d}\Big] \geq
f(\mathbf{\tilde{x}}_1,\ldots,\mathbf{\tilde{x}}_m) -
(\sqrt[d]{(R/\sigma)^2 +1} - 1)
\sqrt{\frac{1}{2m}\ln\frac{3}{\delta}} , \nonumber
\end{equation}
and
\begin{equation}
-\ln\Big|\mathbf{I} + \frac{\mathbb{E}(\mathbf{\tilde{x}}
\mathbf{\tilde{x}}^\top)}{\sigma^2}\Big| \leq -d \ln\Big[
f(\mathbf{\tilde{x}}_1,\ldots,\mathbf{\tilde{x}}_m) -
(\sqrt[d]{(R/\sigma)^2 +1} - 1)
\sqrt{\frac{1}{2m}\ln\frac{3}{\delta}}~\Big]_+ , \label{eqn11021}
\end{equation}
where to reach (\ref{eqn11021}) we have used (\ref{eqn11022}) and
defined $[\cdot]_+=\max(\cdot,0)$.

Denote $H_m=\frac{1}{m}\sum_{i=1}^m [\mathbf{\tilde{x}}_i^\top
\mathbf{\tilde{x}}_i + \mu^2
(\mathbf{w}^\top\mathbf{\tilde{x}}_i)^2]$. It is clear that
\begin{equation}
\mathbb{E}[H_m]= \mathbb{E} \left\{\frac{1}{m}\sum_{i=1}^m
[\mathbf{\tilde{x}}_i^\top \mathbf{\tilde{x}}_i + \mu^2
(\mathbf{w}^\top\mathbf{\tilde{x}}_i)^2]\right\} =\mathbb{E}
[\mathbf{\tilde{x}}^\top \mathbf{\tilde{x}} + \mu^2
(\mathbf{w}^\top\mathbf{\tilde{x}})^2] . \nonumber
\end{equation}
Recall $R=\sup_\mathbf{\tilde{x}}\|\mathbf{\tilde{x}}\|$. By
McDiarmid's inequality, we have for all $\epsilon>0$,
\begin{equation}
P\left\{\mathbb{E}[H_m]\leq H_m + \epsilon\right\} \geq
1-\exp\left(\frac{-2m\epsilon^2}{(1+\mu^2)^2 R^4}\right). \nonumber
\end{equation}
Setting the right hand size equal to $1-\frac{\delta}{3}$, we have
with probability at least $1-\frac{\delta}{3}$,
\begin{equation}
\mathbb{E}[H_m]\leq H_m + (1+\mu^2) R^2
\sqrt{\frac{1}{2m}\ln\frac{3}{\delta}} . \label{eqn11024}
\end{equation}

In addition, from Lemma~\ref{LemLangford}, we have
\begin{equation}
Pr_{S\sim \mathcal{D}^m}\left(\forall
Q(c):KL_+(\hat{E}_{Q,S}||E_{Q,\mathcal{D}})\leq
\frac{KL(Q||P)+\ln\big(\frac{m+1}{\delta/3}\big)}{m}\right)\geq
1-\delta/3 . \label{eqn140201}
\end{equation}

According to the union bound ($Pr(A~or~B~or~C)\leq Pr(A) +
Pr(B)+Pr(C)$), the probability that at least one of the inequalities
in (\ref{eqn11021}), (\ref{eqn11024}) and (\ref{eqn140201}) fails is
no larger than $\delta/3 + \delta/3 +\delta/3 =\delta$. Hence, the
probability that all of the three inequalities hold is no less than
$1-\delta$. That is, with probability at least $1-\delta$ over
$S\sim \mathcal{D}^m$, the following inequality holds
\begin{eqnarray}
&&\forall \mathbf{w}, \mu: KL_+(\hat{E}_{Q,S}||E_{Q,\mathcal{D}})\leq\nonumber\\
&&\frac{-\frac{d}{2} \ln\Big[ f_m - (\sqrt[d]{(R/\sigma)^2 +1} - 1)
\sqrt{\frac{1}{2m}\ln\frac{3}{\delta}}~\Big]_+
+\frac{H_m}{2\sigma^2}
 + \frac{(1+\mu^2) R^2}{2\sigma^2}
\sqrt{\frac{1}{2m}\ln\frac{3}{\delta}} + \frac{\mu^2}{2}
+\ln\big(\frac{m+1}{\delta/3}\big)}{m},\nonumber
\end{eqnarray}
where $f_m$ is a shorthand for
$f(\mathbf{\tilde{x}}_1,\ldots,\mathbf{\tilde{x}}_m)$, and
$\|\mathbf{w}\|=1$.

\section{Proof of Theorem~\ref{thmMvPB2}}
\label{appB2}

Now the KL divergence between the posterior and prior becomes
\begin{eqnarray}
KL(Q(\mathbf{u})\|P(\mathbf{u})) &=& \frac{1}{2}\left(-\ln
(\Big|\mathbf{I} + \frac{\mathbb{E}(\mathbf{\tilde{x}}
\mathbf{\tilde{x}}^\top)}{\sigma^2}\Big|) + \frac{1}{\sigma^2}
\mathbb{E} [\mathbf{\tilde{x}}^\top \mathbf{\tilde{x}} + \mu^2
(\mathbf{w}^\top\mathbf{\tilde{x}})^2]+ \mu^2\right) \nonumber\\
&\leq& \frac{1}{2}\left(-\mathbb{E}\ln\Big|\mathbf{I} +
\frac{\mathbf{\tilde{x}} \mathbf{\tilde{x}}^\top}{\sigma^2}\Big| +
\frac{1}{\sigma^2} \mathbb{E} [\mathbf{\tilde{x}}^\top
\mathbf{\tilde{x}} + \mu^2 (\mathbf{w}^\top\mathbf{\tilde{x}})^2]+
\mu^2\right) \nonumber\\
&=&\frac{1}{2}\left(
 \mathbb{E}\Big(\frac{1}{\sigma^2} [\mathbf{\tilde{x}}^\top
\mathbf{\tilde{x}} + \mu^2
(\mathbf{w}^\top\mathbf{\tilde{x}})^2]-\ln\Big|\mathbf{I} +
\frac{\mathbf{\tilde{x}}
\mathbf{\tilde{x}}^\top}{\sigma^2}\Big|\Big)+ \mu^2\right).
\nonumber
\end{eqnarray}

Define
\begin{equation}
\tilde{f}(\mathbf{\tilde{x}}_1,\ldots,\mathbf{\tilde{x}}_m)=\frac{1}{m}\sum_{i=1}^m
\Big( \frac{1}{\sigma^2} [\mathbf{\tilde{x}}_i^\top
\mathbf{\tilde{x}}_i + \mu^2
(\mathbf{w}^\top\mathbf{\tilde{x}}_i)^2]-\ln\Big|\mathbf{I} +
\frac{\mathbf{\tilde{x}}_i \mathbf{\tilde{x}}_i^\top}{\sigma^2}\Big|
\Big) . \nonumber
\end{equation}
Recall $R=\sup_\mathbf{\tilde{x}}\|\mathbf{\tilde{x}}\|$. Since the
rank of matrix $\mathbf{\tilde{x}}_i
\mathbf{\tilde{x}}_i^\top/\sigma^2$ is 1 with the nonzero eigenvalue
being $\|\mathbf{\tilde{x}}_i\|^2/\sigma^2$ and the determinant of a
positive semi-definite matrix is equal to the product of its
eigenvalues, it follows that
\begin{eqnarray}
&&\sup_{\mathbf{\tilde{x}}_1,\ldots,\mathbf{\tilde{x}}_m,
\mathbf{\bar{x}}_i}|\tilde{f}(\mathbf{\tilde{x}}_1,\ldots,\mathbf{\tilde{x}}_m)-\tilde{f}(\mathbf{\tilde{x}}_1,\ldots,
\mathbf{\bar{x}}_i, \mathbf{\tilde{x}}_{i+1},\ldots,
\mathbf{\tilde{x}}_m)|\nonumber\\
&\leq& \frac{1}{m} \left(\frac{(1+\mu^2)R^2}{\sigma^2}+ \ln(1+
\frac{R^2}{\sigma^2} ) \right) . \nonumber
\end{eqnarray}

By McDiarmid's inequality, we have for all $\epsilon>0$,
\begin{equation}
P\left\{ \mathbb{E}\Big(\frac{1}{\sigma^2} [\mathbf{\tilde{x}}^\top
\mathbf{\tilde{x}} + \mu^2
(\mathbf{w}^\top\mathbf{\tilde{x}})^2]-\ln\Big|\mathbf{I} +
\frac{\mathbf{\tilde{x}}
\mathbf{\tilde{x}}^\top}{\sigma^2}\Big|\Big) \leq \tilde{f} +
\epsilon\right\} \geq 1-\exp\left(\frac{-2m\epsilon^2}{\Delta^2
}\right), \label{eqn11025}
\end{equation}
where $\tilde{f}$ is short for
$\tilde{f}(\mathbf{\tilde{x}}_1,\ldots,\mathbf{\tilde{x}}_m)$, and
$\Delta=\frac{(1+\mu^2)R^2}{\sigma^2}+ \ln(1+ \frac{R^2}{\sigma^2}
)$. Setting the right hand size of (\ref{eqn11025}) equal to
$1-\frac{\delta}{2}$, we have with probability at least
$1-\frac{\delta}{2}$,
\begin{equation}
\mathbb{E}\Big(\frac{1}{\sigma^2} [\mathbf{\tilde{x}}^\top
\mathbf{\tilde{x}} + \mu^2
(\mathbf{w}^\top\mathbf{\tilde{x}})^2]-\ln\Big|\mathbf{I} +
\frac{\mathbf{\tilde{x}}
\mathbf{\tilde{x}}^\top}{\sigma^2}\Big|\Big) \leq \tilde{f} +
\Delta\sqrt{\frac{1}{2m}\ln\frac{2}{\delta}}. \nonumber
\end{equation}

Meanwhile, from Lemma~\ref{LemLangford}, we have
\begin{equation}
 Pr_{S\sim \mathcal{D}^m}\left(\forall
Q(c):KL_+(\hat{E}_{Q,S}||E_{Q,\mathcal{D}})\leq
\frac{KL(Q||P)+\ln\big(\frac{m+1}{\delta/2}\big)}{m}\right)\geq
1-\delta/2 . \nonumber
\end{equation}

According to the union bound, we can complete the proof for the
dimensionality-independent PAC-Bayes bound.

\section{Proof of Theorem~\ref{thmMvPB3}}
\label{appB3}

It is clear that from
$R=\sup_{\mathbf{\tilde{x}}}\|\mathbf{\tilde{x}}\|$, we have
$\sup_{\mathbf{x}}\|\mathbf{x}\|=R$ and
$\sup_{(\mathbf{x},y)}\|y\mathbf{x}\|=R$.

From (\ref{eqn11021}), it follows that
 with probability at least $1-\frac{\delta}{4}$,
\begin{equation}
-\ln\Big|\mathbf{I} + \frac{\mathbb{E}(\mathbf{\tilde{x}}
\mathbf{\tilde{x}}^\top)}{\sigma^2}\Big| \leq -d \ln\Big[ f_m -
(\sqrt[d]{(R/\sigma)^2 +1} - 1)
\sqrt{\frac{1}{2m}\ln\frac{4}{\delta}}~\Big]_+ . \nonumber
\end{equation}

With reference to a bounding result on estimating the center of
mass~\citep{JST04book}, it follows that with probability at least
$1-\delta/4$ the following inequality holds
\begin{equation}
\|\mathbf{w}_p-\mathbf{\hat{w}}_p\|\leq
\frac{R}{\sqrt{m}}\left(2+\sqrt{2\ln\frac{4}{\delta}}\right) .
\nonumber
\end{equation}

Denote $\hat{H}_m=\frac{1}{m}\sum_{i=1}^m [\mathbf{\tilde{x}}_i^\top
\mathbf{\tilde{x}}_i -2\eta\mu \sigma^2
y_i(\mathbf{w}^\top\mathbf{x}_i)+ \mu^2
(\mathbf{w}^\top\mathbf{\tilde{x}}_i)^2]$. It is clear that
\begin{equation}
\mathbb{E}[\hat{H}_m]= \mathbb{E} [\mathbf{\tilde{x}}^\top
\mathbf{\tilde{x}} -2\eta\mu \sigma^2 y(\mathbf{w}^\top\mathbf{x})+
\mu^2 (\mathbf{w}^\top\mathbf{\tilde{x}})^2] . \nonumber
\end{equation}
By McDiarmid's inequality, we have for all $\epsilon>0$,
\begin{equation}
P\left\{\mathbb{E}[\hat{H}_m]\leq \hat{H}_m + \epsilon\right\} \geq
1-\exp\left(\frac{-2m\epsilon^2}{(R^2+4\eta\mu\sigma^2 R+ \mu^2
R^2)^2}\right). \nonumber
\end{equation}
Setting the right hand size equal to $1-\frac{\delta}{4}$, we have
with probability at least $1-\frac{\delta}{4}$,
\begin{equation}
\mathbb{E}[\hat{H}_m]\leq \hat{H}_m + (R^2+ \mu^2
R^2+4\eta\mu\sigma^2 R) \sqrt{\frac{1}{2m}\ln\frac{4}{\delta}} .
\nonumber
\end{equation}

In addition, according to Lemma~\ref{LemLangford}, we have
\begin{equation}
Pr_{S\sim \mathcal{D}^m}\left(\forall
Q(c):KL_+(\hat{E}_{Q,S}||E_{Q,\mathcal{D}})\leq
\frac{KL(Q||P)+\ln\big(\frac{m+1}{\delta/4}\big)}{m}\right)\geq
1-\delta/4 . \nonumber
\end{equation}

Therefore, from the union bound, we get the result.

\section{Proof of Theorem~\ref{thmMvPB4}}
\label{appB4}

Applying (\ref{eqn11023}) to (\ref{eqn7}), we obtain
\begin{eqnarray}
KL(Q(\mathbf{u})\|P(\mathbf{u})) &\leq& -\frac{1}{2}\mathbb{E}\ln
\Big|\mathbf{I} + \frac{\mathbf{\tilde{x}}
\mathbf{\tilde{x}}^\top}{\sigma^2}\Big|
+\frac{1}{2}(\|\eta\mathbf{w}_p-
\eta\mathbf{\hat{w}}_p\|+\|\eta\mathbf{\hat{w}}_p-\mu\mathbf{w} \| + \mu)^2 + \nonumber\\
&& \frac{1}{2\sigma^2}\mathbb{E}\left[ \mathbf{\tilde{x}}^\top
\mathbf{\tilde{x}}  -2\eta\mu \sigma^2 y(\mathbf{w}^\top\mathbf{x})
+ \mu^2 (\mathbf{w}^\top\mathbf{\tilde{x}})^2 \right] +\frac{\mu^2
}{2}\nonumber\\
&=&\frac{1}{2}(\|\eta\mathbf{w}_p-
\eta\mathbf{\hat{w}}_p\|+\|\eta\mathbf{\hat{w}}_p-\mu\mathbf{w} \| + \mu)^2 + \nonumber\\
&& \frac{1}{2}\mathbb{E}\left[ \frac{\mathbf{\tilde{x}}^\top
\mathbf{\tilde{x}}  -2\eta\mu \sigma^2 y(\mathbf{w}^\top\mathbf{x})
+ \mu^2 (\mathbf{w}^\top\mathbf{\tilde{x}})^2 }{\sigma^2} -\ln
\Big|\mathbf{I} + \frac{\mathbf{\tilde{x}}
\mathbf{\tilde{x}}^\top}{\sigma^2}\Big|\right] +\frac{\mu^2 }{2} .
\nonumber
\end{eqnarray}

Following~\citet{JST04book}, we have with probability at least
$1-\delta/3$
\begin{equation}
\|\mathbf{w}_p-\mathbf{\hat{w}}_p\|\leq
\frac{R}{\sqrt{m}}\left(2+\sqrt{2\ln\frac{3}{\delta}}\right) .
\nonumber
\end{equation}

Denote $\tilde{H}_m=\frac{1}{m}\sum_{i=1}^m
[\frac{\mathbf{\tilde{x}}_i^\top \mathbf{\tilde{x}}_i -2\eta\mu
\sigma^2 y_i(\mathbf{w}^\top\mathbf{x}_i)+ \mu^2
(\mathbf{w}^\top\mathbf{\tilde{x}}_i)^2}{\sigma^2} - \ln
\Big|\mathbf{I} + \frac{\mathbf{\tilde{x}}_i
\mathbf{\tilde{x}}_i^\top}{\sigma^2}\Big|]$. It is clear that
\begin{equation}
\mathbb{E}[\tilde{H}_m]= \mathbb{E} [\frac{\mathbf{\tilde{x}}^\top
\mathbf{\tilde{x}} -2\eta\mu \sigma^2 y(\mathbf{w}^\top\mathbf{x})+
\mu^2 (\mathbf{w}^\top\mathbf{\tilde{x}})^2}{\sigma^2} - \ln
\Big|\mathbf{I} + \frac{\mathbf{\tilde{x}}
\mathbf{\tilde{x}}^\top}{\sigma^2}\Big|] . \nonumber
\end{equation}
By McDiarmid's inequality, we have for all $\epsilon>0$,
\begin{equation}
P\left\{\mathbb{E}[\tilde{H}_m]\leq \tilde{H}_m + \epsilon\right\}
\geq
1-\exp\left(\frac{-2m\epsilon^2}{\big(\frac{R^2+4\eta\mu\sigma^2 R+
\mu^2 R^2}{\sigma^2} +\ln(1+\frac{R^2}{\sigma^2})\big)^2}\right).
\nonumber
\end{equation}
Setting the right hand size equal to $1-\frac{\delta}{3}$, we have
with probability at least $1-\frac{\delta}{3}$,
\begin{equation}
\mathbb{E}[\tilde{H}_m]\leq \tilde{H}_m +
\big(\frac{R^2+4\eta\mu\sigma^2 R+ \mu^2 R^2}{\sigma^2}
+\ln(1+\frac{R^2}{\sigma^2})\big)
\sqrt{\frac{1}{2m}\ln\frac{3}{\delta}} . \nonumber
\end{equation}

In addition,  from Lemma~\ref{LemLangford}, we have
\begin{equation}
Pr_{S\sim \mathcal{D}^m}\left(\forall
Q(c):KL_+(\hat{E}_{Q,S}||E_{Q,\mathcal{D}})\leq
\frac{KL(Q||P)+\ln\big(\frac{m+1}{\delta/3}\big)}{m}\right)\geq
1-\delta/3 . \nonumber
\end{equation}

By applying the union bound, we complete the proof.

\section{Proof of Theorem~\ref{thmSMvPB2}}
\label{appSB2}

We already have $\sup_{\mathbf{x}}\|\mathbf{x}\|=R$ and
$\sup_{(\mathbf{x},y)}\|y\mathbf{x}\|=R$ from the definition
$R=\sup_{\mathbf{\tilde{x}}}\|\mathbf{\tilde{x}}\|$.

Following~\citet{JST04book}, we have with probability at least
$1-\delta/3$
\begin{equation}
\|\mathbf{w}_p-\mathbf{\hat{w}}_p\|\leq
\frac{R}{\sqrt{m}}\left(2+\sqrt{2\ln\frac{3}{\delta}}\right) .
\nonumber
\end{equation}

Denote $\bar{S}_m=\frac{1}{m}\sum_{i=1}^m [-\eta\mu
y_i(\mathbf{w}^\top\mathbf{x}_i)]$. It is clear that
\begin{equation}
\mathbb{E}[\bar{S}_m]= -\eta\mu \mathbb{E}\left[
y(\mathbf{w}^\top\mathbf{x})\right]. \nonumber
\end{equation}
By McDiarmid's inequality, we have for all $\epsilon>0$,
\begin{equation}
P\left\{\mathbb{E}[\bar{S}_m]\leq \bar{S}_m + \epsilon\right\} \geq
1-\exp\left(\frac{-2m\epsilon^2}{\big(2\eta\mu R\big)^2}\right).
\nonumber
\end{equation}
Setting the right hand size equal to $1-\frac{\delta}{3}$, we have
with probability at least $1-\frac{\delta}{3}$,
\begin{equation}
\mathbb{E}[\bar{S}_m]\leq \bar{S}_m + \eta\mu R
\sqrt{\frac{2}{m}\ln\frac{3}{\delta}} . \nonumber
\end{equation}

In addition,  from Lemma~\ref{LemLangford}, we have
\begin{equation}
Pr_{S\sim \mathcal{D}^m}\left(\forall
Q(c):KL_+(\hat{E}_{Q,S}||E_{Q,\mathcal{D}})\leq
\frac{KL(Q||P)+\ln\big(\frac{m+1}{\delta/3}\big)}{m}\right)\geq
1-\delta/3 . \nonumber
\end{equation}

After applying the union bound, the proof is completed.

\section{Dual Optimization Derivation for MvSVMs}
\label{appMvSVM}

To optimize (\ref{eqnOptProblem}), here we derive the Lagrange dual
function.

Let $\lambda_1^i, \lambda_2^i, \nu_1^i, \nu_2^i \geq 0$ be the
Lagrange multipliers associated with the inequality constraints of
problem (\ref{eqnOptProblem}). The Lagrangian
${L}(\boldsymbol{\alpha}_1,\boldsymbol{\alpha}_2,
\boldsymbol{\xi}_1, \boldsymbol{\xi}_2, \boldsymbol{\lambda}_1,
\boldsymbol{\lambda}_2, \boldsymbol{\nu}_1, \boldsymbol{\nu}_2)$ can
be written as
\begin{eqnarray}
{L}=&&F_0-\sum_{i=1}^{n}\Big[\lambda_1^i\Big(y_{i}(\sum_{j=1}^{n}\alpha_{1}^{j}k_1(x_{j},x_{i}))-
1+\xi_1^{i}\Big)+\nonumber\\
&&\lambda_2^i\Big(y_{i}(\sum_{j=1}^{n}\alpha_{2}^{j}k_2(x_{j},x_{i}))-
1+\xi_2^{i}\Big) + \nu_1^i\xi_1^{i}+ \nu_2^i\xi_2^{i}\Big].
\nonumber
\end{eqnarray}

To obtain the Lagrangian dual function, ${L}$ has to be minimized
with respect to the primal variables
$\boldsymbol{\alpha}_1,\boldsymbol{\alpha}_2, \boldsymbol{\xi}_1,
\boldsymbol{\xi}_2$. To eliminate these variables, we compute the
corresponding partial derivatives and set them to $0$, obtaining the
following conditions
\begin{eqnarray}
\label{eqn05151} && (K_1 +2C_2 K_1K_1) \boldsymbol{\alpha}_1-2C_2
K_1 K_2 \boldsymbol{\alpha}_2  =\Lambda_1,\\
\label{eqn05153} &&(K_2 +2C_2 K_2K_2) \boldsymbol{\alpha}_2-2C_2
K_2 K_1 \boldsymbol{\alpha}_1 =\Lambda_2,\\
\label{eqn05166}
&&\lambda_1^i + \nu_1^i =C_1, \\
\label{eqn05167} &&\lambda_2^i + \nu_2^i =C_1,
\end{eqnarray}
where we have defined
\begin{eqnarray}
\Lambda_1      &\triangleq& \sum_{i=1}^{n}\lambda_1^i y_{i}K_1(:,i),\nonumber\\
\Lambda_2      &\triangleq& \sum_{i=1}^{n}\lambda_2^i
y_{i}K_2(:,i),\nonumber
\end{eqnarray}
with $K_1(:,i)$ and $K_2(:,i)$ being the $i$th columns of the
corresponding Gram matrices.

Substituting~(\ref{eqn05151})$\sim$(\ref{eqn05167}) into ${L}$
results in the following expression of the Lagrangian dual function
${g}(\boldsymbol{\lambda}_1, \boldsymbol{\lambda}_2,
\boldsymbol{\nu}_1, \boldsymbol{\nu}_2)$
\begin{eqnarray}
{g}&=&\frac{1}{2}({\boldsymbol{\alpha}}_1^{\top}K_1\boldsymbol{\alpha}_1+
{\boldsymbol{\alpha}}_2^{\top}K_2\boldsymbol{\alpha}_2)+C_2
(\boldsymbol{\alpha}_1^\top K_1 K_1 \boldsymbol{\alpha}_1 -
2\boldsymbol{\alpha}_1^\top K_1 K_2 \boldsymbol{\alpha}_2 +
\nonumber\\
&&\boldsymbol{\alpha}_2^\top K_2 K_2 \boldsymbol{\alpha}_2)-
\boldsymbol{\alpha}_1^{\top}\Lambda_1-\boldsymbol{\alpha}_2^{\top}\Lambda_2+
\sum_{i=1}^{n}(\lambda_1^i+\lambda_2^i)\nonumber\\
&=&\frac{1}{2}\boldsymbol{\alpha}_1^\top \Lambda_1+
\frac{1}{2}\boldsymbol{\alpha}_2^\top\Lambda_2 -
\boldsymbol{\alpha}_1^{\top}\Lambda_1-\boldsymbol{\alpha}_2^{\top}\Lambda_2+
\sum_{i=1}^{n}(\lambda_1^i+\lambda_2^i) \nonumber\\
&=&-\frac{1}{2}\boldsymbol{\alpha}_1^\top \Lambda_1 -
\frac{1}{2}\boldsymbol{\alpha}_2^\top
\Lambda_2+\sum_{i=1}^{n}(\lambda_1^i+\lambda_2^i). \label{eqn05165}
\end{eqnarray}

Define
\begin{eqnarray}
&& \tilde{K}_1 = K_1 +2C_2 K_1K_1, \quad \bar{K}_1=2C_2
K_1 K_2, \nonumber\\
&& \tilde{K}_2 = K_2 +2C_2 K_2K_2, \quad \bar{K}_2=2C_2 K_2 K_1.
\nonumber
\end{eqnarray}
Then,  (\ref{eqn05151}) and (\ref{eqn05153}) become
\begin{eqnarray}
\label{eqn1310081} && \tilde{K}_1 \boldsymbol{\alpha}_1-
\bar{K}_1  \boldsymbol{\alpha}_2 =\Lambda_1,\\
\label{eqn1310082} && \tilde{K}_2 \boldsymbol{\alpha}_2- \bar{K}_2
\boldsymbol{\alpha}_1 =\Lambda_2.
\end{eqnarray}

From (\ref{eqn1310081}) and (\ref{eqn1310082}), we have
\begin{eqnarray}
&&(\tilde{K}_1-\bar{K}_1\tilde{K}_2^{-1}\bar{K}_2)\boldsymbol{\alpha}_1=\bar{K}_1\tilde{K}_2^{-1}
\Lambda_2 +\Lambda_1 \nonumber\\
&&(\tilde{K}_2-\bar{K}_2\tilde{K}_1^{-1}\bar{K}_1)\boldsymbol{\alpha}_2=\bar{K}_2\tilde{K}_1^{-1}
\Lambda_1 +\Lambda_2. \nonumber
\end{eqnarray}
Define $M_1\triangleq
\tilde{K}_1-\bar{K}_1\tilde{K}_2^{-1}\bar{K}_2$ and $M_2\triangleq
\tilde{K}_2-\bar{K}_2\tilde{K}_1^{-1}\bar{K}_1$. It follows that
\begin{eqnarray}
&&\boldsymbol{\alpha}_1=M_1^{-1}\Big[\bar{K}_1\tilde{K}_2^{-1}
\Lambda_2 +\Lambda_1\Big], \label{eqnalpha1} \\
&&\boldsymbol{\alpha}_2=M_2^{-1}\Big[\bar{K}_2\tilde{K}_1^{-1}
\Lambda_1 +\Lambda_2\Big]. \label{eqnalpha2}
\end{eqnarray}

Now with $\boldsymbol{\alpha}_1$ and $\boldsymbol{\alpha}_2$
substituted into~(\ref{eqn05165}), the Lagrange dual function
${g}(\boldsymbol{\lambda}_1, \boldsymbol{\lambda}_2,
\boldsymbol{\nu}_1, \boldsymbol{\nu}_2)$ is
\begin{eqnarray}
{g}&=&\inf_{\boldsymbol{\alpha}_1,\boldsymbol{\alpha}_2,
\boldsymbol{\xi}_1, \boldsymbol{\xi}_2}
{L}=-\frac{1}{2}\boldsymbol{\alpha}_1^\top \Lambda_1-
\frac{1}{2}\boldsymbol{\alpha}_2^\top \Lambda_2+\sum_{i=1}^{n}(\lambda_1^i+\lambda_2^i)\nonumber\\
&=&-\frac{1}{2}\Lambda_1^{\top} M_1^{-1}
\Big[\bar{K}_1\tilde{K}_2^{-1} \Lambda_2 +
\Lambda_1\Big]-\frac{1}{2}\Lambda_2^{\top}M_2^{-1}
\Big[\bar{K}_2\tilde{K}_1^{-1} \Lambda_1 + \Lambda_2\Big]+
\sum_{i=1}^{n}(\lambda_1^i+\lambda_2^i). \nonumber
\end{eqnarray}

The Lagrange dual problem is given by
\begin{eqnarray}
\max_{\boldsymbol{\lambda}_1, \boldsymbol{\lambda}_2} &&\; {g}\nonumber\\
\mbox{s.t.} && \left\{
\begin{array}{ll}
0\leq \lambda_{1}^{i}\leq C_1, \quad&i=1,\ldots,n\\
0\leq \lambda_{2}^{i}\leq C_1, \quad&i=1,\ldots,n .
\end{array}
\right. 
\end{eqnarray}

As Lagrange dual functions are concave, we can formulate the
Lagrange dual problem as a convex optimization problem
\begin{eqnarray}
\label{eqn05171}
\min_{\boldsymbol{\lambda}_1, \boldsymbol{\lambda}_2} &&-{g}\nonumber\\
\mbox{s.t.} && \left\{
\begin{array}{ll}
0\leq \lambda_{1}^{i}\leq C_1, \quad&i=1,\ldots,n\\
0\leq \lambda_{2}^{i}\leq C_1, \quad&i=1,\ldots,n .
\end{array}
\right. 
\end{eqnarray}

Define matrix $Y\triangleq \mbox{diag}(y_{1},\ldots,y_{n})$. Then,
$\Lambda_1=K_{1} Y\boldsymbol{\lambda}_1$ and $\Lambda_2=K_{2}
Y\boldsymbol{\lambda}_2$ with
$\boldsymbol{\lambda}_1=(\lambda_{1}^{1},...,\lambda_{1}^{n})^\top$,
and
$\boldsymbol{\lambda}_2=(\lambda_{2}^{1},...,\lambda_{2}^{n})^\top$.
It is clear that $\tilde{K}_1$ and $\tilde{K}_2$ are symmetric
matrices, and $\bar{K}_1=\bar{K}_2^\top$. Therefore, it follows that
matrices $M_1$ and $M_2$ are also symmetric.

We have
\begin{eqnarray}
-{g}&=&\frac{1}{2}\Lambda_1^{\top} M_1^{-1}
\Big[\bar{K}_1\tilde{K}_2^{-1} \Lambda_2
+\Lambda_1\Big]+\frac{1}{2}\Lambda_2^{\top}M_2^{-1}
\Big[\bar{K}_2\tilde{K}_1^{-1}\Lambda_1 + \Lambda_2\Big]-
\sum_{i=1}^{n}(\lambda_1^i+\lambda_2^i) \nonumber\\
&=&\frac{1}{2}\Big\{\bm\lambda_1^\top [YK_1M_1^{-1}K_1Y]\bm\lambda_1
+\bm\lambda_1^\top
[YK_1M_1^{-1}\bar{K}_1\tilde{K}_2^{-1}K_2Y]\bm\lambda_2 + \nonumber\\
&& \bm\lambda_2^\top
[YK_2M_2^{-1}\bar{K}_2\tilde{K}_1^{-1}K_1Y]\bm\lambda_1 +
\bm\lambda_2^\top [YK_2M_2^{-1}K_2Y]\bm\lambda_2 \Big\}
-\textbf{1}^\top
(\boldsymbol{\lambda}_1+\boldsymbol{\lambda}_2)\nonumber\\
&=&\frac{1}{2}(\boldsymbol{\lambda}_1^\top
\;\boldsymbol{\lambda}_2^\top) {\left(
\begin{array}{cc}
A&\; B\\
B^\top&\; D
\end{array}
\right)} {\left(
\begin{array}{c}
\boldsymbol{\lambda}_1\\
\boldsymbol{\lambda}_2
\end{array}
\right)} - {\left(
\begin{array}{c}
\boldsymbol{\lambda}_1\\
\boldsymbol{\lambda}_2
\end{array}
\right)^\top \textbf{1}_{2n}}, \nonumber
\end{eqnarray}
where
\begin{equation}
A\triangleq YK_1M_1^{-1}K_1Y, \quad
 B\triangleq
YK_1M_1^{-1}\bar{K}_1\tilde{K}_2^{-1}K_2Y, \quad D\triangleq
YK_2M_2^{-1}K_2Y , \label{eqn07031}
\end{equation}
$\textbf{1}_{2n}=(1,\ldots,1_{(2n)})^\top$, and we have used the
fact that
\begin{equation}
YK_1M_1^{-1}\bar{K}_1\tilde{K}_2^{-1}K_2Y
=[YK_2M_2^{-1}\bar{K}_2\tilde{K}_1^{-1}K_1Y]^\top. \nonumber
\end{equation}
Because of the convexity of function $-{g}$, we affirm that matrix
$\left(
\begin{array}{cc}
A&\; B\\
B^\top&\; D
\end{array}
\right)$ is positive semidefinite.

Hence, the optimization problem in (\ref{eqn05171}) can be rewritten
as
\begin{eqnarray}
\min_{\boldsymbol{\lambda}_1, \boldsymbol{\lambda}_2}
&&\frac{1}{2}(\boldsymbol{\lambda}_1^\top
\;\boldsymbol{\lambda}_2^\top) {\left(
\begin{array}{cc}
A&\; B\\
B^\top&\; D
\end{array}
\right)} {\left(
\begin{array}{c}
\boldsymbol{\lambda}_1\\
\boldsymbol{\lambda}_2
\end{array}
\right)} - {\left(
\begin{array}{c}
\boldsymbol{\lambda}_1\\
\boldsymbol{\lambda}_2
\end{array}
\right)^\top} \mathbf{1}_{2n} \nonumber\\
\mbox{s.t.} && \left\{
\begin{array}{l}
0\preceq \boldsymbol{\lambda}_{1}\preceq C_1 \textbf{1}, \\
0\preceq \boldsymbol{\lambda}_{2}\preceq C_1 \textbf{1}.
\end{array}
\right. \label{eqn07032}
\end{eqnarray}

After solving this problem, we can then obtain classifier parameters
$\boldsymbol{\alpha}_{1}$ and $\boldsymbol{\alpha}_{2}$ using
(\ref{eqnalpha1}) and (\ref{eqnalpha2}), which are finally used by
(\ref{eqn1310111}).

\section{Dual Optimization Derivation for SMvSVMs}
\label{appSMvSVM}

To optimize (\ref{eqnOptProblemS}),  we first derive the Lagrange
dual function following the same line of optimization derivations
for MvSVMs. Although here some of the derivations are similar to
those for MvSVMs, for completeness we include them.

Let $\lambda_1^i, \lambda_2^i, \nu_1^i, \nu_2^i \geq 0$ be the
Lagrange multipliers associated with the inequality constraints of
problem (\ref{eqnOptProblemS}). The Lagrangian
${L}(\boldsymbol{\alpha}_1,\boldsymbol{\alpha}_2,
\boldsymbol{\xi}_1, \boldsymbol{\xi}_2, \boldsymbol{\lambda}_1,
\boldsymbol{\lambda}_2, \boldsymbol{\nu}_1, \boldsymbol{\nu}_2)$ can
be formulated as
\begin{eqnarray}
{L}=&&\tilde{F}_0-\sum_{i=1}^{n}\Big[\lambda_1^i\Big(y_{i}(\sum_{j=1}^{n+u}\alpha_{1}^{j}k_1(x_{j},x_{i}))-
1+\xi_1^{i}\Big)+\nonumber\\
&&\lambda_2^i\Big(y_{i}(\sum_{j=1}^{n+u}\alpha_{2}^{j}k_2(x_{j},x_{i}))-
1+\xi_2^{i}\Big) + \nu_1^i\xi_1^{i}+ \nu_2^i\xi_2^{i}\Big].
\nonumber
\end{eqnarray}

To obtain the Lagrangian dual function, ${L}$ will be minimized with
respect to the primal variables
$\boldsymbol{\alpha}_1,\boldsymbol{\alpha}_2, \boldsymbol{\xi}_1,
\boldsymbol{\xi}_2$. To eliminate these variables, setting the
corresponding partial derivatives to $0$ results in the following
conditions
\begin{eqnarray}
\label{eqn05151S} && (K_1 +2C_2 K_1K_1) \boldsymbol{\alpha}_1-2C_2
K_1 K_2 \boldsymbol{\alpha}_2  =\Lambda_1,\\
\label{eqn05153S} &&(K_2 +2C_2 K_2K_2) \boldsymbol{\alpha}_2-2C_2
K_2 K_1 \boldsymbol{\alpha}_1 =\Lambda_2,\\
\label{eqn05166S}
&&\lambda_1^i + \nu_1^i =C_1, \\
\label{eqn05167S} &&\lambda_2^i + \nu_2^i =C_1,
\end{eqnarray}
where we have defined
\begin{eqnarray}
\Lambda_1      &\triangleq& \sum_{i=1}^{n}\lambda_1^i y_{i}K_1(:,i),\nonumber\\
\Lambda_2      &\triangleq& \sum_{i=1}^{n}\lambda_2^i
y_{i}K_2(:,i),\nonumber
\end{eqnarray}
with $K_1(:,i)$ and $K_2(:,i)$ being the $i$th columns of the
corresponding Gram matrices.

Substituting~(\ref{eqn05151S})$\sim$(\ref{eqn05167S}) into ${L}$
results in the Lagrangian dual function ${g}(\boldsymbol{\lambda}_1,
\boldsymbol{\lambda}_2, \boldsymbol{\nu}_1, \boldsymbol{\nu}_2)$
\begin{eqnarray}
{g}&=&\frac{1}{2}({\boldsymbol{\alpha}}_1^{\top}K_1\boldsymbol{\alpha}_1+
{\boldsymbol{\alpha}}_2^{\top}K_2\boldsymbol{\alpha}_2)+C_2
(\boldsymbol{\alpha}_1^\top K_1 K_1 \boldsymbol{\alpha}_1 -
2\boldsymbol{\alpha}_1^\top K_1 K_2 \boldsymbol{\alpha}_2 +
\nonumber\\
&&\boldsymbol{\alpha}_2^\top K_2 K_2 \boldsymbol{\alpha}_2)-
\boldsymbol{\alpha}_1^{\top}\Lambda_1-\boldsymbol{\alpha}_2^{\top}\Lambda_2+
\sum_{i=1}^{n}(\lambda_1^i+\lambda_2^i)\nonumber\\
&=&\frac{1}{2}\boldsymbol{\alpha}_1^\top \Lambda_1+
\frac{1}{2}\boldsymbol{\alpha}_2^\top\Lambda_2 -
\boldsymbol{\alpha}_1^{\top}\Lambda_1-\boldsymbol{\alpha}_2^{\top}\Lambda_2+
\sum_{i=1}^{n}(\lambda_1^i+\lambda_2^i) \nonumber\\
&=&-\frac{1}{2}\boldsymbol{\alpha}_1^\top \Lambda_1 -
\frac{1}{2}\boldsymbol{\alpha}_2^\top
\Lambda_2+\sum_{i=1}^{n}(\lambda_1^i+\lambda_2^i). \label{eqn05165S}
\end{eqnarray}

Define
\begin{eqnarray}
&& \tilde{K}_1 = K_1 +2C_2 K_1K_1, \quad \bar{K}_1=2C_2
K_1 K_2, \nonumber\\
&& \tilde{K}_2 = K_2 +2C_2 K_2K_2, \quad \bar{K}_2=2C_2 K_2 K_1.
\nonumber
\end{eqnarray}
Then,  (\ref{eqn05151S}) and (\ref{eqn05153S}) become
\begin{eqnarray}
\label{eqn1310081S} && \tilde{K}_1 \boldsymbol{\alpha}_1-
\bar{K}_1  \boldsymbol{\alpha}_2 =\Lambda_1,\\
\label{eqn1310082S} && \tilde{K}_2 \boldsymbol{\alpha}_2- \bar{K}_2
\boldsymbol{\alpha}_1 =\Lambda_2.
\end{eqnarray}

From (\ref{eqn1310081S}) and (\ref{eqn1310082S}), we have
\begin{eqnarray}
&&(\tilde{K}_1-\bar{K}_1\tilde{K}_2^{-1}\bar{K}_2)\boldsymbol{\alpha}_1=\bar{K}_1\tilde{K}_2^{-1}
\Lambda_2 +\Lambda_1 \nonumber\\
&&(\tilde{K}_2-\bar{K}_2\tilde{K}_1^{-1}\bar{K}_1)\boldsymbol{\alpha}_2=\bar{K}_2\tilde{K}_1^{-1}
\Lambda_1 +\Lambda_2. \nonumber
\end{eqnarray}
Define $M_1\triangleq
\tilde{K}_1-\bar{K}_1\tilde{K}_2^{-1}\bar{K}_2$ and $M_2\triangleq
\tilde{K}_2-\bar{K}_2\tilde{K}_1^{-1}\bar{K}_1$. It is clear that
\begin{eqnarray}
&&\boldsymbol{\alpha}_1=M_1^{-1}\Big[\bar{K}_1\tilde{K}_2^{-1}
\Lambda_2 +\Lambda_1\Big], \label{eqnalpha1S} \\
&&\boldsymbol{\alpha}_2=M_2^{-1}\Big[\bar{K}_2\tilde{K}_1^{-1}
\Lambda_1 +\Lambda_2\Big]. \label{eqnalpha2S}
\end{eqnarray}

With $\boldsymbol{\alpha}_1$ and $\boldsymbol{\alpha}_2$ substituted
into~(\ref{eqn05165S}), the Lagrange dual function
${g}(\boldsymbol{\lambda}_1, \boldsymbol{\lambda}_2,
\boldsymbol{\nu}_1, \boldsymbol{\nu}_2)$ is then
\begin{eqnarray}
{g}&=&\inf_{\boldsymbol{\alpha}_1,\boldsymbol{\alpha}_2,
\boldsymbol{\xi}_1, \boldsymbol{\xi}_2}
{L}=-\frac{1}{2}\boldsymbol{\alpha}_1^\top \Lambda_1-
\frac{1}{2}\boldsymbol{\alpha}_2^\top \Lambda_2+\sum_{i=1}^{n}(\lambda_1^i+\lambda_2^i)\nonumber\\
&=&-\frac{1}{2}\Lambda_1^{\top} M_1^{-1}
\Big[\bar{K}_1\tilde{K}_2^{-1} \Lambda_2 +
\Lambda_1\Big]-\frac{1}{2}\Lambda_2^{\top}M_2^{-1}
\Big[\bar{K}_2\tilde{K}_1^{-1} \Lambda_1 + \Lambda_2\Big]+
\sum_{i=1}^{n}(\lambda_1^i+\lambda_2^i). \nonumber
\end{eqnarray}

The Lagrange dual problem is given by
\begin{eqnarray}
\max_{\boldsymbol{\lambda}_1, \boldsymbol{\lambda}_2} &&\; {g}\nonumber\\
\mbox{s.t.} && \left\{
\begin{array}{ll}
0\leq \lambda_{1}^{i}\leq C_1, \quad&i=1,\ldots,n\\
0\leq \lambda_{2}^{i}\leq C_1, \quad&i=1,\ldots,n .
\end{array}
\right. 
\end{eqnarray}

As Lagrange dual functions are concave, below we formulate the
Lagrange dual problem as a convex optimization problem
\begin{eqnarray}
\label{eqn05171S}
\min_{\boldsymbol{\lambda}_1, \boldsymbol{\lambda}_2} &&-{g}\nonumber\\
\mbox{s.t.} && \left\{
\begin{array}{ll}
0\leq \lambda_{1}^{i}\leq C_1, \quad&i=1,\ldots,n\\
0\leq \lambda_{2}^{i}\leq C_1, \quad&i=1,\ldots,n .
\end{array}
\right. 
\end{eqnarray}

Define matrix $Y\triangleq \mbox{diag}(y_{1},\ldots,y_{n})$. Then,
$\Lambda_1=K_{n1} Y\boldsymbol{\lambda}_1$ and $\Lambda_2=K_{n2}
Y\boldsymbol{\lambda}_2$ with $K_{n1}=K_1(:,1:n)$,
$K_{n2}=K_2(:,1:n)$,
$\boldsymbol{\lambda}_1=(\lambda_{1}^{1},...,\lambda_{1}^{n})^\top$,
and
$\boldsymbol{\lambda}_2=(\lambda_{2}^{1},...,\lambda_{2}^{n})^\top$.
It is clear that $\tilde{K}_1$ and $\tilde{K}_2$ are symmetric
matrices, and $\bar{K}_1=\bar{K}_2^\top$. Therefore, it follows that
matrices $M_1$ and $M_2$ are also symmetric.

We have
\begin{eqnarray}
-{g}&=&\frac{1}{2}\Lambda_1^{\top} M_1^{-1}
\Big[\bar{K}_1\tilde{K}_2^{-1} \Lambda_2
+\Lambda_1\Big]+\frac{1}{2}\Lambda_2^{\top}M_2^{-1}
\Big[\bar{K}_2\tilde{K}_1^{-1}\Lambda_1 + \Lambda_2\Big]-
\sum_{i=1}^{n}(\lambda_1^i+\lambda_2^i) \nonumber\\
&=&\frac{1}{2}\Big\{\bm\lambda_1^\top [YK_{n1}^\top
M_1^{-1}K_{n1}Y]\bm\lambda_1 +\bm\lambda_1^\top
[YK_{n1}^\top M_1^{-1}\bar{K}_1\tilde{K}_2^{-1}K_{n2}Y]\bm\lambda_2 + \nonumber\\
&& \bm\lambda_2^\top [YK_{n2}^\top
M_2^{-1}\bar{K}_2\tilde{K}_1^{-1}K_{n1}Y]\bm\lambda_1 +
\bm\lambda_2^\top [YK_{n2}^\top M_2^{-1}K_{n2}Y]\bm\lambda_2 \Big\}
-\textbf{1}^\top
(\boldsymbol{\lambda}_1+\boldsymbol{\lambda}_2)\nonumber\\
&=&\frac{1}{2}(\boldsymbol{\lambda}_1^\top
\;\boldsymbol{\lambda}_2^\top) {\left(
\begin{array}{cc}
A&\; B\\
B^\top&\; D
\end{array}
\right)} {\left(
\begin{array}{c}
\boldsymbol{\lambda}_1\\
\boldsymbol{\lambda}_2
\end{array}
\right)} - {\left(
\begin{array}{c}
\boldsymbol{\lambda}_1\\
\boldsymbol{\lambda}_2
\end{array}
\right)^\top \textbf{1}_{2n}}, \nonumber
\end{eqnarray}
where
\begin{equation}
A\triangleq YK_{n1}^\top M_1^{-1}K_{n1}Y, \quad
 B\triangleq
YK_{n1}^\top M_1^{-1}\bar{K}_1\tilde{K}_2^{-1}K_{n2}Y, \quad
D\triangleq YK_{n2}^\top M_2^{-1}K_{n2}Y , \label{eqn07031S}
\end{equation}
$\textbf{1}_{2n}=(1,\ldots,1_{(2n)})^\top$, and we have used the
fact that
\begin{equation}
YK_{n1}^\top M_1^{-1}\bar{K}_1\tilde{K}_2^{-1}K_{n2}Y =[YK_{n2}^\top
M_2^{-1}\bar{K}_2\tilde{K}_1^{-1}K_{n1}Y]^\top. \nonumber
\end{equation}
Because of the convexity of function $-{g}$, we affirm that matrix
$\left(
\begin{array}{cc}
A&\; B\\
B^\top&\; D
\end{array}
\right)$ is positive semidefinite.

Hence, the optimization problem in (\ref{eqn05171S}) can be
rewritten as
\begin{eqnarray}
\min_{\boldsymbol{\lambda}_1, \boldsymbol{\lambda}_2}
&&\frac{1}{2}(\boldsymbol{\lambda}_1^\top
\;\boldsymbol{\lambda}_2^\top) {\left(
\begin{array}{cc}
A&\; B\\
B^\top&\; D
\end{array}
\right)} {\left(
\begin{array}{c}
\boldsymbol{\lambda}_1\\
\boldsymbol{\lambda}_2
\end{array}
\right)} - {\left(
\begin{array}{c}
\boldsymbol{\lambda}_1\\
\boldsymbol{\lambda}_2
\end{array}
\right)^\top} \mathbf{1}_{2n} \nonumber\\
\mbox{s.t.} && \left\{
\begin{array}{l}
0\preceq \boldsymbol{\lambda}_{1}\preceq C_1 \textbf{1}, \\
0\preceq \boldsymbol{\lambda}_{2}\preceq C_1 \textbf{1}.
\end{array}
\right. \label{eqn07032S}
\end{eqnarray}

After solving this problem, we can then obtain classifier parameters
$\boldsymbol{\alpha}_{1}$ and $\boldsymbol{\alpha}_{2}$ using
(\ref{eqnalpha1S}) and (\ref{eqnalpha2S}), which are finally used by
(\ref{eqn1310111S}).

\vskip 0.2in
\bibliography{MvPBbib}
\end{document}